\documentclass{article} 
\usepackage{iclr2027_conference,times}

\usepackage{natbib}
\usepackage{times}
\usepackage{latexsym}
\usepackage[T1]{fontenc}    
\usepackage[utf8]{inputenc} 
\usepackage{graphicx}
\usepackage{inconsolata}    
\usepackage{hyperref}       
\hypersetup{hidelinks, linktoc=all, colorlinks=false} 
\usepackage{xr-hyper}       
\usepackage{xurl}           
\usepackage{booktabs}       
\usepackage{longtable}      
\usepackage{amsfonts}       
\usepackage{amsthm}         
\usepackage{nicefrac}       
\usepackage{microtype}      
\usepackage[table]{xcolor}  
\usepackage{multirow}
\usepackage[most]{tcolorbox}  
\tcbuselibrary{listings}      
\tcbuselibrary{skins}         
\usepackage{footnote}
\usepackage{xspace}         
\usepackage{mfirstuc}       
\usepackage{tabulary}       
\usepackage{wrapfig}        
\usepackage{mathpartir}     
\usepackage{amsmath}        
\usepackage{amssymb}        
\usepackage{bbm}            
\usepackage[font=footnotesize,labelfont=bf]{caption} 
\usepackage[font=footnotesize]{subcaption}     
\usepackage{listings}       
\usepackage{outlines}       
\usepackage{tikz}           
\usetikzlibrary{calc,positioning,fit,tikzmark,shapes.multipart,shapes,arrows.meta,backgrounds, decorations.text, decorations.markings} 
\usepackage{pifont}
\usepackage{titletoc}       
\usepackage[only,llbracket,rrbracket]{stmaryrd}
\usepackage{mathrsfs}
\usepackage{mathtools}      
\usepackage{pgf-pie}        
\usepackage{wheelchart}     
\usepackage{pgfplots}
\pgfplotsset{compat=1.18}
\usepackage{enumitem}       
\usepackage{placeins}       
\usepackage{threeparttable} 
\usepackage{xparse}         
\usepackage{bm}             
\usepackage{siunitx}        
\usepackage{colortbl}       
\usepackage{mathtools}      
\usepackage{adjustbox}      
\usepackage{xfp}      
\usepackage[symbol]{footmisc} 
\usepackage{makecell}       
\usepackage{algorithm}      
\usepackage{algpseudocode}  
\usepackage[noEnd=true,indLines=true,spaceRequire=false]{algpseudocodex} 
\alglanguage{pseudocode}    
\usepackage{bigdelim}       
\usepackage{contour}        
\usepackage{dsfont}         
\usepackage[noblocks]{authblk}        

\definecolor{eqnBoxFill}{RGB}{253,250,244}
\newtcolorbox{eqnBox}{
  shadow=false,
  boxrule=0.2pt,
  colframe=black!60,
  colback=eqnBoxFill,
  arc=2pt,
  left=1pt,
  right=1pt,
  top=0pt,
  bottom=4pt,
  halign=center,
  valign=center,
  before skip=4pt,
  after skip=4pt,
}
\newtheoremstyle{paper-theorem}
    {4pt}                         
    {4pt}                         
    {\itshape}                    
    {}                            
    {\bfseries}                   
    {}                            
    {0.5em}                       
    {\thmname{#1}\thmnumber{ #2.}\thmnote{ \normalfont[#3]}}
\newtheoremstyle{paper-definition}
    {4pt}                         
    {4pt}                         
    {\itshape}                 
    {}                            
    {\bfseries}                   
    {}                            
    {0.5em}                       
    {\thmname{#1}\thmnumber{ #2.}\thmnote{ \normalfont[#3]}}

\theoremstyle{paper-theorem}
\newtheorem{theorem}{Theorem}[section]
\newtheorem{proposition}[theorem]{Proposition}
\newtheorem{lemma}[theorem]{Lemma}
\newtheorem{corollary}[theorem]{Corollary}

\theoremstyle{paper-definition}
\newtheorem{definition}[theorem]{Definition}

\definecolor{theoremBoxFill}{RGB}{247,247,247}
\tcbset{
    theorem box/.style={
        enhanced,
        breakable,
        colback=theoremBoxFill,
        colframe=theoremBoxFill,
        boxrule=0pt,
        arc=2pt,
        left=3pt,
        right=3pt,
        top=4pt,
        bottom=4pt,
        before skip=6pt,
        after skip=6pt
    }
}

\tcolorboxenvironment{theorem}{theorem box}
\tcolorboxenvironment{proposition}{theorem box}
\tcolorboxenvironment{lemma}{theorem box}
\tcolorboxenvironment{corollary}{theorem box}
\tcolorboxenvironment{definition}{theorem box}
\tcolorboxenvironment{notation}{theorem box}
\tcolorboxenvironment{remark}{theorem box}

\newcommand{\DefMacro}[2]{\expandafter\newcommand\csname rmk-#1\endcsname{#2}}
\newcommand{\UseMacro}[1]{\csname rmk-#1\endcsname}

\newcommand{\tool}{\textsc{Twister}\xspace}%
\newcommand{\Title}{Twist, Don't Tilt: Trajectory-Exact Constra-\newline ined Decoding for Masked Diffusion Models}

\definecolor{lightred}{RGB}{165,28,48}
\definecolor{lightpurple}{RGB}{101,1,107}
\definecolor{lightblue}{RGB}{0,84,147}
\definecolor{lightgreen}{RGB}{0,120,90}

\newcommand{\dfa}{DFA\xspace}

\newcommand{\llm}{LLM\xspace}
\newcommand{\llms}{LLMs\xspace}
\newcommand{\mdlm}{MDLM\xspace}
\newcommand{\mdlms}{MDLMs\xspace}

\newcommand{\crf}{CRF\xspace}

\newcommand{\smc}{SMC\xspace}

\newcommand{\ffbs}{FFBS\xspace}
\newcommand{\ffbsKernel}{step-exact\xspace}
\newcommand{\ffbsKernelProp}{Step-Exact\xspace}
\newcommand{\fkc}{Feynman--Kac\xspace}
\newcommand{\doob}{Doob\xspace}

\newcommand{\Code}[1]{{\ifmmode{\mathtt{#1}}\else$\mathtt{#1}$\fi}}

\newcommand{\eg}{e.g.,\xspace}
\newcommand{\ie}{i.e.,\xspace}
\newcommand{\etc}{etc.\xspace}
\newcommand{\MyPara}[1]{\vspace{0.2mm}\noindent\textbf{#1}.\hspace{1pt}}
\newcommand{\TableFontSize}{\footnotesize}
\newcommand{\AlgorithmFontSize}{\footnotesize}

\DeclareRobustCommand{\circledCaption}[1]{\tikz[baseline=(char.base)]{\node[shape=circle, draw, minimum size=4pt, inner sep=1pt, fill=black, text=white] (char) {\footnotesize #1};}}
\DeclareRobustCommand{\circledCaptionTiny}[1]{\tikz[baseline=(char.base)]{\node[shape=circle, draw, minimum size=1.5pt, inner sep=0.8pt, fill=black, text=white] (char) {\scriptsize #1};}}

\newcommand{\dingo}{\text{DINGO}\xspace}

\newcommand{\outlines}{Outlines\xspace}

\newcommand{\fAlpha}{\Sigma}
\newcommand{\fAlphaStar}{\Sigma^{*}}

\newcommand{\fWordW}[1]{\bm{w_{#1}}}
\newcommand{\fMask}{\bot}

\newcommand{\fLangSet}{\mathcal{C}}

\newcommand{\fBigO}[1]{\mathcal{O}(#1)}

\newcommand{\fDiffusionStep}{t}
\newcommand{\fDiffusionStepMax}{\mathit{T}}
\newcommand{\fPosition}{i}
\newcommand{\fPositionMax}{\mathit{N}}
\newcommand{\fMaskFunc}[1]{\mathcal{M}(#1)}
\newcommand{\fMaskDiffFunc}[2]{\mathcal{R}(#1,#2)}
\newcommand{\fVocab}{\mathcal{V}}

\newcommand{\fDiffStateSpace}{\mathcal{X}}
\newcommand{\fDiffStateRV}[1]{\textbf{X}_{#1}}
\newcommand{\fDiffState}[1]{\textbf{x}_{#1}}
\newcommand{\fDiffStateFrozen}{\textbf{x}^{\ast}}
\newcommand{\fDiffCleanStateRV}[1]{\textbf{Y}_{#1}}
\newcommand{\fDiffCleanState}[1]{\textbf{y}_{#1}}
\newcommand{\fAutomtnStateChainRV}{\textbf{Q}}
\newcommand{\fAutomtnStateChain}{\textbf{q}}
\newcommand{\fAutomtnStateChainToken}[1]{\fAutomtnStateChain(#1)}
\newcommand{\fDiffStateToken}[3]{\fDiffState{#1}{#3}(#2)}
\newcommand{\fDiffCleanStateToken}[3]{\fDiffCleanState{#1}{#3}(#2)}
\newcommand{\fDiffCleanToken}[1]{v_{#1}}
\newcommand{\fDiffKernelSymbol}{\kappa}
\newcommand{\fDiffKernel}[3]{\fDiffKernelSymbol_{#1}(#2\mid#3)}
\newcommand{\fDiffFrozenKernel}[4]{\fDiffKernelSymbol^{(#4)}_{#1}(#2\mid#3)}

\newcommand{\fProb}{\mathit{P}}

\newcommand{\fDiffCategorical}[3]{\mathrm{Cat}_{#1}(#2\mid#3)}
\newcommand{\fDenoiserPosterior}[2]{\mu_{}(#1\mid#2)}

\newcommand{\fSimplex}[1]{\Delta^{#1}}
\newcommand{\fRemaskingSetRV}[1]{\textbf{R}_{#1}}
\newcommand{\fRemaskingSet}[1]{r_{#1}}
\newcommand{\fDiffScheduler}[3]{s_{#1}(#2\mid#3)}
\newcommand{\fTargetProb}{\pi}
\newcommand{\fTargetProbUNorm}{\widetilde{\fTargetProb}}

\newcommand{\fTargetProbUNormFunc}[2]{\fTargetProbUNorm_{#1}(#2)}

\newcommand{\fPartitionFuncClamp}[3]{\mathit{Z}_{#1}(#2;#3)}

\newcommand{\fFeynmanKacKernelSymbol}{\mathit{M}}
\newcommand{\fFeynmanKacPotentialSymbol}{\mathit{G}}
\newcommand{\fFeynmanKacKernel}[3]{\fFeynmanKacKernelSymbol_{#1}(#2\mid#3)}
\newcommand{\fFeynmanKacPotential}[3]{\fFeynmanKacPotentialSymbol_{#1}(#2,#3)}
\newcommand{\fFeynmanKacTwistFunction}[1]{\eta_{#1}}
\newcommand{\fSMCParticleIndex}{k}
\newcommand{\fSMCParticleIndexMax}{\mathit{K}}
\newcommand{\fSMCParticleWeight}[1]{\mathit{W}_{#1}}
\newcommand{\fSMCParticleWeightUNorm}[1]{\widetilde{\mathit{W}}_{#1}}
\newcommand{\fDoobHarmonic}[1]{h_{#1}}
\newcommand{\fFrozenDoobHarmonic}[3]{\widehat{h}_{#1}^{(#2)}(#3)}
\newcommand{\fDoobKernelSymbol}{\fDiffKernelSymbol^{\scalebox{0.6}{$\bigstar$}}}
\newcommand{\fDoobKernel}[3]{\fDoobKernelSymbol_{#1}(#2\mid#3)}

\newcommand{\fDoobNormaliser}{Z^{\scalebox{0.6}{$\bigstar$}}}

\newcommand{\fFFBSKernelSymbol}{\fDiffKernelSymbol^{\scalebox{0.56}{\,\bm{$\fAutomtn$}}}}
\newcommand{\fFFBSKernel}[3]{\fFFBSKernelSymbol_{#1}(#2\mid#3)}

\newcommand{\fAutomtn}{\mathcal{A}}
\newcommand{\fAutomtnTuple}{\fTuple{\fAlpha, \fStates, \fState{0}, \fStatesAcc, \fTrans}}              
\newcommand{\fStates}{\mathcal{Q}}       

\newcommand{\fState}[1]{q_{\scalebox{0.7}{$#1$}}} 
        
\newcommand{\fStatesAcc}{\mathcal{F}}                             
\newcommand{\fTrans}{\delta}                      
\newcommand{\fTransClosure}{\fTrans^{*}}
\newcommand{\fTransFunc}[2]{\fTrans(#1, #2)}
\newcommand{\fTransClosureFunc}[2]{\fTransClosure(#1, #2)}
\newcommand{\fAutomtnConstraint}[3]{\psi_{#1}(#2,#3)}
\newcommand{\fTuple}[1]{\langle #1 \rangle}
\newcommand{\fSize}[1]{\lvert #1 \rvert}
\newcommand{\fSet}[1]{\{#1\}}

\newcommand{\fIndicatorFunc}[1]{\llbracket #1 \rrbracket}

\newcommand{\fPositionEquiv}[3]{\llbracket #1 \equiv_{#3} #2 \rrbracket}
\newcommand{\fNativePathLaw}[3]{p_{#1}^{\scalebox{0.5}{\textsf{mdlm}}}(#2\mid#3)}
\newcommand{\fDoobPathLaw}[3]{p_{#1}^{\scalebox{0.6}{$\bigstar$}}(#2\mid#3;\fLangSet)}
\newcommand{\fFFBSPathLaw}[3]{p_{#1}^{\scalebox{0.56}{\,\bm{$\fAutomtn$}}}(#2\mid#3;\fLangSet)}
\newcommand{\fProbMeasure}[2]{\mathbb{P}_{#2}(#1)}

\newcommand{\llada}{\textsc{LLaDA-8B-Base}\xspace}
\newcommand{\dream}{\textsc{Dream-7B-Base}\xspace}
\newcommand{\dreamCoder}{\textsc{DreamCoder}\xspace}
\newcommand{\lladaInstruct}{\textsc{LLaDA-8B-Inst}\xspace}
\newcommand{\dreamInstruct}{\textsc{Dream-7B-Inst}\xspace}
\newcommand{\dreamCoderInstruct}{\textsc{DreamCoder-7B-Inst}\xspace}
\newcommand{\lladaInst}{\textsc{LLaDA-8B-Inst}\xspace}
\newcommand{\dreamInst}{\textsc{Dream-7B-Inst}\xspace}
\newcommand{\dreamCoderInst}{\textsc{DreamCoder-7B-Inst}\xspace}

\newcommand{\kuleshov}{\textsc{OWT-130M}\xspace}
\newcommand{\lladaText}{LLaDA\xspace}
\newcommand{\dreamText}{\textsc{DREAM}\xspace}
\newcommand{\dreamCoderText}{\textsc{DreamCoder-7B-Base}\xspace}

\DefMacro{table-json-mode-table}{table*}
\DefMacro{table-json-mode-Model}{\textbf{Model}}
\DefMacro{table-json-mode-Method}{\textbf{Method}}
\DefMacro{table-json-mode-Parse}{\textbf{Parse Valid (\%)}}
\DefMacro{table-json-mode-Schema}{\textbf{Schema Valid (\%)}}
\DefMacro{table-json-mode-Time}{\textbf{Time (s)}}
\DefMacro{table-json-mode-Dream-v0-Base-7B-dingo-n}{100}
\DefMacro{table-json-mode-Dream-v0-Base-7B-dingo-parse}{100}
\DefMacro{table-json-mode-Dream-v0-Base-7B-dingo-schema}{98}
\DefMacro{table-json-mode-Dream-v0-Base-7B-dingo-time-mean-Dream-v0-Base-7B-dingo-time-std}{$22 \pm 43$}
\DefMacro{table-json-mode-Dream-v0-Base-7B-cedar-marginal-n}{100}
\DefMacro{table-json-mode-Dream-v0-Base-7B-cedar-marginal-parse}{100}
\DefMacro{table-json-mode-Dream-v0-Base-7B-cedar-marginal-schema}{98}
\DefMacro{table-json-mode-Dream-v0-Base-7B-cedar-marginal-time-mean-Dream-v0-Base-7B-cedar-marginal-time-std}{$22 \pm 43$}
\DefMacro{table-json-mode-Dream-v0-Base-7B-cedar-smc1-n}{100}
\DefMacro{table-json-mode-Dream-v0-Base-7B-cedar-smc1-parse}{100}
\DefMacro{table-json-mode-Dream-v0-Base-7B-cedar-smc1-schema}{98}
\DefMacro{table-json-mode-Dream-v0-Base-7B-cedar-smc1-time-mean-Dream-v0-Base-7B-cedar-smc1-time-std}{$32 \pm 65$}
\DefMacro{table-json-mode-Dream-v0-Base-7B-cedar-smc4-n}{100}
\DefMacro{table-json-mode-Dream-v0-Base-7B-cedar-smc4-parse}{100}
\DefMacro{table-json-mode-Dream-v0-Base-7B-cedar-smc4-schema}{98}
\DefMacro{table-json-mode-Dream-v0-Base-7B-cedar-smc4-time-mean-Dream-v0-Base-7B-cedar-smc4-time-std}{$42 \pm 86$}
\DefMacro{table-json-mode-Dream-v0-Instruct-7B-dingo-n}{100}
\DefMacro{table-json-mode-Dream-v0-Instruct-7B-dingo-parse}{100}
\DefMacro{table-json-mode-Dream-v0-Instruct-7B-dingo-schema}{99}
\DefMacro{table-json-mode-Dream-v0-Instruct-7B-dingo-time-mean-Dream-v0-Instruct-7B-dingo-time-std}{$21 \pm 45$}
\DefMacro{table-json-mode-Dream-v0-Instruct-7B-cedar-marginal-n}{100}
\DefMacro{table-json-mode-Dream-v0-Instruct-7B-cedar-marginal-parse}{100}
\DefMacro{table-json-mode-Dream-v0-Instruct-7B-cedar-marginal-schema}{99}
\DefMacro{table-json-mode-Dream-v0-Instruct-7B-cedar-marginal-time-mean-Dream-v0-Instruct-7B-cedar-marginal-time-std}{$21 \pm 46$}
\DefMacro{table-json-mode-Dream-v0-Instruct-7B-cedar-smc1-n}{100}
\DefMacro{table-json-mode-Dream-v0-Instruct-7B-cedar-smc1-parse}{100}
\DefMacro{table-json-mode-Dream-v0-Instruct-7B-cedar-smc1-schema}{99}
\DefMacro{table-json-mode-Dream-v0-Instruct-7B-cedar-smc1-time-mean-Dream-v0-Instruct-7B-cedar-smc1-time-std}{$33 \pm 72$}
\DefMacro{table-json-mode-Dream-v0-Instruct-7B-cedar-smc4-n}{100}
\DefMacro{table-json-mode-Dream-v0-Instruct-7B-cedar-smc4-parse}{100}
\DefMacro{table-json-mode-Dream-v0-Instruct-7B-cedar-smc4-schema}{99}
\DefMacro{table-json-mode-Dream-v0-Instruct-7B-cedar-smc4-time-mean-Dream-v0-Instruct-7B-cedar-smc4-time-std}{$42 \pm 87$}
\DefMacro{table-json-mode-Dream-Coder-v0-Base-7B-dingo-n}{100}
\DefMacro{table-json-mode-Dream-Coder-v0-Base-7B-dingo-parse}{100}
\DefMacro{table-json-mode-Dream-Coder-v0-Base-7B-dingo-schema}{98}
\DefMacro{table-json-mode-Dream-Coder-v0-Base-7B-dingo-time-mean-Dream-Coder-v0-Base-7B-dingo-time-std}{$21 \pm 38$}
\DefMacro{table-json-mode-Dream-Coder-v0-Base-7B-cedar-marginal-n}{100}
\DefMacro{table-json-mode-Dream-Coder-v0-Base-7B-cedar-marginal-parse}{100}
\DefMacro{table-json-mode-Dream-Coder-v0-Base-7B-cedar-marginal-schema}{98}
\DefMacro{table-json-mode-Dream-Coder-v0-Base-7B-cedar-marginal-time-mean-Dream-Coder-v0-Base-7B-cedar-marginal-time-std}{$21 \pm 38$}
\DefMacro{table-json-mode-Dream-Coder-v0-Base-7B-cedar-smc1-n}{100}
\DefMacro{table-json-mode-Dream-Coder-v0-Base-7B-cedar-smc1-parse}{100}
\DefMacro{table-json-mode-Dream-Coder-v0-Base-7B-cedar-smc1-schema}{98}
\DefMacro{table-json-mode-Dream-Coder-v0-Base-7B-cedar-smc1-time-mean-Dream-Coder-v0-Base-7B-cedar-smc1-time-std}{$31 \pm 57$}
\DefMacro{table-json-mode-Dream-Coder-v0-Base-7B-cedar-smc4-n}{100}
\DefMacro{table-json-mode-Dream-Coder-v0-Base-7B-cedar-smc4-parse}{100}
\DefMacro{table-json-mode-Dream-Coder-v0-Base-7B-cedar-smc4-schema}{98}
\DefMacro{table-json-mode-Dream-Coder-v0-Base-7B-cedar-smc4-time-mean-Dream-Coder-v0-Base-7B-cedar-smc4-time-std}{$40 \pm 73$}
\DefMacro{table-json-mode-Dream-Coder-v0-Instruct-7B-dingo-n}{100}
\DefMacro{table-json-mode-Dream-Coder-v0-Instruct-7B-dingo-parse}{100}
\DefMacro{table-json-mode-Dream-Coder-v0-Instruct-7B-dingo-schema}{99}
\DefMacro{table-json-mode-Dream-Coder-v0-Instruct-7B-dingo-time-mean-Dream-Coder-v0-Instruct-7B-dingo-time-std}{$22 \pm 47$}
\DefMacro{table-json-mode-Dream-Coder-v0-Instruct-7B-cedar-marginal-n}{100}
\DefMacro{table-json-mode-Dream-Coder-v0-Instruct-7B-cedar-marginal-parse}{100}
\DefMacro{table-json-mode-Dream-Coder-v0-Instruct-7B-cedar-marginal-schema}{99}
\DefMacro{table-json-mode-Dream-Coder-v0-Instruct-7B-cedar-marginal-time-mean-Dream-Coder-v0-Instruct-7B-cedar-marginal-time-std}{$22 \pm 47$}
\DefMacro{table-json-mode-Dream-Coder-v0-Instruct-7B-cedar-smc1-n}{100}
\DefMacro{table-json-mode-Dream-Coder-v0-Instruct-7B-cedar-smc1-parse}{100}
\DefMacro{table-json-mode-Dream-Coder-v0-Instruct-7B-cedar-smc1-schema}{99}
\DefMacro{table-json-mode-Dream-Coder-v0-Instruct-7B-cedar-smc1-time-mean-Dream-Coder-v0-Instruct-7B-cedar-smc1-time-std}{$34 \pm 76$}
\DefMacro{table-json-mode-Dream-Coder-v0-Instruct-7B-cedar-smc4-n}{100}
\DefMacro{table-json-mode-Dream-Coder-v0-Instruct-7B-cedar-smc4-parse}{100}
\DefMacro{table-json-mode-Dream-Coder-v0-Instruct-7B-cedar-smc4-schema}{98}
\DefMacro{table-json-mode-Dream-Coder-v0-Instruct-7B-cedar-smc4-time-mean-Dream-Coder-v0-Instruct-7B-cedar-smc4-time-std}{$42 \pm 88$}
\DefMacro{table-json-mode-LLaDA-Base-8B-dingo-n}{100}
\DefMacro{table-json-mode-LLaDA-Base-8B-dingo-parse}{100}
\DefMacro{table-json-mode-LLaDA-Base-8B-dingo-schema}{98}
\DefMacro{table-json-mode-LLaDA-Base-8B-dingo-time-mean-LLaDA-Base-8B-dingo-time-std}{$19 \pm 33$}
\DefMacro{table-json-mode-LLaDA-Base-8B-cedar-marginal-n}{100}
\DefMacro{table-json-mode-LLaDA-Base-8B-cedar-marginal-parse}{100}
\DefMacro{table-json-mode-LLaDA-Base-8B-cedar-marginal-schema}{98}
\DefMacro{table-json-mode-LLaDA-Base-8B-cedar-marginal-time-mean-LLaDA-Base-8B-cedar-marginal-time-std}{$19 \pm 33$}
\DefMacro{table-json-mode-LLaDA-Base-8B-cedar-smc1-n}{100}
\DefMacro{table-json-mode-LLaDA-Base-8B-cedar-smc1-parse}{100}
\DefMacro{table-json-mode-LLaDA-Base-8B-cedar-smc1-schema}{98}
\DefMacro{table-json-mode-LLaDA-Base-8B-cedar-smc1-time-mean-LLaDA-Base-8B-cedar-smc1-time-std}{$27 \pm 50$}
\DefMacro{table-json-mode-LLaDA-Base-8B-cedar-smc4-n}{100}
\DefMacro{table-json-mode-LLaDA-Base-8B-cedar-smc4-parse}{100}
\DefMacro{table-json-mode-LLaDA-Base-8B-cedar-smc4-schema}{98}
\DefMacro{table-json-mode-LLaDA-Base-8B-cedar-smc4-time-mean-LLaDA-Base-8B-cedar-smc4-time-std}{$36 \pm 61$}
\DefMacro{table-json-mode-LLaDA-Instruct-8B-dingo-n}{100}
\DefMacro{table-json-mode-LLaDA-Instruct-8B-dingo-parse}{100}
\DefMacro{table-json-mode-LLaDA-Instruct-8B-dingo-schema}{99}
\DefMacro{table-json-mode-LLaDA-Instruct-8B-dingo-time-mean-LLaDA-Instruct-8B-dingo-time-std}{$18 \pm 37$}
\DefMacro{table-json-mode-LLaDA-Instruct-8B-cedar-marginal-n}{100}
\DefMacro{table-json-mode-LLaDA-Instruct-8B-cedar-marginal-parse}{100}
\DefMacro{table-json-mode-LLaDA-Instruct-8B-cedar-marginal-schema}{99}
\DefMacro{table-json-mode-LLaDA-Instruct-8B-cedar-marginal-time-mean-LLaDA-Instruct-8B-cedar-marginal-time-std}{$21 \pm 42$}
\DefMacro{table-json-mode-LLaDA-Instruct-8B-cedar-smc1-n}{100}
\DefMacro{table-json-mode-LLaDA-Instruct-8B-cedar-smc1-parse}{100}
\DefMacro{table-json-mode-LLaDA-Instruct-8B-cedar-smc1-schema}{99}
\DefMacro{table-json-mode-LLaDA-Instruct-8B-cedar-smc1-time-mean-LLaDA-Instruct-8B-cedar-smc1-time-std}{$29 \pm 62$}
\DefMacro{table-json-mode-LLaDA-Instruct-8B-cedar-smc4-n}{100}
\DefMacro{table-json-mode-LLaDA-Instruct-8B-cedar-smc4-parse}{100}
\DefMacro{table-json-mode-LLaDA-Instruct-8B-cedar-smc4-schema}{99}
\DefMacro{table-json-mode-LLaDA-Instruct-8B-cedar-smc4-time-mean-LLaDA-Instruct-8B-cedar-smc4-time-std}{$37 \pm 73$}

\newcommand{\TableJsonModeDatasetCaption}{%
Constraint satisfaction on \jsonModeEval for all decoding methods. The \textit{Parse Valid (\%)} and \textit{Schema Valid (\%)} columns measure whether outputs are valid JSON and satisfy the schema set by the \datapoint; \textit{Time (s)}
measures the average time taken to generate the output.
\label{tab:json-mode-dataset}
}

\newcommand{\AtsmcStage}[3]{%
\tikz[baseline=(stage.base)]{
    \node[
        anchor=base west,
        rounded corners=1.2pt,
        inner xsep=3.2pt,
        inner ysep=1.2pt,
        line width=0.4pt,
        text=#1!70,
        font=\bfseries\footnotesize
    ] (stage) {\strut #2};
}}
\alglanguage{pseudocodex}
\makeatletter
\algnewcommand\algorithmicparallelfor{\textbf{parallel for}}
\algdef{SE}[PARFOR]{ParallelFor}{EndParallelFor}[1]{%
    \algpx@startCodeCommand\algpx@startIndent
    \algorithmicparallelfor\ #1\ \algorithmicdo%
}{%
    \algpx@startEndBlockCommand\algpx@endIndent
    \algorithmicend\ \algorithmicparallelfor%
}
\algtext*{EndParallelFor}
\pretocmd{\ParallelFor}{\algpx@endCodeCommand}{}{}
\pretocmd{\EndParallelFor}{\algpx@endCodeCommand[1]\algpx@endIndent}{}{}
\makeatother

\newcommand{\FigIntroKuleshovBiasCaption}{%
    \kuleshov
}

\newcommand{\FigIntroBiasCaption}{%
    Trajectory bias in constrained decoding of \mdlms captured by the Total Variation Distance (TVD) between the \doob $\fDoobHarmonic{}$-transformed native path law (rejection sampling) and the step-exact automaton-constrained distribution (\cite{dang2026constrained}, $\gnardSmcFour$ and $\gnardSmcEight$) across denoising steps. For "a.*".
}
\newcommand{\FigTrajectoryBiasCaption}{%
A three-token example of trajectory bias.
Let $\fVocab\!=\!\fSet{a,b}$, $\fLangSet\!=\!a^{*}b^{+}$,
$\fDiffusionStepMax\!=\!2$, and
$\fDiffState{0}\!=\!(\fMask,\fMask,b)$, with position $2$ revealed before
position $1$.
Revealing $a$ gives $\fDiffState{1}^{a}\!=\!(\fMask,a,b)$, with $aab$ as its only valid completion under $\fLangSet$, while revealing $b$ gives $\fDiffState{1}^{b}\!=\!(\fMask,b,b)$, whose completions are both valid.
Assuming uniform categoricals at $\fDiffState{0}$ (\circledCaptionTiny{1}),
$\fPartitionFuncClamp{}{\fDiffState{1}^{a}}{\fDiffState{0}}\!=\!0.5$
(\circledCaptionTiny{2}) and
$\fPartitionFuncClamp{}{\fDiffState{1}^{b}}{\fDiffState{0}}\!=\!1$,
so the step-exact decoder chooses $\fDiffState{1}^{a}$ with probability $0.5/(0.5\!+\!1)\!=\!1/3$.
After committing $\fDiffState{1}^{a}$, the denoiser rescores
position $1$ as $\fDiffCategorical{1}{a}{\fDiffState{1}^{a}}\!=\!0.9$
(\circledCaptionTiny{3}), so $\fDiffState{1}^{a}$'s actual valid mass
is $\fPartitionFuncClamp{}{\fDiffState{1}^{a}}{\fDiffState{1}^{a}}\!=\!0.9$; which is what the Doob path law uses to choose $\fDiffState{1}^{a}$, $\fDoobHarmonic{1}(\fDiffState{1}^{a})\!=\!0.9$, \ie with probability
$0.9/(0.9\!+\!1)\!=\!9/19$. But step-exact decoding only renormalizes by $\fPartitionFuncClamp{}{\fDiffState{1}^{a}}{\fDiffState{1}^{a}}$
in the next step, thus never correcting $\fPartitionFuncClamp{}{\fDiffState{1}^{a}}{\fDiffState{0}}$.
Since $aab$ is the only valid completion of $\fDiffState{1}^{a}$,
$\fFFBSPathLaw{1:2}{(\fDiffState{1}^{a},aab)}{\fDiffState{0}}\!=\!1/3$ but
$\fDoobPathLaw{1:2}{(\fDiffState{1}^{a},aab)}{\fDiffState{0}}\!=\!9/19$,
due to the per-step mismatch
$\fPartitionFuncClamp{}{\fDiffState{1}^{a}}{\fDiffState{0}}/
\fPartitionFuncClamp{}{\fDiffState{1}^{a}}{\fDiffState{1}^{a}}\!=\!5/9$
in Eq.~\eqref{eq:clamped-mass-tilt}.
}

\newcommand{\AlgorithmTwisterCaption}{%
\tool for step-exact decoding (adaptive resampling shown in Appendix~\ref{app:twister-full}).
\label{alg:twister}
}
\newcommand{\AlgorithmTwisterFullCaption}{%
\tool for step-exact decoding (with adaptive resampling).
\label{alg:twister-full}
}

\newcommand{\gnard}{\tool}

\newcommand{\gnardSmcOne}{\gnard_{\texttt{smc1}}\xspace}
\newcommand{\gnardSmcFour}{\gnard_{\texttt{smc4}}\xspace}
\newcommand{\gnardSmcEight}{\gnard_{\texttt{smc8}}\xspace}
\newcommand{\OurDingo}{DINGO\ensuremath{\dagger}\xspace}
\newcommand{\OurDangErmon}{\citeauthor{dang2026constrained}\ensuremath{\dagger}\xspace}

\newcommand{\jsonModeEval}{JSON-Mode-Eval\xspace}

\newcommand{\regex}{regular expression\xspace}
\newcommand{\datapoint}{datapoint\xspace}
\newcommand{\datapoints}{datapoints\xspace}

\newcommand{\dfaStatesMin}{34\xspace}
\newcommand{\dfaStatesMean}{188\xspace}
\newcommand{\dfaStatesMax}{625\xspace}

\title{\Title}
\author{%
  \textbf{Aditya Thimmaiah}$^{1}$,\, 
  \textbf{Lara Marinov}$^{1}$,\, 
  \textbf{Jayanth Srinivasa}$^{2}$,\, 
  \authorcr
  \textbf{Haris Vikalo}$^{1}$,\, 
  \textbf{Junyi Jessy Li}$^{1}$,\, 
  \textbf{Milos Gligoric}$^{1}$
  \authorcr\medskip
  \normalfont\texttt{auditt@utexas.edu, marinov@utexas.edu, jasriniv@cisco.com,}\authorcr
  \normalfont\texttt{hvikalo@ece.utexas.edu, jessy@utexas.edu, gligoric@utexas.edu}
  \authorcr\medskip
  \normalfont
  $^{1}$The University of Texas at Austin
  \quad
  $^{2}$Cisco Research
}

\iclrfinalcopy
\date{}

\begin{document}
\maketitle

\begin{abstract}
Constrained decoding for Masked Diffusion Language Models (\mdlms)
aims to ensure that generated outputs satisfy a specified structure or
syntax constraint.
\mdlms generate outputs by repeatedly unmasking masked positions
present in their current state.
Recent strategies for constrained decoding constrain the model's
per-step mean-field posterior (which factorizes over masked positions) by
enforcing the desired constraint with an automaton. The resulting
chain-structured factor graph allows exact constrained sampling via
dynamic programming.
%
However, despite each draw being exact and
constraint-satisfying, we prove that their composition, in general, tilts away from
the model's relative probabilities over valid trajectories, thus leading
to trajectory bias.
We derive an exact expression for this bias as a product of ratios
measuring how valid continuation mass changes when the denoiser is
reconditioned, and characterize when the bias vanishes.
We then correct the bias by introducing \tool, the first
automaton-twisted Sequential Monte Carlo decoder for \mdlms, using the
step-exact decoder as the proposal.
We show that for regular language constraints, the \fkc
correction is exactly computable, with the twists obtained efficiently using
quantities pre-computed for step-exact sampling.
We prove that the resulting \fkc model targets the unbiased
\doob $\fDoobHarmonic{}$-transformed path law conditioned on constraint satisfaction.
\end{abstract}

\begin{figure}[h]
    \centering
    \vspace{-2mm}
    \includegraphics[width=0.95\textwidth]{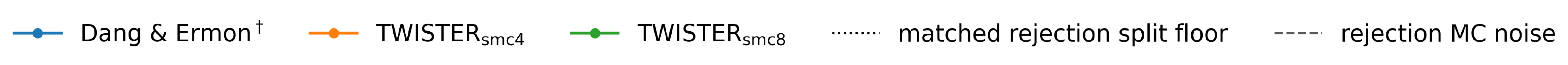}\\
    \vspace{0.5em}
    \begin{subfigure}[b]{0.308\textwidth}
        \includegraphics[width=\textwidth]{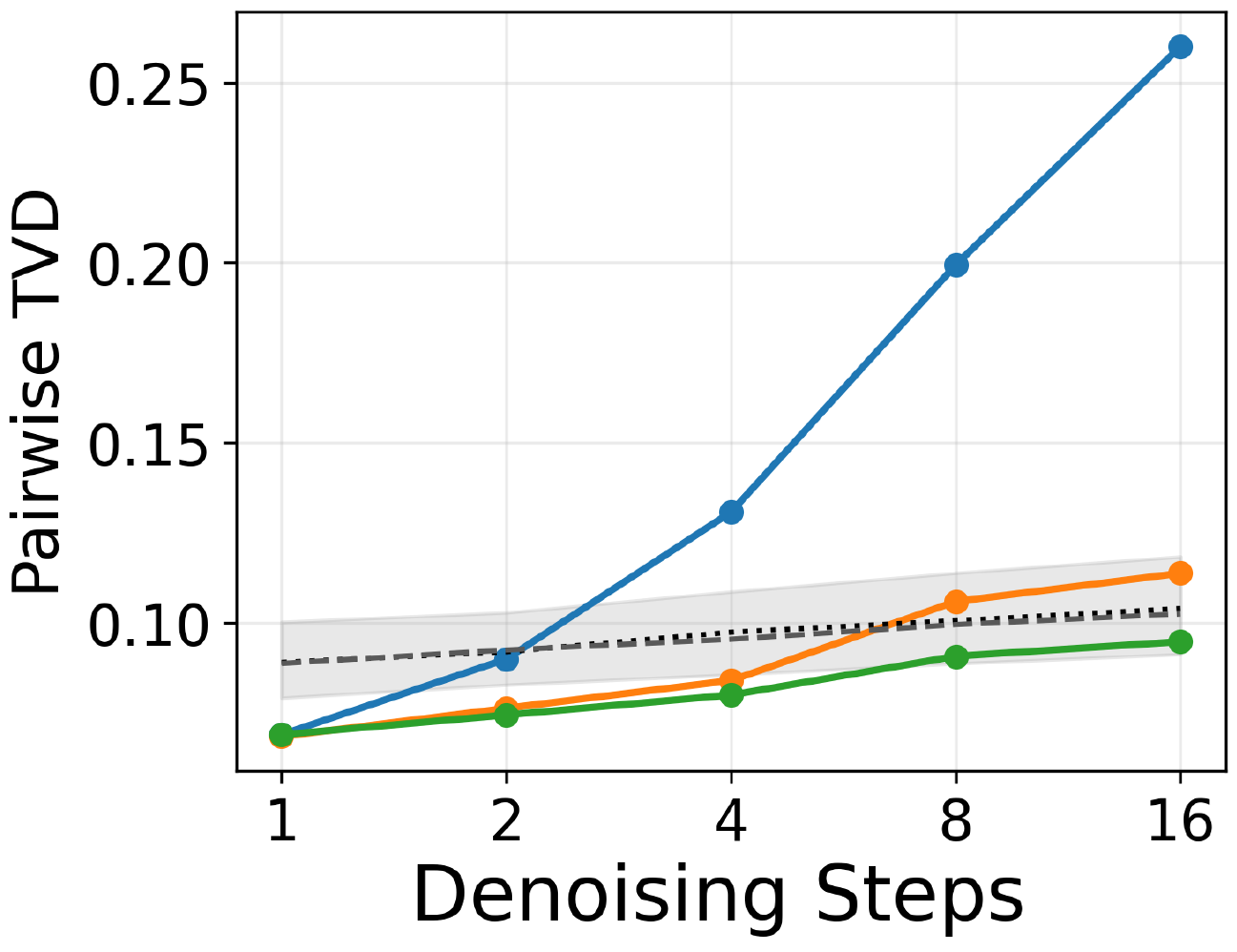}
        \caption{\FigIntroKuleshovBiasCaption}
        \label{fig:intro:a}
    \end{subfigure}
    \hfill
    \begin{subfigure}[b]{0.28\textwidth}
        \includegraphics[width=\textwidth]{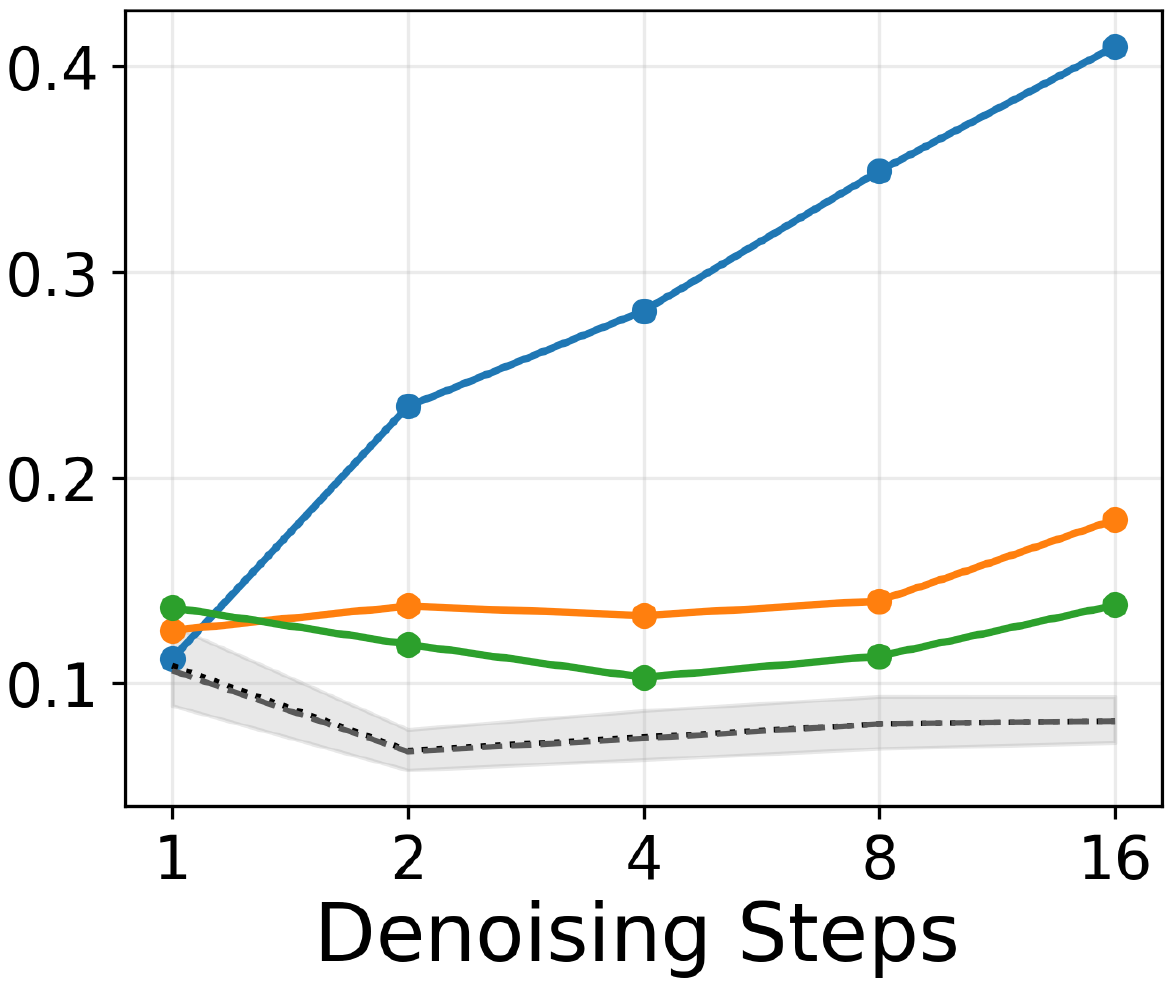}
        \caption{\dreamCoderInstruct}
        \label{fig:intro:b}
    \end{subfigure}
    \hfill
    \begin{subfigure}[b]{0.29\textwidth}
        \includegraphics[width=\textwidth]{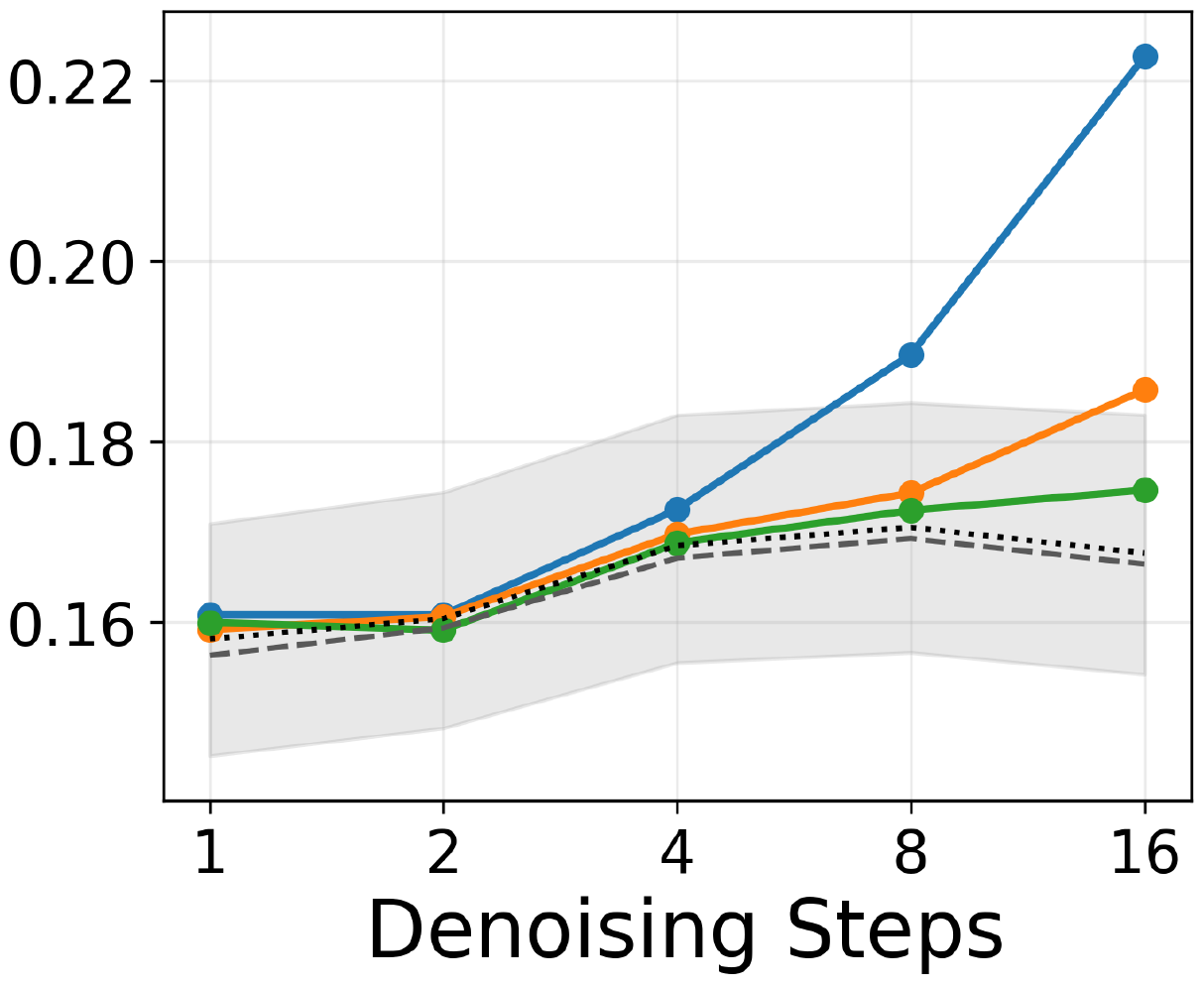}
        \caption{\dream}
        \label{fig:intro:c}
    \end{subfigure}
    \vspace{-2mm}
    \caption{\FigIntroBiasCaption\label{fig:intro}}
    \vspace{-4mm}
\end{figure}

\section{Introduction}
\label{sec:intro}

Masked Diffusion Language Models
(\mdlms)~\citep{sahoosimple24,niellada25,xieDreamCoder25,yeDream25}
are increasingly used for generating text, code, and other data, that may require their outputs to adhere to a specified format structure (syntax), \eg a JSON schema.

Unlike autoregressive Large Language Models (\llms), \mdlms decode by
repeatedly denoising a masked input; they first predict mask-free
token-categoricals over the masked positions, and then
retain only a scheduled subset of these predictions. Therefore, the
generation order depends on the sequence of these scheduled subsets of
positions and is not a left-to-right prefix like in \llms.
This creates a different kind of constrained decoding problem.  
\citet{sureshdingo25} introduced \dingo, an \mdlm algorithm for regular languages, which used maximum a posteriori (MAP) decoding over the chain-structured factorization of the model's automaton-constrained posterior. While \dingo is mode-seeking, \citet{dang2026constrained} proposed an algorithm for instead sampling (exactly) from the automaton-constrained posterior at each step using Forward-Filtering Backward-Sampling (\ffbs)~\citep{carterGibbs1994}.

We show that sampling locally from the automaton-constrained posterior
at each step, although exact, is still biased when composed across
steps, thus deviating from the globally constrained distribution.
%
%
The bias arises because the denoising process only retains a scheduled
subset of the sampled tokens and discards the rest when producing the
next intermediate state.
The resulting \emph{clamped-partition sum} is the total
constraint-valid probability mass obtained by fixing (or clamping) the
retained predictions to the successor state and marginalizing the
discarded ones under the current-state-conditioned \mdlm. At the next
step, the same positions (discarded in the previous step) are
reevaluated under the successor-state-conditioned \mdlm, producing a
new local-partition sum. The ratios of the clamped-partition sum to
the local-partition sum tilt the resulting path law relative to the
globally constrained one, which we model via the Doob
$\fDoobHarmonic{}$-transform~\citep{doobConditional1957} of the native
path law, conditioned on constraint-satisfaction. The tilt manifests
as \emph{trajectory bias}. Figure~\ref{fig:intro} shows the trajectory bias when sampling locally from the automaton-constrained posterior using a step-exact decoder, relative to the native decoder conditioned on constraint satisfaction, for different models.

Unlike locally constrained \llm decoding, which ignores future valid
mass~\citep{parkGrammarAligned24,dangMitigating26}, step-exact \mdlm
decoding accounts for this mass exactly, but under a denoiser frozen
at the current state. When these exact local steps are composed,
reconditioning the denoiser can change the valid mass and bias the
resulting distribution.
We correct the bias by introducing \tool, which uses
a \emph{Feynman--Kac correction} with the local and clamped-partition
sum ratios as the incremental potentials. The correction requires
quantities already computed by \ffbs of the step-exact sampler
proposed by \citet{dang2026constrained}, making it exactly and
efficiently computable.

\section{Background}
\label{sec:background}

In this section, we introduce notation used in the rest of the paper,
as well as background on \mdlms and twisted Sequential Monte Carlo.
We use $[\fPositionMax]$ to denote $\{1,2,\ldots,\fPositionMax\}$, the
set of natural numbers up to $\fPositionMax$; and
\smash{\footnotesize$\fIndicatorFunc{\cdot}$} to denote the standard
indicator function. For two $\fPositionMax$-length sequences
$\fDiffState{}$ and $\fDiffState{}'$, we write
$\fPositionEquiv{\fDiffState{}}{\fDiffState{}'}{\fRemaskingSet{}}$ if
$\fDiffState{}(i) = \fDiffState{}'(i)$ for every $i \in
\fRemaskingSet{}\subseteq[\fPositionMax]$.
For bounded products such as
$\prod\nolimits_{\fDiffusionStep<\fDiffusionStepMax}$, we always
assume $\fDiffusionStep$ starts from $0$.
Unnormalized quantities are generally denoted by a tilde, \eg $\fTargetProbUNorm$.

\subsection{Deterministic Finite Automata and Regular Languages}
\label{sec:background-dfa}

A deterministic finite automaton (\dfa) over a finite alphabet $\fAlpha$
%
is a tuple: 
\(
\fAutomtn\triangleq\fAutomtnTuple,
\)
where $\fStates$ is a finite set of states, $\fState{0}\in\fStates$ is the initial state, 
$\fStatesAcc \subseteq \fStates$ is the set of accepting states, and 
$\fTrans: \fStates \times \fAlpha \rightarrow \fStates$ is the transition function.
A word $\fWordW{}$ is accepted by a \dfa if the \dfa ends in an accepting
state after consuming $\fWordW{}$, \ie
$\fTransClosureFunc{\fState{0}}{\fWordW{}} \!\in\! \fStatesAcc$\footnotemark[3].

\footnotetext[3]{
$\fTransClosure$ is the \dfa's transition function $\fTrans$ extended
from symbols in $\fAlpha$ to words in
$\fAlphaStar$~\citep{farzanSound22}.
}

\begin{definition}[Regular Language]\label{def:regular-language}
    The language $\fLangSet$ recognized by a \dfa is the set of all words accepted by it: 
    \(
    \fLangSet\triangleq\{\fWordW{}\!\in\!\fAlphaStar \!\mid\! \fTransClosureFunc{\fState{0}}{\fWordW{}} \!\in\! \fStatesAcc\}.
    \)
    A language is regular if it is recognized by a \dfa.
\end{definition}




\subsection{Masked Diffusion Language Models}
\label{sec:background-mdlm}
\mdlms generate mask-free (or clean) token 
sequences from partially/fully masked sequences, by repeatedly unmasking the masked positions~\citep{sahoosimple24}.

Let $\fVocab$ be a finite vocabulary of clean tokens, 
$\fMask$ be the mask token, and $\fDiffStateSpace\!\triangleq\!\fVocab \cup \fSet{\fMask}$ the augmented token space. $\fDiffStateRV{}\!\in\!\fDiffStateSpace^{\fPositionMax}$ denotes a random state (an $\fPositionMax$-length token sequence) and $\fDiffState{}$ is its realization. The masked positions of $\fDiffState{}$ are given by:
\(
\fMaskFunc{\fDiffState{}} \!\triangleq\! \fSet{\fPosition \!\in\! [\fPositionMax] \vcentcolon \fDiffStateToken{}{\fPosition}{} \!=\! \fMask},
\)
 and 
$\fDiffStateToken{}{\fPosition}{}$ is the $\fPosition$-th token in $\fDiffState{}$.
Starting from a masked state $\fDiffStateRV{0}\!\in\!\fDiffStateSpace^{\fPositionMax}$, an \mdlm evolves as a Markov chain
$\fDiffStateRV{0} \to \fDiffStateRV{1} \to \cdots \to \fDiffStateRV{\fDiffusionStep} \to \cdots \to \fDiffStateRV{\fDiffusionStepMax}$
over a finite number of denoising steps $\fDiffusionStepMax$ to produce a clean state $\fDiffStateRV{\fDiffusionStepMax}\in\fVocab^{\fPositionMax}$.

The transition (Markov) kernels can be constructed from the three main components of \mdlms. 
For $\fDiffState{\fDiffusionStep}$ with $\fMaskFunc{\fDiffState{\fDiffusionStep}}\!\neq\!\varnothing$, the \emph{denoiser} outputs categoricals
\(
\fDiffCategorical{\fPosition}{\cdot}{\fDiffState{\fDiffusionStep}} \in \fSimplex{\fSize{\fVocab}-1}
\)
over $\fPosition \in \fMaskFunc{\fDiffState{\fDiffusionStep}}$, where $\fSimplex{\fSize{\fVocab}-1}$ is the $(\fSize{\fVocab}\!-\!1)$-simplex. The token predictions are conditionally independent given $\fDiffState{\fDiffusionStep}$, so the denoiser's posterior factorizes over $\fMaskFunc{\fDiffState{\fDiffusionStep}}$.
%
The \emph{scheduler} retains some of these factors by drawing a \emph{reveal set} $\fRemaskingSetRV{\fDiffusionStep+1}\!=\!\fRemaskingSet{}$ from some policy $\fDiffScheduler{\fDiffusionStep+1}{\cdot}{\fDiffState{\fDiffusionStep}}$ such that
$\fRemaskingSet{} \subseteq \fMaskFunc{\fDiffState{\fDiffusionStep}}$ and 
is non-empty.
The set $\fRemaskingSet{}$ is the subset of $\fDiffState{\fDiffusionStep}$'s masked positions that will be revealed for step $\fDiffusionStep\!+\!1$.
The policy $\fDiffScheduler{\fDiffusionStep+1}{\cdot}{\fDiffState{\fDiffusionStep}}$ can be uniform~\citep{niellada25}, confidence-based~\citep{wufastdllm26}, entropy-based~\citep{benhamuaccelerated25}, \etc
%
Lastly, the \emph{decoder} independently samples from $\fDiffCategorical{\fPosition}{\cdot}{\fDiffState{\fDiffusionStep}}$ for $\fPosition\!\in\! \fRemaskingSet{}$, and commits the sampled tokens (retaining $\fDiffState{\fDiffusionStep}$'s tokens in the remaining positions) to produce the next state $\fDiffState{\fDiffusionStep+1}$.
Since denoising is monotone unmasking, we also use $\fMaskDiffFunc{\fDiffState{\fDiffusionStep+1}}{\fDiffState{\fDiffusionStep}}\triangleq\fMaskFunc{\fDiffState{\fDiffusionStep}}\setminus\fMaskFunc{\fDiffState{\fDiffusionStep+1}}$ to denote the revealed set of positions between steps $\fDiffusionStep$ and $\fDiffusionStep\!+\!1$.

\begin{definition}[Native Joint Kernel]\label{def:joint-transition-kernel}
Transition from {\normalfont$\fDiffStateRV{\fDiffusionStep}\!=\!\fDiffState{\fDiffusionStep}$} to {\normalfont$\fDiffStateRV{\fDiffusionStep+1}\!=\!\fDiffState{\fDiffusionStep+1}$} and {\normalfont$\fRemaskingSetRV{\fDiffusionStep+1}\!=\!\fRemaskingSet{}$} is:
\begingroup
\setlength{\abovedisplayskip}{2pt}
\setlength{\belowdisplayskip}{0pt}
\normalfont
\newcommand{\fKernelUnderset}[2]{%
\underset{\text{#1}}{%
\vphantom{\fDiffScheduler{\fDiffusionStep+1}{\fRemaskingSet{\fDiffusionStep+1}}{\fDiffState{\fDiffusionStep}}
\fPositionEquiv{\fDiffState{\fDiffusionStep+1}}{\fDiffState{\fDiffusionStep}}{\neg\fRemaskingSet{}}
\prod\nolimits_{\fPosition \in \fRemaskingSet{}} \fDiffCategorical{\fPosition}{\fDiffStateToken{\fDiffusionStep+1}{\fPosition}{}}{\fDiffState{\fDiffusionStep}}}#2}}
\begin{equation}\label{eq:decoder}
    \fDiffKernel{\fDiffusionStep+1}{\fDiffState{\fDiffusionStep+1},\fRemaskingSet{}}{\fDiffState{\fDiffusionStep}} \triangleq
    \fKernelUnderset{drawn reveal set}{\fDiffScheduler{\fDiffusionStep+1}{\fRemaskingSet{}}{\fDiffState{\fDiffusionStep}}}\;\times\!\!
    \fKernelUnderset{\mdlms denoise by unmasking}{\fPositionEquiv{\fDiffState{\fDiffusionStep+1}}{\fDiffState{\fDiffusionStep}}{\neg\fRemaskingSet{}}}
    \!\!\times\;
    \fKernelUnderset{denoiser predictions over masks}{\prod\nolimits_{\fPosition \in \fRemaskingSet{}} \fDiffCategorical{\fPosition}{\fDiffStateToken{\fDiffusionStep+1}{\fPosition}{}}{\fDiffState{\fDiffusionStep}}}.
\end{equation}
\endgroup
\end{definition}
\mdlms can denoise only by unmasking which necessitates $\fPositionEquiv{\fDiffState{\fDiffusionStep+1}}{\fDiffState{\fDiffusionStep}}{\neg\fRemaskingSet{}}$ in Definition~\ref{def:joint-transition-kernel}\footnotemark[4]. 
\footnotetext[4]{We use $[\fPositionMax]$ as the universe for set operations of a state's positions: $\neg\fRemaskingSet{}\!\!\triangleq\![\fPositionMax]\!\setminus\!\fRemaskingSet{}, \neg\fMaskFunc{\fDiffState{}}\!\triangleq\![\fPositionMax]\!\setminus\!\fMaskFunc{\fDiffState{}},$ \etc}
The native kernel $\fDiffKernel{\fDiffusionStep+1}{\fDiffState{\fDiffusionStep+1}}{\fDiffState{\fDiffusionStep}}$ is obtained by marginalizing out $\fRemaskingSetRV{\fDiffusionStep+1}$ from Eq.~\eqref{eq:decoder}.
A denoising trajectory $\fDiffState{1:\fDiffusionStepMax}$ is then a sequence of states $(\fDiffState{1},\dots,\fDiffState{\fDiffusionStepMax})$ derived through repeated applications of the native kernels $\fDiffKernelSymbol_{\fDiffusionStep+1}$ on the corresponding intermediate states $\fDiffState{\fDiffusionStep}$ starting from the initial state $\fDiffState{0}$.

\begin{definition}[Native Path Law]\label{def:native-path-law}
The distribution of denoising trajectories conditioned on {\normalfont$\fDiffState{0}$}:
\begingroup
\setlength{\abovedisplayskip}{2pt}
\setlength{\belowdisplayskip}{0pt}
\normalfont
\begin{equation}\label{eq:native-path-law}
\fNativePathLaw{1:\fDiffusionStepMax}{\fDiffState{1:\fDiffusionStepMax}}{\fDiffState{0}}
\;\triangleq\;
\fProbMeasure{\fDiffStateRV{1}\!=\!\fDiffState{1},\dots,\fDiffStateRV{\fDiffusionStepMax}\!=\!\fDiffState{\fDiffusionStepMax}\mid\fDiffStateRV{0}\!=\!\fDiffState{0}}{}
=
\prod\nolimits_{\fDiffusionStep<\fDiffusionStepMax}
\fDiffKernel{\fDiffusionStep+1}{\fDiffState{\fDiffusionStep+1}}{\fDiffState{\fDiffusionStep}}.
\end{equation}
\endgroup
\end{definition}


\subsection{Twisted Sequential Monte Carlo}
\label{sec:background-smc}
%
Sequential Monte Carlo (\smc) is used for approximating trajectory distributions as it performs importance corrections incrementally while building the trajectory~\citep{Liu2004SMC}\footnotemark[5]. It approximates them using weighted particles, by propagating, reweighting, and resampling them as trajectories grow. It implements the \fkc model~\citep{DelMoral2004}.

\footnotetext[5]{Importance Sampling performs importance corrections only after a trajectory is built~\citep{Liu2004IS}.}

\begin{definition}[Feynman--Kac Model]\label{def:feynman-kac-model}
A Feynman--Kac model defines an unnormalized trajectory distribution $\fTargetProbUNorm_{0:\fDiffusionStep}$ for $\fDiffusionStep\!\leq\!\fDiffusionStepMax$, as a product of proposal kernels $\fFeynmanKacKernelSymbol_{}$ and  non-negative potentials $\fFeynmanKacPotentialSymbol_{}$:
\begingroup
\setlength{\abovedisplayskip}{4pt}
\setlength{\belowdisplayskip}{3pt}
\normalfont
\begin{equation}\label{eq:feynman-kac-model}
\fTargetProbUNormFunc{0:\fDiffusionStep}{\fDiffState{0:\fDiffusionStep}}
\;\triangleq\;
\fFeynmanKacKernelSymbol_{0}(\fDiffState{0})\,
\fFeynmanKacPotentialSymbol_{0}(\fDiffState{0})
\times
\prod\nolimits_{s<\fDiffusionStep}
\fFeynmanKacKernel{s+1}{\fDiffState{s+1}}{\fDiffState{s}}\,
\fFeynmanKacPotential{s+1}{\fDiffState{s}}{\fDiffState{s+1}}.
\end{equation}
\endgroup
\end{definition}

However, when the target includes only
a terminal weight,
standard \smc receives no guidance during intermediate steps.
Twisted \smc uses twist functions to move this information into incremental potentials, enabling particles to be guided as their trajectories grow~\citep{whiteley2014twisted}.

\begin{definition}[Twist-Constructed Potentials]\label{def:twist-constructed-potentials}
Consider a Markov path distribution
{\normalfont$\fProb(\fDiffState{0:\fDiffusionStep})\!=\!
\nu_{0}(\fDiffState{0})
\prod_{s<\fDiffusionStep}
\fDiffKernel{s+1}{\fDiffState{s+1}}{\fDiffState{s}}$}
and a non-negative terminal weight $h$.
Let $\fFeynmanKacTwistFunction{0:\fDiffusionStepMax}$ be twist functions such that
$\fFeynmanKacTwistFunction{\fDiffusionStep}\!>\!0$ for $\fDiffusionStep\!<\!\fDiffusionStepMax$ and
$\fFeynmanKacTwistFunction{\fDiffusionStepMax}\!=\!h$.
Given proposal kernels $\fFeynmanKacKernelSymbol$, the potentials are defined as:
\begingroup
\setlength{\abovedisplayskip}{4pt}
\setlength{\belowdisplayskip}{3pt}
\normalfont
\begin{equation}\label{eq:twisted-potentials}
\fFeynmanKacPotentialSymbol_{0}(\fDiffState{0})
\triangleq
\frac{\nu_{0}(\fDiffState{0})}
{\fFeynmanKacKernelSymbol_{0}(\fDiffState{0})}
\fFeynmanKacTwistFunction{0}(\fDiffState{0}),
\qquad
\fFeynmanKacPotential{\fDiffusionStep+1}{\fDiffState{\fDiffusionStep}}{\fDiffState{\fDiffusionStep+1}}
\triangleq
\frac{\fDiffKernel{\fDiffusionStep+1}{\fDiffState{\fDiffusionStep+1}}{\fDiffState{\fDiffusionStep}}}
{\fFeynmanKacKernel{\fDiffusionStep+1}{\fDiffState{\fDiffusionStep+1}}{\fDiffState{\fDiffusionStep}}}
\frac{\fFeynmanKacTwistFunction{\fDiffusionStep+1}(\fDiffState{\fDiffusionStep+1})}
{\fFeynmanKacTwistFunction{\fDiffusionStep}(\fDiffState{\fDiffusionStep})}.
\end{equation}
\endgroup
\end{definition}


Substituting the potentials from Eq.~\eqref{eq:twisted-potentials}
into Eq.~\eqref{eq:feynman-kac-model} gives
{\normalfont$\fTargetProbUNormFunc{0:\fDiffusionStep}{\fDiffState{0:\fDiffusionStep}}\!=\!\fProb(\fDiffState{0:\fDiffusionStep})\,
\fFeynmanKacTwistFunction{\fDiffusionStep}(\fDiffState{\fDiffusionStep})$}.
Therefore, the model targets
{\normalfont$\fProb(\fDiffState{0:\fDiffusionStepMax})h(\fDiffState{\fDiffusionStepMax})$} at $\fDiffusionStepMax$.
The intermediate twists do not alter it, but move its
information into the incremental potentials used to guide particles.
The ideal twist at $\fDiffState{\fDiffusionStep}$ is the expected terminal
weight under the remaining native transitions, \ie the exact lookahead or continuation probabilities.


\section{Step-Exact Does Not Mean Trajectory-Exact}
\label{sec:problem}

Constrained decoding in \mdlms operates along two axes: state positions $\fPosition\!\in\![\fPositionMax]$ and denoising steps $0\!\leq\!\fDiffusionStep\!<\!\fDiffusionStepMax$. At denoising step $\fDiffusionStep$, a step-exact decoder draws a clean sequence $\fDiffCleanStateRV{}\!\in\!\fVocab^{\fPositionMax}$
from the denoiser's automaton-constrained posterior (conditioned on $\fDiffStateRV{\fDiffusionStep}$) that satisfies the regular language constraint $\fLangSet$, \ie $\fDiffCleanStateRV{}\!\in\!\fLangSet$. Only $\fDiffCleanStateRV{}$'s positions in the reveal set $\fRemaskingSetRV{\fDiffusionStep+1}$ are committed to $\fDiffStateRV{\fDiffusionStep+1}$, while the remaining are discarded. A draw can therefore be exact at every step without the resulting trajectory $\fDiffStateRV{0:\fDiffusionStepMax}$ following the native path law (Definition~\ref{def:native-path-law}) conditioned on constraint-satisfaction $\fDiffStateRV{\fDiffusionStepMax}\!\in\!\fLangSet$.

Throughout the paper, we assume: (1)~complete reveal $\fMaskFunc{\fDiffState{\fDiffusionStepMax}}\!=\!\varnothing$ at the terminal denoising step $\fDiffusionStepMax$; and (2)~full support of the denoiser's predictions $\fDiffCategorical{\fPosition}{\fDiffCleanToken{}}{\fDiffState{}}>0$ for all $\fDiffCleanToken{}\in\fVocab$ and for all $\fPosition\in\fMaskFunc{\fDiffState{}}$.
We follow \citet{kooAutomata24} to lift the \dfa from symbols in $\fAlpha$ to tokens in the model's vocabulary $\fVocab$.

We now formally characterize the trajectory bias introduced by step-exact decoding. We start by defining the \emph{target path law}, \ie the native decoder's path law conditioned on constraint-satisfaction, using the Doob $\fDoobHarmonic{}$-transform (Section~\ref{sec:problem-native-decoder}). Next, we derive the transition kernels of the step-exact decoder (Section~\ref{sec:problem-automaton-constrained-decoder}), and show that the involved partition sums are lookaheads (valid-continuation probabilities) under a frozen denoiser, and thus step-exact decoding is a frozen Doob transform re-frozen at every step (Section~\ref{sec:method-frozen-denoiser}). Finally, we show that re-freezing is the trajectory bias (Section~\ref{sec:problem-trajectory-bias}).

\begin{definition}[Unbiased Constrained Decoder]\label{def:unbiased-decoder}
For an initial state {\normalfont$\fDiffState{0}$} and constraint {\normalfont$\fLangSet$}, let
{\normalfont$\Omega_{\fLangSet}(\fDiffState{0})$} be the support of native trajectories
{\normalfont$\fDiffState{1:\fDiffusionStepMax}$}
 satisfying {\normalfont$\fDiffState{\fDiffusionStepMax}\!\in\!\fLangSet$}.
A constrained decoder with path law
{\normalfont$q_{1:\fDiffusionStepMax}(\cdot\mid\fDiffState{0};\fLangSet)$}
is unbiased if its support is {\normalfont$\Omega_{\fLangSet}(\fDiffState{0})$} and, for every
{\normalfont$\fDiffState{1:\fDiffusionStepMax}',\fDiffState{1:\fDiffusionStepMax}''\!\in\!\Omega_{\fLangSet}(\fDiffState{0})$},
\begingroup
\setlength{\abovedisplayskip}{4pt}
\setlength{\belowdisplayskip}{4pt}
\normalfont
\begin{equation*}
\frac{q_{1:\fDiffusionStepMax}(\fDiffState{1:\fDiffusionStepMax}'\mid\fDiffState{0};\fLangSet)}
{q_{1:\fDiffusionStepMax}(\fDiffState{1:\fDiffusionStepMax}''\mid\fDiffState{0};\fLangSet)}
=
\frac{\fNativePathLaw{1:\fDiffusionStepMax}{\fDiffState{1:\fDiffusionStepMax}'}{\fDiffState{0}}}
{\fNativePathLaw{1:\fDiffusionStepMax}{\fDiffState{1:\fDiffusionStepMax}''}{\fDiffState{0}}},
\end{equation*}
\endgroup
\ie the constrained decoder does not alter the relative probabilities of constraint-valid trajectories.
\end{definition}

\subsection{\doob-Transformed Decoder}
\label{sec:problem-native-decoder}

In order to characterize trajectory bias, we first need the path law of a constrained decoder that is conditioned to satisfy $\fLangSet$, and is unbiased in the sense of Definition~\ref{def:unbiased-decoder}.
The native path law $\fNativePathLaw{1:\fDiffusionStepMax}{\cdot}{\fDiffStateRV{0}}$ is unconstrained and so does not satisfy $\fLangSet$ by itself. We obtain the unbiased constrained decoder by terminally conditioning $\fNativePathLaw{1:\fDiffusionStepMax}{\fDiffStateRV{1:\fDiffusionStepMax}}{\fDiffStateRV{0}}$ on satisfying $\fLangSet$, \ie $\fDiffStateRV{\fDiffusionStepMax}\!\in\!\fLangSet$; using the \doob $\fDoobHarmonic{}$-transform~\citep{doobConditional1957}. The \doob $\fDoobHarmonic{}$-transform conditions Markov processes to satisfy specific constraints, by tilting their transition kernels towards states compatible with highly-probable constraint-valid continuations that are given by the lookaheads $\fDoobHarmonic{\fDiffusionStep}(\fDiffState{}) \,\triangleq\, \fProbMeasure{\fDiffStateRV{\fDiffusionStepMax}\in\fLangSet\mid\fDiffStateRV{\fDiffusionStep}=\fDiffState{}}{}$.

\begin{definition}[\doob Kernel]
\label{def:doob-kernel}
For the native kernels $\fDiffKernelSymbol_{\fDiffusionStep+1}$ (Definition~\ref{def:joint-transition-kernel}) and their exact lookahead {\normalfont$\fDoobHarmonic{\fDiffusionStep}(\fDiffState{}),$} with {\normalfont$\fDoobHarmonic{\fDiffusionStep}(\fDiffState{})\!>\!0$} for all {\normalfont$\fDiffState{}$} in the support of {\normalfont$\fDiffKernelSymbol_{\fDiffusionStep+1}$},
their \doob $\fDoobHarmonic{}$-transform gives:
\begingroup
\setlength{\abovedisplayskip}{1pt}
\setlength{\belowdisplayskip}{1pt}
\normalfont
\begin{equation}\label{eq:doob-kernel}
    \fDoobKernel{\fDiffusionStep+1}{\fDiffState{\fDiffusionStep+1}}{\fDiffState{\fDiffusionStep};\fLangSet}
    \;\triangleq\;
    \fDiffKernel{\fDiffusionStep+1}{\fDiffState{\fDiffusionStep+1}}{\fDiffState{\fDiffusionStep}} \times
    \frac{\fDoobHarmonic{\fDiffusionStep+1}(\fDiffState{\fDiffusionStep+1})}
    {\fDoobHarmonic{\fDiffusionStep}(\fDiffState{\fDiffusionStep})}.
\end{equation}
\endgroup
\(
\displaystyle
\normalfont
\fDoobHarmonic{\fDiffusionStep}(\fDiffState{})
\triangleq
\sum\nolimits_{\fDiffState{}'\in\fDiffStateSpace^{\fPositionMax}} \fDiffKernel{\fDiffusionStep+1}{\fDiffState{}'}{\fDiffState{}}\;
\fDoobHarmonic{\fDiffusionStep+1}(\fDiffState{}'),
\)
is the exact lookahead of $\fDiffKernelSymbol_{\fDiffusionStep+1}$ and
\(
\normalfont
\fDoobHarmonic{\fDiffusionStepMax}(\fDiffState{})
\triangleq \fIndicatorFunc{\fDiffState{} \in \fLangSet}.
\)
\end{definition}

\begin{proposition}[\doob Path Law]
\label{prop:doob-path-law}
The \doob path law is the product of the \doob kernels:
\begingroup
\setlength{\abovedisplayskip}{2pt}
\setlength{\belowdisplayskip}{0pt}
\normalfont
\begin{equation}\label{eq:doob-path-law}
    \fDoobPathLaw{1:\fDiffusionStepMax}{\fDiffState{1:\fDiffusionStepMax}}{\fDiffState{0}}
    \;\;=\;\;
    \fNativePathLaw{1:\fDiffusionStepMax}{\fDiffState{1:\fDiffusionStepMax}}{\fDiffState{0}}
    \times
    \frac{\fIndicatorFunc{\fDiffState{\fDiffusionStepMax} \in \fLangSet}}
    {\fDoobHarmonic{0}(\fDiffState{0})}.
\end{equation}
\endgroup
\begin{proof}[Proof sketch]
The product of the \doob kernels $\fDoobKernelSymbol_{\fDiffusionStep+1}$
for $\fDiffusionStep\!<\!\fDiffusionStepMax$, gives the native path law from Definition~\ref{def:native-path-law}; and the $\fDoobHarmonic{}$-ratios telescope to $\fIndicatorFunc{\fDiffState{\fDiffusionStepMax} \in \fLangSet}/\fDoobHarmonic{0}(\fDiffState{0})$, thus giving Eq.~\eqref{eq:doob-path-law}.
\end{proof}
\end{proposition}

Importantly, the \doob path law $\fDoobPathLaw{1:\fDiffusionStepMax}{\fDiffStateRV{1:\fDiffusionStepMax}}{\fDiffState{0}}$ from Proposition~\ref{prop:doob-path-law} is unbiased as it prunes all trajectories with terminal state $\fDiffStateRV{\fDiffusionStepMax}\notin \fLangSet$, but does so without altering the relative probabilities of the valid trajectories under $\fNativePathLaw{1:\fDiffusionStepMax}{\fDiffStateRV{1:\fDiffusionStepMax}}{\fDiffState{0}}$. This follows from Eq.~\eqref{eq:doob-path-law}, where the \doob path law is proportional to $\fNativePathLaw{1:\fDiffusionStepMax}{\fDiffStateRV{1:\fDiffusionStepMax}}{\fDiffState{0}}$ up to a factor $\fDoobHarmonic{0}(\fDiffState{0})$ which is constant for a fixed initial state $\fDiffState{0}$.
The \doob terminal law can be obtained by marginalizing out the intermediate states $\fDiffStateRV{1},\dots,\fDiffStateRV{\fDiffusionStepMax-1}$ of $\fDoobPathLaw{1:\fDiffusionStepMax}{\fDiffStateRV{1:\fDiffusionStepMax}}{\fDiffState{0}}$. Let $\fDoobNormaliser(\fDiffState{0}) \triangleq \fDoobHarmonic{0}(\fDiffState{0}),$ then:
\begingroup
\setlength{\abovedisplayskip}{0pt}
\setlength{\belowdisplayskip}{2pt}
\normalfont
\begin{equation}\label{eq:terminal-distribution}
    \fDoobPathLaw{\fDiffusionStepMax}{\fDiffCleanState{}}{\fDiffState{0}}
    \;\;=\;\; 
    \fNativePathLaw{\fDiffusionStepMax}{\fDiffStateRV{\fDiffusionStepMax}\!=\!\fDiffCleanState{}}{\fDiffState{0}} \times \frac{\fIndicatorFunc{\fDiffCleanState{} \in \fLangSet}}{\fDoobNormaliser(\fDiffState{0})},
\end{equation}
\endgroup
gives the probability that the terminal \mdlm output is a clean sequence $\fDiffCleanState{}$, conditioned on the initial state $\fDiffState{0}$ and on the terminal output satisfying the regular language constraint.
When $\fDoobNormaliser(\fDiffState{0})$ is not too small, this distribution can be sampled efficiently using rejection sampling from the native decoder.

\subsection{Automaton-Constrained Step-Exact Decoder}
\label{sec:problem-automaton-constrained-decoder}

Unlike the \doob-transformed decoder which conditions the native path law on the final state $\fDiffStateRV{\fDiffusionStepMax}$ satisfying the constraint $\fLangSet$, the step-exact decoder constrains the denoiser's mean-field posterior at each denoising step to satisfy $\fLangSet$ and samples it exactly using \ffbs~\citep{dang2026constrained}.

Since the denoiser's published categoricals are conditionally independent given a state $\fDiffState{}$, its mean-field posterior $\fDenoiserPosterior{\cdot}{\fDiffState{}}$ factorizes over all the positions of $\fDiffState{}$ (respecting already committed positions) as:
\begingroup
\setlength{\abovedisplayskip}{4pt}
\setlength{\belowdisplayskip}{4pt}
\begin{equation}\label{eq:denoiser-posterior}
    \fDenoiserPosterior{\fDiffCleanStateRV{}\!=\!\fDiffCleanState{}}{\fDiffStateRV{}\!=\!\fDiffState{}} \;\;=\;\;
    \fPositionEquiv{\fDiffCleanState{}}{\fDiffState{}}{\neg\fMaskFunc{\fDiffState{}}} \times
    \prod\nolimits_{\fPosition \in \fMaskFunc{\fDiffState{}}} \fDiffCategorical{\fPosition}{\fDiffCleanStateToken{}{\fPosition}{}}{\fDiffState{}}.
\end{equation}
\endgroup
Sampling the mean-field posterior conditioned on a state $\fDiffState{}$ gives a clean sequence $\fDiffCleanState{}\in \fVocab^{\fPositionMax}$. This is because we retain $\fDiffState{}$'s tokens for non-masked positions while sampling from the mean-field's factors for masked positions. However, this clean sequence may not be in the regular language $\fLangSet$, \ie $\fDiffCleanState{}\notin\fLangSet$. Since the next intermediate state is derived from $\fDiffState{}$ and $\fDiffCleanState{}$, it may not satisfy the regular constraint either. Therefore, the step-exact decoder constrains the mean-field posterior to satisfy $\fLangSet$ at each step.

\begin{proposition}[Automaton-Constrained Posterior]\label{prop:automaton-constrained-posterior}
    Let the \dfa { \normalfont $\normalfont\fAutomtn\!=\!\fAutomtnTuple$} recognize $\fLangSet$, and {\normalfont$\fAutomtnStateChainRV\!\in\!\fStates^{\fPositionMax+1}$} be a random state chain of $\fAutomtn$, then the automaton-constrained posterior is:
\begingroup
\setlength{\abovedisplayskip}{4pt}
\setlength{\belowdisplayskip}{4pt}
\normalfont
\begin{equation}\label{eq:automaton-constrained-denoiser-posterior}
    \fDenoiserPosterior{\fDiffCleanStateRV{}\!=\!\fDiffCleanState{}, \fAutomtnStateChainRV\!=\!\fAutomtnStateChain}{\fDiffStateRV{}\!=\!\fDiffState{};\fLangSet} \;\;\propto\;\;
    \fDenoiserPosterior{\fDiffCleanState{}}{\fDiffState{}} \times
    \fAutomtnConstraint{\fAutomtn}{\fDiffCleanState{}}{\fAutomtnStateChain},
\end{equation}
\endgroup
where 
\(
\normalfont
\fAutomtnConstraint{\fAutomtn}{\fDiffCleanState{}}{\fAutomtnStateChain} 
\)
is the constraint imposed by $\fAutomtn$ and is given by:
\begingroup
\setlength{\abovedisplayskip}{4pt}
\setlength{\belowdisplayskip}{4pt}
\normalfont
\begin{equation}\label{eq:automaton-constraint}
\fAutomtnConstraint{\fAutomtn}{\fDiffCleanState{}}{\fAutomtnStateChain} \;\;=\;\;
\fIndicatorFunc{\fAutomtnStateChainToken{0}\!=\!\fState{0}}\times
\fIndicatorFunc{\fAutomtnStateChainToken{\fPositionMax}\!\in\! \fStatesAcc}\times
\prod\nolimits_{\fPosition \in [\fPositionMax]} \fIndicatorFunc{\fAutomtnStateChainToken{\fPosition}=\fTransFunc{\fAutomtnStateChainToken{\fPosition\!-\!1}}{\fDiffCleanStateToken{}{\fPosition}{}}}.
\end{equation}
\endgroup
\end{proposition}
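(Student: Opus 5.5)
The automaton-constrained posterior is, by definition, the mean-field posterior of Eq.~\eqref{eq:denoiser-posterior} conditioned on the event $\fDiffCleanStateRV{}\in\fLangSet$. The plan is therefore to show that the extended joint in Eq.~\eqref{eq:automaton-constrained-denoiser-posterior}, once the state chain $\fAutomtnStateChainRV$ is marginalized out, gives exactly this conditional. Concretely, I will show
\[
\sum\nolimits_{\fAutomtnStateChain\in\fStates^{\fPositionMax+1}}\fAutomtnConstraint{\fAutomtn}{\fDiffCleanState{}}{\fAutomtnStateChain}=\fIndicatorFunc{\fDiffCleanState{}\in\fLangSet}.
\]
Multiplying by $\fDenoiserPosterior{\fDiffCleanState{}}{\fDiffState{}}$ then gives $\fDenoiserPosterior{\fDiffCleanState{}}{\fDiffState{}}\fIndicatorFunc{\fDiffCleanState{}\in\fLangSet}$, which after normalization is $\fDenoiserPosterior{\fDiffCleanState{}\mid\fDiffCleanState{}\in\fLangSet}{\fDiffState{}}$.

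The key step uses determinism of $\fAutomtn$. For a fixed $\fDiffCleanState{}\in\fVocab^{\fPositionMax}$, the product of indicators $\fIndicatorFunc{\fAutomtnStateChainToken{0}=\fState{0}}\prod_{\fPosition}\fIndicatorFunc{\fAutomtnStateChainToken{\fPosition}=\fTransFunc{\fAutomtnStateChainToken{\fPosition-1}}{\fDiffCleanStateToken{}{\fPosition}{}}}$ is nonzero for exactly one chain. That chain is $\fAutomtnStateChain^{\fDiffCleanState{}}(\fPosition)=\fTransClosureFunc{\fState{0}}{\fDiffCleanState{}(1)\cdots\fDiffCleanState{}(\fPosition)}$. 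I will prove this by induction on $\fPosition$: the initial indicator fixes $\fAutomtnStateChainToken{0}$, and each transition indicator fixes $\fAutomtnStateChainToken{\fPosition}$ given $\fAutomtnStateChainToken{\fPosition-1}$, because $\fTrans$ is a function. The sum over chains therefore collapses to $\fIndicatorFunc{\fAutomtnStateChain^{\fDiffCleanState{}}(\fPositionMax)\in\fStatesAcc}=\fIndicatorFunc{\fTransClosureFunc{\fState{0}}{\fDiffCleanState{}}\in\fStatesAcc}$. By Definition~\ref{def:regular-language} this equals $\fIndicatorFunc{\fDiffCleanState{}\in\fLangSet}$, using the token-level lifting of \citet{kooAutomata24}, under which $\fTrans$ is applied to vocabulary tokens and $\fTransClosure$ composes over the token sequence.

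Two technical points remain. First, the normalizer $\sum_{\fDiffCleanState{}}\fDenoiserPosterior{\fDiffCleanState{}}{\fDiffState{}}\fIndicatorFunc{\fDiffCleanState{}\in\fLangSet}$ must be positive for the proportionality to define a distribution. This holds by the full-support assumption whenever some valid completion of $\fDiffState{}$'s committed tokens exists, and I will state this as the implicit hypothesis. Second, I will note that the right-hand side is a product of local factors, namely the per-position categoricals and the pairwise transition indicators between $\fAutomtnStateChainToken{\fPosition-1}$ and $\fAutomtnStateChainToken{\fPosition}$. This is what gives the chain-structured factor graph that \ffbs exploits, although the proposition itself only requires the marginalization identity.

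The main obstacle is the lifting from DFA symbols to model tokens, since a token may spell several symbols. I will handle it by reading $\fTransFunc{q}{v}$ as $\fTransClosureFunc{q}{v}$ for multi-symbol tokens, so that $\fTransClosure$ on token sequences agrees with $\fTransClosure$ on the concatenated symbol string. Once that is accepted, the rest is bookkeeping.
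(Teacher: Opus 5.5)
Your proposal is correct and takes the same route as the paper. The paper gives no formal proof beyond a one-line remark: determinism of $\fAutomtn$ leaves exactly one state chain compatible with each clean sequence, so marginalizing $\fAutomtnStateChainRV$ yields $\fIndicatorFunc{\fDiffCleanState{}\in\fLangSet}$. Your induction over positions, the positivity condition on the normalizer, and your explicit treatment of the token-level lifting fill in details the paper leaves implicit.
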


The automaton-constrained posterior in Proposition~\ref{prop:automaton-constrained-posterior} is a Conditional Random Field (\crf)~\citep{laffertyConditional2001} with denoiser categoricals as unary potentials and the \dfa $\fAutomtn$'s transition function providing the compatibility factor for the transfer or pairwise potentials.
Since $\fAutomtn$ is deterministic, every constraint-satisfying $\fDiffCleanState{}\!\in\!\fLangSet$ corresponds to exactly one \dfa state chain for which $\fAutomtnConstraint{\fAutomtn}{\fDiffCleanState{}}{\fAutomtnStateChain}\!=\!1$.
Thus, marginalizing out $\fAutomtnStateChainRV$ gives
\(
\normalfont
\sum\nolimits_{\fAutomtnStateChain\in\fStates^{\fPositionMax+1}}\fAutomtnConstraint{\fAutomtn}{\fDiffCleanState{}}{\fAutomtnStateChain}
\!=\!\fIndicatorFunc{\fDiffCleanState{}\!\in\!\fLangSet}.
\)

\begin{definition}[Clamped Partition Sum]\label{def:clamped-partition-sum}
For a state {\normalfont$\fDiffState{}'$} reachable from {\normalfont$\fDiffState{}$}, \ie { \normalfont$\fPositionEquiv{\fDiffState{}'}{\fDiffState{}}{\neg\fMaskDiffFunc{\fDiffState{}'}{\fDiffState{}}}$} , the constraint-valid mass of the completions of {\normalfont$\fDiffState{}'$} under the denoiser conditioned on {\normalfont$\fDiffState{}$} is:
\begingroup
\setlength{\abovedisplayskip}{3pt}
\setlength{\belowdisplayskip}{4pt}
\normalfont
\begin{equation}\label{eq:ffbs-partition}
\fPartitionFuncClamp{}{\fDiffState{}'}{\fDiffState{}}
\;\triangleq\;
\sum\nolimits_{\fDiffCleanState{}\in\fVocab^{\fPositionMax}}
\fIndicatorFunc{\fDiffCleanState{}\in\fLangSet} \times
{\fPositionEquiv{\fDiffCleanState{}}{\fDiffState{}'}{\neg\fMaskFunc{\fDiffState{}'}}} \times
{\prod\nolimits_{\fPosition \in \fMaskFunc{\fDiffState{}'}}
\fDiffCategorical{\fPosition}{\fDiffCleanStateToken{}{\fPosition}{}}{\fDiffState{}}}.
\end{equation}
\endgroup
When { \normalfont$\fDiffState{}'=\fDiffState{}$}, the clamped partition sum {\normalfont$\fPartitionFuncClamp{}{\fDiffState{}}{\fDiffState{}}$} is called the local partition sum.
\end{definition}
The local partition sum
$\fPartitionFuncClamp{}{\fDiffState{}}{\fDiffState{}}$
is the normalizer of the constrained \crf in
Proposition~\ref{prop:automaton-constrained-posterior}, and scores the constraint-valid probability mass of the masked positions of $\fDiffState{}$ with the denoiser
conditioned on the same state $\fDiffState{}$, while the clamped partition sum $\fPartitionFuncClamp{}{\fDiffState{}'}{\fDiffState{}}$ scores it for the masked positions of $\fDiffState{}'$, but with the denoiser
conditioned on a different state $\fDiffState{}$.
Following \citet{dang2026constrained}, we use \ffbs both to sample exactly from
the constrained \crf and to compute the partition sums.

\begin{proposition}[\ffbsKernelProp Kernel]\label{prop:ffbs-kernel}
For the native kernels
\(
\normalfont
\fDiffKernelSymbol_{\fDiffusionStep+1}
\)
and partition sums {\normalfont$\fPartitionFuncClamp{}{\fDiffState{}}{\fDiffState{}}$} with
{\normalfont$\fPartitionFuncClamp{}{\fDiffState{}}{\fDiffState{}}>0$} for all {\normalfont$\fDiffState{}$} in the support of {\normalfont$\fDiffKernelSymbol_{\fDiffusionStep+1}$},
the \ffbsKernel transition kernel is given by:
\begingroup
\setlength{\abovedisplayskip}{4pt}
\setlength{\belowdisplayskip}{4pt}
\normalfont
\begin{align}
    \fFFBSKernel{\fDiffusionStep+1}{\fDiffState{\fDiffusionStep+1}}{\fDiffState{\fDiffusionStep};\fLangSet} \;=\;&
    \fDiffKernel{\fDiffusionStep+1}{\fDiffState{\fDiffusionStep+1}}{\fDiffState{\fDiffusionStep}} \times
    \frac{\fPartitionFuncClamp{}{\fDiffState{\fDiffusionStep+1}}{\fDiffState{\fDiffusionStep}}}{\fPartitionFuncClamp{}{\fDiffState{\fDiffusionStep}}{\fDiffState{\fDiffusionStep}}}\label{eq:ffbs-kernel}.
\end{align}
\endgroup
\begin{proof}[Proof sketch]
Substituting the constrained \crf into the native joint kernel factors out $\fDiffKernelSymbol_{\fDiffusionStep+1}$ and leaves $\fPartitionFuncClamp{}{\fDiffState{\fDiffusionStep+1}}{\fDiffState{\fDiffusionStep}}$ as the valid-completion sum. Formal proof in Appendix~\ref{app:proof-ffbs-kernel}.
\end{proof}
\end{proposition}

Interestingly, the \ffbsKernel kernel in Eq.~\eqref{eq:ffbs-kernel} looks structurally similar to the \doob kernel in Eq.~\eqref{eq:doob-kernel}, in the sense that both tilt the native kernel $\fDiffKernelSymbol_{\fDiffusionStep+1}$ by a ratio of successor to current constraint-valid probability masses, with the partition sums $Z$ of \ffbsKernel kernels paralleling the exact lookaheads $\fDoobHarmonic{}$ of the \doob kernels. We next show that this structural similarity is not a coincidence, but a consequence of the partition sums $Z$ being lookaheads, but of a different  Markov chain.


\subsection{Partition Sums are Lookaheads of a Frozen Denoiser Markov Chain}
\label{sec:method-frozen-denoiser}

The exact lookahead $\fDoobHarmonic{\fDiffusionStep}$ is intractable because the native Markov chain reconditions the denoiser on new states for each step. In contrast, a clamped partition sum $\fPartitionFuncClamp{}{\fDiffState{\fDiffusionStep+1}}{\fDiffState{\fDiffusionStep}}$ scores the valid completions of $\fDiffState{\fDiffusionStep+1}$ by conditioning the denoiser only once, at $\fDiffState{\fDiffusionStep}$. We
characterize their similarity by \emph{freezing} the denoiser at a reference state $\fDiffStateFrozen$. Then the frozen native joint kernel for $\fDiffState{\fDiffusionStep}$ reachable from $\fDiffStateFrozen$ is:
\begingroup
\setlength{\abovedisplayskip}{3pt}
\setlength{\belowdisplayskip}{3pt}
\normalfont
\begin{equation*}
\fDiffFrozenKernel{\fDiffusionStep+1}
{\fDiffState{\fDiffusionStep+1},\fRemaskingSet{\fDiffusionStep+1}}
{\fDiffState{\fDiffusionStep}}
{\fDiffStateFrozen}
\;\triangleq\;
\fDiffScheduler{\fDiffusionStep+1}
{\fRemaskingSet{\fDiffusionStep+1}}
{\fDiffState{\fDiffusionStep}} \times
\fPositionEquiv{\fDiffState{\fDiffusionStep+1}}
{\fDiffState{\fDiffusionStep}}
{\neg\fRemaskingSet{\fDiffusionStep+1}} \times
\prod\nolimits_{\fPosition\in\fRemaskingSet{\fDiffusionStep+1}}
\fDiffCategorical{\fPosition}
{\fDiffStateToken{\fDiffusionStep+1}{\fPosition}{}}
{\fDiffStateFrozen}.
\end{equation*}
\endgroup
Here, the denoiser's conditioning is fixed at $\fDiffStateFrozen$ instead of the current state $\fDiffState{\fDiffusionStep}$ as in Definition~\ref{def:joint-transition-kernel}. The scheduler is still evaluated at $\fDiffState{\fDiffusionStep}$. The frozen terminal law is independent of the order of unmasking since the predicted categoricals are fixed due to freezing the denoiser, yielding a tractable lookahead.

\begin{proposition}[Frozen Lookahead]\label{prop:frozen-lookahead}
The lookahead for the constraint-satisfying terminal law induced by the frozen native kernels, for any state {\normalfont$\fDiffState{\fDiffusionStep}$} reachable from {\normalfont$\fDiffStateFrozen$} is:
\begingroup
\setlength{\abovedisplayskip}{2pt}
\setlength{\belowdisplayskip}{2pt}
\normalfont
\begin{equation}\label{eq:frozen-lookahead}
\fFrozenDoobHarmonic{\fDiffusionStep}{\fDiffStateFrozen}{\fDiffState{\fDiffusionStep}}
=
\sum\nolimits_{\fDiffState{}\in\fDiffStateSpace^{\fPositionMax}}\;
\fPositionEquiv{\fDiffState{}}{\fDiffState{\fDiffusionStep}}{\neg\fMaskFunc{\fDiffState{\fDiffusionStep}}} \times
\fIndicatorFunc{\fDiffState{}\in\fLangSet} \times
\prod\nolimits_{\fPosition\in\fMaskFunc{\fDiffState{\fDiffusionStep}}}
\fDiffCategorical{\fPosition}
{\fDiffStateToken{}{\fPosition}{}}
{\fDiffStateFrozen}.
\end{equation}
\endgroup
Therefore,
\(
\normalfont
\fFrozenDoobHarmonic{\fDiffusionStep}{\fDiffState{\fDiffusionStep}}{\fDiffState{\fDiffusionStep}}
=
\fPartitionFuncClamp{}
{\fDiffState{\fDiffusionStep}}
{\fDiffState{\fDiffusionStep}}
\)
\;and\;
\(
\normalfont
\fFrozenDoobHarmonic{\fDiffusionStep+1}{\fDiffState{\fDiffusionStep}}{\fDiffState{\fDiffusionStep+1}}
=
\fPartitionFuncClamp{}
{\fDiffState{\fDiffusionStep+1}}
{\fDiffState{\fDiffusionStep}}
\).
\begin{proof}[Proof sketch]
Monotone unmasking and posterior sampling are independent of the scheduler, giving Eq.~\eqref{eq:frozen-lookahead}.
The two identities follow from Eq.~\eqref{eq:ffbs-partition} for $\fDiffStateFrozen\!=\!\fDiffState{\fDiffusionStep}$. Formal proof in Appendix~\ref{app:proof-frozen-lookahead}.
\end{proof}
\end{proposition}

From Proposition~\ref{prop:frozen-lookahead}, both partition sums in Eq.~\eqref{eq:ffbs-kernel} are thus lookaheads of the Markov chain frozen at the current state, which agrees with the native chain on the first transition, \ie when $\fDiffStateFrozen\!=\!\fDiffState{\fDiffusionStep}$.
\begin{proposition}[Re-Frozen Doob Transforms]
\label{prop:ffbs-frozen-doob}
Each \ffbsKernel kernel is the exact Doob transform of the frozen native kernels at its current state {\normalfont$\fDiffState{\fDiffusionStep}$}:
\begingroup
\setlength{\abovedisplayskip}{-6pt}
\setlength{\belowdisplayskip}{-1pt}
\normalfont
\begin{equation*}
\fFFBSKernel{\fDiffusionStep+1}
{\fDiffState{\fDiffusionStep+1}}
{\fDiffState{\fDiffusionStep};\fLangSet}
\;=\;
\fDiffKernel{\fDiffusionStep+1}
{\fDiffState{\fDiffusionStep+1}}
{\fDiffState{\fDiffusionStep}}\times
\frac{\fFrozenDoobHarmonic{\fDiffusionStep+1}{\fDiffState{\fDiffusionStep}}{\fDiffState{\fDiffusionStep+1}}}
{\fFrozenDoobHarmonic{\fDiffusionStep}{\fDiffState{\fDiffusionStep}}{\fDiffState{\fDiffusionStep}}}.
\end{equation*}
\endgroup
\end{proposition}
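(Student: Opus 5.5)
The identity will follow by combining Proposition~\ref{prop:ffbs-kernel} with the two identities of Proposition~\ref{prop:frozen-lookahead}. The equation itself is nearly a substitution. The substantive content of ``exact Doob transform of the frozen native kernels'' needs a little more: we must check that, with reference state $\fDiffStateFrozen=\fDiffState{\fDiffusionStep}$, the displayed kernel equals the Doob kernel of Definition~\ref{def:doob-kernel} built from $\fDiffFrozenKernel{\fDiffusionStep+1}{\cdot}{\fDiffState{\fDiffusionStep}}{\fDiffState{\fDiffusionStep}}$ and its own lookahead $\widehat{h}^{(\fDiffState{\fDiffusionStep})}$.

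\textbf{Step 1 (algebraic identity).} Starting from Eq.~\eqref{eq:ffbs-kernel}, we replace the numerator $\fPartitionFuncClamp{}{\fDiffState{\fDiffusionStep+1}}{\fDiffState{\fDiffusionStep}}$ by $\fFrozenDoobHarmonic{\fDiffusionStep+1}{\fDiffState{\fDiffusionStep}}{\fDiffState{\fDiffusionStep+1}}$. We replace the denominator $\fPartitionFuncClamp{}{\fDiffState{\fDiffusionStep}}{\fDiffState{\fDiffusionStep}}$ by $\fFrozenDoobHarmonic{\fDiffusionStep}{\fDiffState{\fDiffusionStep}}{\fDiffState{\fDiffusionStep}}$. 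Both replacements are exactly the two identities stated in Proposition~\ref{prop:frozen-lookahead}. The first is valid because $\fDiffState{\fDiffusionStep+1}$ lies in the support of $\fDiffKernelSymbol_{\fDiffusionStep+1}(\cdot\mid\fDiffState{\fDiffusionStep})$ and is therefore reachable from $\fDiffStateFrozen=\fDiffState{\fDiffusionStep}$. The positivity assumption in Proposition~\ref{prop:ffbs-kernel}, together with full support, guarantees the denominator is nonzero. This gives the displayed equation.

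\textbf{Step 2 (identification as a Doob transform).} We observe that at $\fDiffStateFrozen=\fDiffState{\fDiffusionStep}$ the frozen joint kernel coincides with the native joint kernel of Eq.~\eqref{eq:decoder}: the scheduler is evaluated at $\fDiffState{\fDiffusionStep}$ in both, and the categoricals are conditioned on the same state. Marginalizing the reveal set then gives $\fDiffFrozenKernel{\fDiffusionStep+1}{\fDiffState{\fDiffusionStep+1}}{\fDiffState{\fDiffusionStep}}{\fDiffState{\fDiffusionStep}}=\fDiffKernel{\fDiffusionStep+1}{\fDiffState{\fDiffusionStep+1}}{\fDiffState{\fDiffusionStep}}$. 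Hence the right-hand side has exactly the form of Eq.~\eqref{eq:doob-kernel}, with native kernel and lookahead replaced by their frozen counterparts. The only remaining requirement is that $\widehat{h}^{(\fDiffState{\fDiffusionStep})}$ is a genuine lookahead (harmonic) for the frozen chain. Concretely, we must show
\[
\fFrozenDoobHarmonic{\fDiffusionStep}{\fDiffState{\fDiffusionStep}}{\fDiffState{\fDiffusionStep}}=\sum\nolimits_{\fDiffState{}'}\fDiffFrozenKernel{\fDiffusionStep+1}{\fDiffState{}'}{\fDiffState{\fDiffusionStep}}{\fDiffState{\fDiffusionStep}}\,\fFrozenDoobHarmonic{\fDiffusionStep+1}{\fDiffState{\fDiffusionStep}}{\fDiffState{}'}.
\]
This is needed so that the transformed kernel normalizes, and it is what justifies calling the transform exact.

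\textbf{Main obstacle.} The harmonic identity in Step 2 is the one step with content. We verify it by substituting Eq.~\eqref{eq:frozen-lookahead} for $\fFrozenDoobHarmonic{\fDiffusionStep+1}{\fDiffState{\fDiffusionStep}}{\fDiffState{}'}$ and fixing a reveal set $\fRemaskingSet{}$. The sum over the tokens placed at $\fRemaskingSet{}$, weighted by $\prod_{i\in\fRemaskingSet{}}\fDiffCategorical{i}{\cdot}{\fDiffState{\fDiffusionStep}}$, merges with the product over $\fMaskFunc{\fDiffState{}'}$. Together they form the product over $\fMaskFunc{\fDiffState{\fDiffusionStep}}=\fRemaskingSet{}\,\dot\cup\,\fMaskFunc{\fDiffState{}'}$. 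This works because all factors use the same frozen conditioning, and it is precisely where freezing matters. The result is $\fFrozenDoobHarmonic{\fDiffusionStep}{\fDiffState{\fDiffusionStep}}{\fDiffState{\fDiffusionStep}}$, independent of $\fRemaskingSet{}$. Summing the scheduler weights $\fDiffScheduler{\fDiffusionStep+1}{\fRemaskingSet{}}{\fDiffState{\fDiffusionStep}}$ over $\fRemaskingSet{}$ then gives $1$, which closes the argument. The same computation also re-derives that the kernel in Eq.~\eqref{eq:ffbs-kernel} sums to one.
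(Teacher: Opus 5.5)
Your proof is correct and follows the paper's own (implicit) argument: substitute the two identities of Proposition~\ref{prop:frozen-lookahead} into Eq.~\eqref{eq:ffbs-kernel}, and note that the frozen kernel with reference state $\fDiffState{\fDiffusionStep}$ coincides with the native kernel on the first transition. Your explicit one-step harmonic check is correct but not needed, since $\widehat{h}^{(\fDiffState{\fDiffusionStep})}$ is defined as a conditional probability under the frozen Markov chain and so satisfies the recursion automatically. Invoking full support for the nonzero denominator is likewise unnecessary, because the assumed positivity of the local partition sum already suffices.
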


Proposition~\ref{prop:ffbs-frozen-doob} applies to the chain frozen at the current state. At the next step, however, the denoiser is re-frozen at the new state, replacing the expected successor 
\smash{$\fFrozenDoobHarmonic{\fDiffusionStep+1}{\fDiffState{\fDiffusionStep}}{\fDiffState{\fDiffusionStep+1}}$} with \smash{$\fFrozenDoobHarmonic{\fDiffusionStep+1}{\fDiffState{\fDiffusionStep+1}}{\fDiffState{\fDiffusionStep+1}}$}. It is this re-freezing that prevents the composition of the \ffbsKernel kernels from being the Doob transform of a single Markov chain. Since the partition sums are computed at each step, an intermediate state $\fDiffState{\fDiffusionStep+1}$ is thus scored twice, by $\fPartitionFuncClamp{}{\fDiffState{\fDiffusionStep+1}}{\fDiffState{\fDiffusionStep}}$ as a successor (\eg \raisebox{0.1ex}{\circledCaptionTiny{2}} in Figure~\ref{fig:trajectory-bias-choice}) and by $\fPartitionFuncClamp{}{\fDiffState{\fDiffusionStep+1}}{\fDiffState{\fDiffusionStep+1}}$ as the current state (\eg \raisebox{0.1ex}{\circledCaptionTiny{3}} in Figure~\ref{fig:trajectory-bias-choice}) at step $\fDiffusionStep\!+\!1$. We next show how this causes trajectory bias.

\subsection{Trajectory Bias}
\label{sec:problem-trajectory-bias}

\begin{figure*}[t]
    \centering
    \begin{subfigure}[t]{0.21\textwidth}
        \centering
        \resizebox{\linewidth}{!}{
            \begin{tikzpicture}[
    x=1cm,
    y=1cm,
    >={Stealth[length=1.7mm,width=1.2mm]},
    qnode/.style={circle, draw, thick, minimum size=6.5mm, inner sep=0pt,
                  font=\small, fill=yellow!10},
    qstart/.style={qnode, fill=blue!10, draw=blue!60!black},
    qaccept/.style={qnode, double, double distance=0.8pt,
                   fill=orange!15, draw=orange!75!black},
    qdead/.style={qnode, fill=gray!10, draw=gray!55, text=gray!65}
]
\path[use as bounding box] (-0.05,0.05) rectangle (3.15,4.05);
\node[font=\normalsize, anchor=west] at (0.18,1.2)
  {$\fLangSet=a^{*}b^{+}$};
\node[qstart]  (dfa0) at (0.80,2.58) {$q_0$};
\node[qaccept] (dfa1) at (2.30,2.58) {$q_1$};
\node[qdead]   (dfad) at (2.30,0.95) {$q_\varnothing$};
\draw[->, thick] (0.15,2.58) -- (dfa0);
\path (dfa0) edge[->, thick, loop above, looseness=6]
  node[font=\small, above] {$a$} (dfa0);
\path (dfa0) edge[->, thick]
  node[font=\small, above] {$b$} (dfa1);
\path (dfa1) edge[->, thick, loop above, looseness=6]
  node[font=\small, above] {$b$} (dfa1);
\path (dfa1) edge[->, thick, draw=gray!65]
  node[font=\small, right] {$a$} (dfad);
\path (dfad) edge[->, thick, loop below, looseness=6, draw=gray!65]
  node[font=\small, below] {$a,b$} (dfad);
\end{tikzpicture}
        }
        \caption{\dfa for $\fLangSet$.}
        \label{fig:trajectory-bias-dfa}
    \end{subfigure}
    \hfill
    \begin{subfigure}[t]{0.35\textwidth}
        \centering
        \resizebox{\linewidth}{!}{
            \begin{tikzpicture}[
    x=1cm,
    y=1cm,
    >={Stealth[length=1.7mm,width=1.2mm]},
    qnode/.style={circle, draw, thick, minimum size=6.5mm, inner sep=1pt,
                  font=\scriptsize, fill=yellow!10},
    qstart/.style={qnode, fill=blue!10, draw=blue!60!black},
    qaccept/.style={qnode, double, double distance=0.8pt,
                   fill=orange!15, draw=orange!75!black},
    factor/.style={rectangle, draw, fill=black, minimum size=2.1mm, inner sep=0pt},
    tokenvar/.style={rectangle, rounded corners=1pt, draw, thick,
                    minimum width=7mm, minimum height=5.3mm,
                    inner sep=1pt, font=\normalsize, fill=yellow!8},
    statecell/.style={rectangle, draw=black!70, minimum width=7mm,
                     minimum height=6mm, inner sep=0pt, font=\normalsize, fill=white},
    masked/.style={statecell, draw=gray!60, dashed, thick, fill=gray!3},
    probplot/.style={rectangle, rounded corners=1pt, draw=black!0,
                    fill=white, minimum width=10.5mm, minimum height=6.5mm,
                    inner sep=0pt}
]
\newcommand{\tbCrfProbBars}[6]{
  \node[probplot,#6] (#1) at (#2,#3) {};
  \begin{scope}[shift={(#2,#3)}, scale=1.3]
    \draw[black!25, line width=0.3pt] (-0.34,-0.12) -- (0.34,-0.12);
    \fill[magenta!80!black] (-0.30,-0.12) rectangle
      (-0.06,{-0.12 + 0.36*(#4)});
    \fill[green!70!black] (0.06,-0.12) rectangle
      (0.30,{-0.12 + 0.36*(#5)});
    \node[font=\normalsize, anchor=north] at (-0.18,-0.2) {$a$};
    \node[font=\normalsize, anchor=north] at (0.18,-0.13) {$b$};
  \end{scope}
}
\path[use as bounding box] (-0.05,0.05) rectangle (6.05,4.05);
\node[font=\normalsize, anchor=east] at (1.27,3.62)
  {$\fDiffState{0}$};
\node[masked]    (xt1) at (1.90,3.62) {$\bot$};
\node[masked]    (xt2) at (3.30,3.62) {$\bot$};
\node[statecell] (xt3) at (4.70,3.62) {$b$};
\node[qstart]  (cq0) at (1.20,2.78) {$\fAutomtnStateChain\bm{(0)}$};
\node[qnode]   (cq1) at (2.60,2.78) {$\fAutomtnStateChain\bm{(1)}$};
\node[qnode]   (cq2) at (4.00,2.78) {$\fAutomtnStateChain\bm{(2)}$};
\node[qaccept] (cq3) at (5.40,2.78) {$\fAutomtnStateChain\bm{(3)}$};
\node[factor] (f1) at (1.90,2.17) {};
\node[factor] (f2) at (3.30,2.17) {};
\node[factor] (f3) at (4.70,2.17) {};
\node[tokenvar] (y1) at (1.90,1.56) {$Y_1$};
\node[tokenvar] (y2) at (3.30,1.56) {$Y_2$};
\node[tokenvar] (y3) at (4.70,1.56) {$Y_3$};
\draw[thick] (cq0) -- (f1) -- (cq1) -- (f2) --
  (cq2) -- (f3) -- (cq3);
\foreach \i in {1,...,3} {
  \draw[thick] (f\i) -- (y\i);
}
\tbCrfProbBars{pb1}{1.90}{0.45}{0.5}{0.5}{}
\node at (2.58,0.8) {\circledCaption{1}};
\tbCrfProbBars{pb2}{3.30}{0.45}{0.5}{0.5}{}
\tbCrfProbBars{pb3}{4.70}{0.45}{0.0}{1.0}{}
\foreach \i in {1,...,3} {
  \draw[thick] (y\i.south) -- ($(pb\i.north)+(0,1mm)$);
}
\end{tikzpicture}
        }
        \caption{Automaton-constrained posterior.}
        \label{fig:trajectory-bias-posterior}
    \end{subfigure}
    \hfill
    \begin{subfigure}[t]{0.42\textwidth}
        \centering
        \resizebox{\linewidth}{!}{
            \begin{tikzpicture}[
    x=1cm,
    y=1cm,
    >={Stealth[length=1.7mm,width=1.2mm]},
    statecell/.style={rectangle, draw=black!70, minimum width=7mm,
                     minimum height=6mm, inner sep=0pt, font=\small, fill=white},
    committed/.style={statecell, draw=black, very thick, fill=black!8},
    masked/.style={statecell, draw=gray!60, dashed, thick, fill=gray!3},
    probplot/.style={rectangle, rounded corners=1pt, draw=black!0,
                    fill=white, minimum width=10.5mm, minimum height=5mm,
                    inner sep=-3pt}
]
\newcommand{\tbRecondProbBars}[6]{
  \node[probplot,#6] (#1) at (#2,#3) {};
  \begin{scope}[shift={(#2,#3)}, scale=1.3]
    \draw[black!25, line width=0.3pt] (-0.34,-0.12) -- (0.34,-0.12);
    \fill[magenta!80!black] (-0.30,-0.12) rectangle
      (-0.06,{-0.12 + 0.36*(#4)});
    \fill[green!70!black] (0.06,-0.12) rectangle
      (0.30,{-0.12 + 0.36*(#5)});
    \node[font=\normalsize, anchor=north] at (-0.18,-0.2) {$a$};
    \node[font=\normalsize, anchor=north] at (0.18,-0.13) {$b$};
  \end{scope}
}
\path[use as bounding box] (-0.65,0.05) rectangle (6.50,4.05);
\node[font=\normalsize, anchor=east] at (1.13,3.5)
  {$\fDiffState{1}^{a}$};
\node[masked]    (cx1) at (1.60,3.5) {$\bot$};
\node[committed] (cx2) at (3.00,3.5) {$a$};
\node[statecell] (cx3) at (4.40,3.5) {$b$};
\tbRecondProbBars{cold1}{1.60}{2.6}{0.5}{0.5}{draw=red!80, thick}
\node[font=\small, anchor=east, text=red!65!black]
  (coldscore) at (0.99,2.6) {$0.5$};
\tbRecondProbBars{cold2}{3.00}{2.6}{1.0}{0.0}{}
\tbRecondProbBars{cold3}{4.40}{2.6}{0.0}{1.0}{}
\node[font=\normalsize, anchor=west, align=left] at (4.90,3.5)
  {scored\\under\, $\fDiffState{0}$};
\node at (0.7,2.2) {\circledCaption{2}};
\node[font=\normalsize, anchor=east] at (1.13,1.3)
  {$\fDiffState{1}^{a}$};
\node[masked]    (cn1) at (1.60,1.3) {$\bot$};
\node[committed] (cn2) at (3.00,1.3) {$a$};
\node[statecell] (cn3) at (4.40,1.3) {$b$};
\tbRecondProbBars{cnew1}{1.60}{0.45}{0.9}{0.1}
  {draw=red!80, thick}
\node[font=\small, anchor=east, text=red!65!black]
  (cnewscore) at (0.99,0.45) {$0.9$};
\tbRecondProbBars{cnew2}{3.00}{0.45}{1.0}{0.0}{}
\tbRecondProbBars{cnew3}{4.40}{0.45}{0.0}{1.0}{}
\node[font=\normalsize, anchor=west, align=left] at (4.90,1.3)
  {rescored\\under\, $\fDiffState{1}^{a}$};
\node at (0.7,0.05) {\circledCaption{3}};
\draw[->, thick, red!65!black]
  (coldscore.west) to[out=180,in=180] (cnewscore.west);
\end{tikzpicture}
        }
        \caption{Successor $\fDiffState{1}^{a}$ rescoring at $\fDiffusionStep\!=\!0$ and $1$.}
        \label{fig:trajectory-bias-choice}
    \end{subfigure}
    \caption{\FigTrajectoryBiasCaption}
    \label{fig:trajectory-bias-overview}
\end{figure*}

We identify the tilt of the path law induced by composing the \ffbsKernel kernels, relative to the \doob path law, by multiplying them along a trajectory, yielding the native path law and the tilt factor:
\begingroup
\newcommand{\fAssociationBox}[2]{%
\tikzmarknode[inner xsep=1.3pt,inner ysep=0.8pt]{#1}{#2}}
\newcommand{\fAssociationFrac}[2]{%
\frac
{\raisebox{1.5pt}{$\textstyle #1$}}
{\raisebox{-1.5pt}{$\textstyle #2$}}}
\setlength{\abovedisplayskip}{4pt}
\setlength{\belowdisplayskip}{4pt}
\normalfont
\begin{equation*}
\prod\nolimits_{\fDiffusionStep<\fDiffusionStepMax}
\frac{\fPartitionFuncClamp{}{\fDiffState{\fDiffusionStep+1}}{\fDiffState{\fDiffusionStep}}}
{\fPartitionFuncClamp{}{\fDiffState{\fDiffusionStep}}{\fDiffState{\fDiffusionStep}}} 
\;\,=\;\,
\Bigg[
\fAssociationFrac
{\fAssociationBox{partition-pair-one-predecessor}
{\textcolor{lightblue}{\fPartitionFuncClamp{}{\fDiffState{1}}{\fDiffState{0}}}}}
{\fPartitionFuncClamp{}{\fDiffState{0}}{\fDiffState{0}}} 
\times
\fAssociationFrac
{\fAssociationBox{partition-pair-two-predecessor}
{\textcolor{lightblue}{\fPartitionFuncClamp{}{\fDiffState{2}}{\fDiffState{1}}}}}
{\fAssociationBox{partition-pair-one-successor}
{\textcolor{lightred}{\fPartitionFuncClamp{}{\fDiffState{1}}{\fDiffState{1}}}}}
\times
\fAssociationFrac
{\fAssociationBox{partition-pair-three-predecessor}
{\textcolor{lightblue}{\fPartitionFuncClamp{}{\fDiffState{3}}{\fDiffState{2}}}}}
{\fAssociationBox{partition-pair-two-successor}
{\textcolor{lightred}{\fPartitionFuncClamp{}{\fDiffState{2}}{\fDiffState{2}}}}}
\times
\tikzmarknode[inner sep=0pt]{partition-pair-ellipsis}{\cdots}
\times
\fAssociationFrac{\textcolor{lightgreen}{\fPartitionFuncClamp{}{\fDiffState{\fDiffusionStepMax}}{\fDiffState{\fDiffusionStepMax-1}}} }{\fAssociationBox{partition-pair-terminal-successor}
{\textcolor{lightred}{\fPartitionFuncClamp{}{\fDiffState{\fDiffusionStepMax-1}}{\fDiffState{\fDiffusionStepMax-1}}}}}
\Bigg]
\end{equation*}
\begin{tikzpicture}[remember picture,overlay]
\path[
    fill=lightpurple,
    fill opacity=0.05,
    draw=lightpurple!45,
    draw opacity=0.9,
    line width=0.6pt,
    rounded corners=1pt
]
(partition-pair-one-predecessor.north west)
-- (partition-pair-one-predecessor.north east)
-- ($(partition-pair-one-predecessor.south east)+(0,1.7pt)$)
-- ($(partition-pair-one-successor.north west)+(1.7pt,0)$)
-- (partition-pair-one-successor.north east)
-- (partition-pair-one-successor.south east)
-- (partition-pair-one-successor.south west)
-- ($(partition-pair-one-successor.north west)+(0,-1.7pt)$)
-- ($(partition-pair-one-predecessor.south east)+(-1.7pt,0)$)
-- (partition-pair-one-predecessor.south west)
-- cycle;
\path[
    fill=lightpurple,
    fill opacity=0.05,
    draw=lightpurple!45,
    draw opacity=0.9,
    line width=0.6pt,
    rounded corners=1pt
]
(partition-pair-two-predecessor.north west)
-- (partition-pair-two-predecessor.north east)
-- ($(partition-pair-two-predecessor.south east)+(0,1.7pt)$)
-- ($(partition-pair-two-successor.north west)+(1.7pt,0)$)
-- (partition-pair-two-successor.north east)
-- (partition-pair-two-successor.south east)
-- (partition-pair-two-successor.south west)
-- ($(partition-pair-two-successor.north west)+(0,-1.7pt)$)
-- ($(partition-pair-two-predecessor.south east)+(-1.7pt,0)$)
-- (partition-pair-two-predecessor.south west)
-- cycle;
\coordinate (partition-pair-three-cut)
at ($(partition-pair-three-predecessor.south east)+(5pt,-1.5pt)$);
\path[
    fill=lightpurple,
    fill opacity=0.05,
    draw=lightpurple!45,
    draw opacity=0.9,
    line width=0.6pt,
    rounded corners=1pt
]
(partition-pair-three-predecessor.north west)
-- (partition-pair-three-predecessor.north east)
-- ($(partition-pair-three-predecessor.south east)+(0,1.7pt)$)
-- ($(partition-pair-three-cut)+(0,1.3pt)$)
-- ($(partition-pair-three-cut)+(0,-1.3pt)$)
-- ($(partition-pair-three-predecessor.south east)+(-1.7pt,0)$)
-- (partition-pair-three-predecessor.south west)
-- cycle;
\coordinate (partition-pair-terminal-cut)
at ($(partition-pair-terminal-successor.north west)+(-5pt,1.5pt)$);
\path[
    fill=lightpurple,
    fill opacity=0.05,
    draw=lightpurple!45,
    draw opacity=0.9,
    line width=0.6pt,
    rounded corners=1pt
]
($(partition-pair-terminal-cut)+(0,1.3pt)$)
-- ($(partition-pair-terminal-successor.north west)+(1.7pt,0)$)
-- (partition-pair-terminal-successor.north east)
-- (partition-pair-terminal-successor.south east)
-- (partition-pair-terminal-successor.south west)
-- ($(partition-pair-terminal-successor.north west)+(0,-1.7pt)$)
-- ($(partition-pair-terminal-cut)+(0,-1.3pt)$)
-- cycle;
\end{tikzpicture}
\endgroup

The tilt factor is a product of ratios of partition sums.
Since all positions are revealed at $\fDiffusionStepMax$, Eq.~\eqref{eq:ffbs-partition} gives $\textcolor{lightgreen}{\fPartitionFuncClamp{}{\fDiffState{\fDiffusionStepMax}}{\fDiffState{\fDiffusionStepMax-1}}} \!=\! \fIndicatorFunc{\fDiffState{\fDiffusionStepMax}\!\in\!\fLangSet}$, the \doob terminal weight of Eq.~\eqref{eq:doob-path-law}. Each highlighted pair holds the two scores of an intermediate state $\fDiffState{\fDiffusionStep}$, scored as a successor $\textcolor{lightblue}{\fPartitionFuncClamp{}{\fDiffState{\fDiffusionStep}}{\fDiffState{\fDiffusionStep-1}}}$ and as the current state $\textcolor{lightred}{\fPartitionFuncClamp{}{\fDiffState{\fDiffusionStep}}{\fDiffState{\fDiffusionStep}}}$. We call the ratio $\fPartitionFuncClamp{}{\fDiffState{\fDiffusionStep}}{\fDiffState{\fDiffusionStep}}/\fPartitionFuncClamp{}{\fDiffState{\fDiffusionStep}}{\fDiffState{\fDiffusionStep-1}}$ the \emph{re-freezing ratio} of $\fDiffState{\fDiffusionStep}$. It measures how re-freezing changes the valid mass estimate of $\fDiffState{\fDiffusionStep}$'s completions. If it is greater than 1, then the valid mass of $\fDiffState{\fDiffusionStep}$ was underestimated, so paths through $\fDiffState{\fDiffusionStep}$ are under-weighted (and vice versa if less than 1):

\begin{proposition}[Re-Freezing Tilt]\label{prop:re-freezing-tilt}
The step-exact decoder samples the \doob path law tilted by the inverse re-freezing ratios of its intermediate states (for trajectories in common support):
\begingroup
\setlength{\abovedisplayskip}{3pt}
\setlength{\belowdisplayskip}{0pt}
\normalfont
\begin{align}
\fFFBSPathLaw{1:\fDiffusionStepMax}{\fDiffState{1:\fDiffusionStepMax}}{\fDiffState{0}}
=\;&
\fDoobPathLaw{1:\fDiffusionStepMax}{\fDiffState{1:\fDiffusionStepMax}}{\fDiffState{0}}
\times 
\frac{\fDoobNormaliser(\fDiffState{0})}
{\fPartitionFuncClamp{}{\fDiffState{0}}{\fDiffState{0}}}
\times 
\prod\nolimits_{\fDiffusionStep=1}^{\fDiffusionStepMax-1}
\frac{\fPartitionFuncClamp{}{\fDiffState{\fDiffusionStep}}{\fDiffState{\fDiffusionStep-1}}}
{\fPartitionFuncClamp{}{\fDiffState{\fDiffusionStep}}{\fDiffState{\fDiffusionStep}}}.
\label{eq:clamped-mass-tilt}
\end{align}
\endgroup
\begin{proof}[Proof sketch]
Substitute the \ffbsKernel and \doob kernels into the ratio of their path laws.
\end{proof}
\end{proposition}

Since $\fDoobNormaliser(\fDiffState{0})/\fPartitionFuncClamp{}{\fDiffState{0}}{\fDiffState{0}}$ is constant, the two path laws align if every re-freezing ratio equals $1$, \ie
\begingroup
\setlength{\abovedisplayskip}{3pt}
\setlength{\belowdisplayskip}{3pt}
\normalfont
\begin{equation*}
\qquad\qquad\qquad\qquad\qquad\qquad\qquad
\textcolor{lightblue}{\fPartitionFuncClamp{}{\fDiffState{t}}{\fDiffState{t-1}}}
=
\textcolor{lightred}{\fPartitionFuncClamp{}{\fDiffState{t}}{\fDiffState{t}}}, 
\qquad\qquad\quad
\text{for every }\; 0\!<\!\fDiffusionStep\!<\!\fDiffusionStepMax.
\end{equation*}
\endgroup
However, this generally does not hold, since unmasking tokens changes the denoiser's predictions for $\fMaskFunc{\fDiffState{\fDiffusionStep}}$, and this is structural to the working of \mdlms itself. 
Trajectory bias thus manifests due to scoring the same positions under two differently conditioned denoisers. Corollary~\ref{thm:trajectory-bias} gives the exact condition, which only requires the product of ratios to be constant.

\begin{corollary}[Trajectory Exactness]\label{thm:trajectory-bias}
The step-exact decoder is unbiased (Definition~\ref{def:unbiased-decoder}) if and only if the product of re-freezing ratios is constant over {\normalfont$\Omega_{\fLangSet}(\fDiffState{0})$}, \ie for every {\normalfont$\fDiffState{1:\fDiffusionStepMax}\in\Omega_{\fLangSet}(\fDiffState{0})$}:
\begingroup
\setlength{\abovedisplayskip}{3pt}
\setlength{\belowdisplayskip}{0pt}
\normalfont
\begin{equation}\label{eq:trajectory-exactness}
\prod\nolimits_{\fDiffusionStep=1}^{\fDiffusionStepMax-1}
\frac{\fPartitionFuncClamp{}{\fDiffState{\fDiffusionStep}}{\fDiffState{\fDiffusionStep}}}
{\fPartitionFuncClamp{}{\fDiffState{\fDiffusionStep}}{\fDiffState{\fDiffusionStep-1}}}
\;=\;
\frac{\fDoobNormaliser(\fDiffState{0})}
{\fPartitionFuncClamp{}{\fDiffState{0}}{\fDiffState{0}}}.
\end{equation}
\endgroup
\end{corollary}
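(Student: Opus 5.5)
The plan is to read the corollary off Proposition~\ref{prop:re-freezing-tilt}, together with Proposition~\ref{prop:doob-path-law}, by comparing ratios of path probabilities. Write
\[
\rho(\fDiffState{1:\fDiffusionStepMax}) \triangleq \prod\nolimits_{\fDiffusionStep=1}^{\fDiffusionStepMax-1}\fPartitionFuncClamp{}{\fDiffState{\fDiffusionStep}}{\fDiffState{\fDiffusionStep}}/\fPartitionFuncClamp{}{\fDiffState{\fDiffusionStep}}{\fDiffState{\fDiffusionStep-1}}
\]
for the product of re-freezing ratios. For any trajectory in the common support, Eq.~\eqref{eq:clamped-mass-tilt} combined with Eq.~\eqref{eq:doob-path-law} gives
\[
\fFFBSPathLaw{1:\fDiffusionStepMax}{\fDiffState{1:\fDiffusionStepMax}}{\fDiffState{0}}
=\fNativePathLaw{1:\fDiffusionStepMax}{\fDiffState{1:\fDiffusionStepMax}}{\fDiffState{0}}\,\fIndicatorFunc{\fDiffState{\fDiffusionStepMax}\in\fLangSet}\big/\big(\fPartitionFuncClamp{}{\fDiffState{0}}{\fDiffState{0}}\,\rho(\fDiffState{1:\fDiffusionStepMax})\big).
\]
So on $\Omega_{\fLangSet}(\fDiffState{0})$ the step-exact law is the native law reweighted by $1/\rho$.

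The first step is to settle supports, since Definition~\ref{def:unbiased-decoder} also requires the support to equal $\Omega_{\fLangSet}(\fDiffState{0})$. By the full-support assumption, every partition sum with at least one valid completion is positive. A trajectory in $\Omega_{\fLangSet}(\fDiffState{0})$ ends in $\fDiffState{\fDiffusionStepMax}\in\fLangSet$, and each $\fDiffState{\fDiffusionStep}$ agrees with $\fDiffState{\fDiffusionStepMax}$ on its unmasked positions. Hence every $\fPartitionFuncClamp{}{\fDiffState{\fDiffusionStep}}{\fDiffState{\fDiffusionStep}}$ and every $\fPartitionFuncClamp{}{\fDiffState{\fDiffusionStep+1}}{\fDiffState{\fDiffusionStep}}$ is strictly positive, because $\fDiffState{\fDiffusionStepMax}$ itself is a valid completion carrying positive weight. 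Every factor in Eq.~\eqref{eq:ffbs-kernel} is then positive, so $\Omega_{\fLangSet}(\fDiffState{0})$ lies in the support of the step-exact path law. Conversely, a trajectory of positive step-exact probability has native-kernel factors that are positive, and its terminal factor $\fPartitionFuncClamp{}{\fDiffState{\fDiffusionStepMax}}{\fDiffState{\fDiffusionStepMax-1}}=\fIndicatorFunc{\fDiffState{\fDiffusionStepMax}\in\fLangSet}$ is positive, so it lies in $\Omega_{\fLangSet}(\fDiffState{0})$. The two supports therefore coincide, and the support clause of unbiasedness always holds. This makes the corollary an equivalence about the ratio condition alone, with all quantities finite and nonzero.

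Next I would prove the equivalence itself. For $\fDiffState{1:\fDiffusionStepMax}',\fDiffState{1:\fDiffusionStepMax}''\in\Omega_{\fLangSet}(\fDiffState{0})$, the displayed formula gives
\[
\frac{\fFFBSPathLaw{1:\fDiffusionStepMax}{\fDiffState{1:\fDiffusionStepMax}'}{\fDiffState{0}}}{\fFFBSPathLaw{1:\fDiffusionStepMax}{\fDiffState{1:\fDiffusionStepMax}''}{\fDiffState{0}}}
=\frac{\fNativePathLaw{1:\fDiffusionStepMax}{\fDiffState{1:\fDiffusionStepMax}'}{\fDiffState{0}}}{\fNativePathLaw{1:\fDiffusionStepMax}{\fDiffState{1:\fDiffusionStepMax}''}{\fDiffState{0}}}\cdot\frac{\rho(\fDiffState{1:\fDiffusionStepMax}'')}{\rho(\fDiffState{1:\fDiffusionStepMax}')}.
\]
Unbiasedness therefore holds if and only if $\rho$ is constant on $\Omega_{\fLangSet}(\fDiffState{0})$. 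It remains to identify that constant as $c\triangleq\fDoobNormaliser(\fDiffState{0})/\fPartitionFuncClamp{}{\fDiffState{0}}{\fDiffState{0}}$, which is the one non-routine point. Suppose $\rho\equiv c'$. Sum the displayed formula over $\Omega_{\fLangSet}(\fDiffState{0})$: the step-exact law has total mass $1$, and by Definition~\ref{def:doob-kernel} the sum of $\fNativePathLaw{1:\fDiffusionStepMax}{\cdot}{\fDiffState{0}}\,\fIndicatorFunc{\fDiffState{\fDiffusionStepMax}\in\fLangSet}$ equals $\fDoobHarmonic{0}(\fDiffState{0})=\fDoobNormaliser(\fDiffState{0})$. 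This yields $1=\fDoobNormaliser(\fDiffState{0})/(\fPartitionFuncClamp{}{\fDiffState{0}}{\fDiffState{0}}\,c')$, so $c'=c$ as required. Conversely, if Eq.~\eqref{eq:trajectory-exactness} holds, then $\rho$ is constant and unbiasedness follows directly from the ratio identity above.
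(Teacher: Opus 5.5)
Your proposal is correct and follows the same route the paper intends. The paper gives no separate proof: it reads the corollary off Proposition~\ref{prop:re-freezing-tilt}, noting that $\fDoobNormaliser(\fDiffState{0})/\fPartitionFuncClamp{}{\fDiffState{0}}{\fDiffState{0}}$ is constant, so the two path laws agree exactly when the tilt factor is identically $1$. Your explicit proof that the supports coincide, and your normalization step that forces the constant value of $\rho$ to equal $\fDoobNormaliser(\fDiffState{0})/\fPartitionFuncClamp{}{\fDiffState{0}}{\fDiffState{0}}$, make rigorous what the paper leaves implicit.
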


\begin{corollary}[No Bias for Vacuous Constraints]\label{lemma:no-bias-for-vacuous-constraints}
For a vacuous regular constraint $\fLangSet\!=\!\fVocab^{\fPositionMax}$ (does not impose any restrictions), Eq.~\eqref{eq:trajectory-bias} holds for every $\fDiffusionStep\!<\!\fDiffusionStepMax$, and there is no trajectory bias.
\begin{proof}
Formal proof is in Appendix~\ref{app:proof-vacuous-constraints}.
\end{proof}
\end{corollary}

\begin{corollary}[No Bias for One-Step Reveal All]\label{lemma:no-bias-for-one-step-reveal}
When all masked positions in every initial state {\normalfont$\fDiffState{0}$} are revealed in a single step, \ie $\fDiffusionStepMax\!=\!1$, Eq.~\eqref{eq:trajectory-bias} holds and there is no trajectory bias.
\begin{proof}
Formal proof is in Appendix~\ref{app:proof-one-step-reveal}.
\end{proof}
\end{corollary}

The trajectory bias introduced by the tilt is correctable because: (1)~the tilt factorizes over the steps as shown in Eq.~\eqref{eq:clamped-mass-tilt} allowing it to be corrected incrementally at each step; and 
(2)~a re-freezing ratio can be computed efficiently and exactly at its step $\fDiffusionStep\!-\!1$ when $\fDiffState{\fDiffusionStep}$ is reached by computing the clamped partition sum $\fPartitionFuncClamp{}{\fDiffState{\fDiffusionStep}}{\fDiffState{\fDiffusionStep-1}}$ from the \ffbs messages that sampled $\fDiffState{\fDiffusionStep}$, and the next step's local partition sum $\fPartitionFuncClamp{}{\fDiffState{\fDiffusionStep}}{\fDiffState{\fDiffusionStep}}$ from the denoiser query at $\fDiffState{\fDiffusionStep}$ that is needed at the next step $\fDiffusionStep$ anyway.

\section{\tool: Automaton-Twisted Sequential Monte Carlo}
\label{sec:method}


Since the re-freezing tilt factorizes over steps
(Proposition~\ref{prop:re-freezing-tilt}), it can be corrected
incrementally at each step. We first show mathematically that the tilt
can be corrected via our proposed \fkc corrector, and then we describe
our \tool algorithm implementing it.

\subsection{\fkc Corrector for the Re-Freezing Tilt}
\label{sec:method-increment}

We use the tractable \ffbsKernel kernels from Proposition~\ref{prop:ffbs-frozen-doob} as the proposals:    
\(\fFeynmanKacKernel{\fDiffusionStep+1}
{\fDiffState{\fDiffusionStep+1}}
{\fDiffState{\fDiffusionStep}}
\!\triangleq\!
\fFFBSKernel{\fDiffusionStep+1}
{\fDiffState{\fDiffusionStep+1}}
{\fDiffState{\fDiffusionStep};\fLangSet}\),
and correct for the re-freezing ratio formalized in Section~\ref{sec:problem-trajectory-bias} with an exact twist.

\begin{definition}[\fkc Twist]\label{def:feynman-kac-twist-function}
The \fkc twist is given by:
\begingroup
\setlength{\abovedisplayskip}{2pt}
\setlength{\belowdisplayskip}{0pt}
\normalfont
\begin{equation*}
\fFeynmanKacTwistFunction{\fDiffusionStep}(\fDiffState{}) 
\;\triangleq\; 
\begin{cases}
\fPartitionFuncClamp{}{\fDiffState{}}{\fDiffState{}}, & \fDiffusionStep<\fDiffusionStepMax, \\
\fIndicatorFunc{\fDiffState{}\in\fLangSet}, & \fDiffusionStep=\fDiffusionStepMax
\end{cases}
\end{equation*}
\endgroup
\end{definition}
The ideal twist is the exact lookahead $\fDoobHarmonic{\fDiffusionStep}(\fDiffState{})$ from Definition~\ref{def:doob-kernel}, which requires future denoiser calls and is thus intractable.
The local partition sum $\fPartitionFuncClamp{}{\fDiffState{}}{\fDiffState{}}$ is the frozen lookahead \smash{$\fFrozenDoobHarmonic{\fDiffusionStep}{\fDiffState{}}{\fDiffState{}}$} from Proposition~\ref{prop:frozen-lookahead}, \ie a tractable surrogate of $\fDoobHarmonic{\fDiffusionStep}(\fDiffState{})$ from a single denoiser call at $\fDiffState{}$ at step $\fDiffusionStep$.
From Definition~\ref{def:feynman-kac-twist-function}, the \fkc twists are just the local partition sums at each step, thus they are exactly computable using the same \ffbs already used for sampling the proposals $\fFeynmanKacKernelSymbol_{\fDiffusionStep+1}$. The potentials $\fFeynmanKacPotentialSymbol_{\fDiffusionStep+1}$ of the \fkc model can then be constructed from the twists by Eq.~\eqref{eq:twisted-potentials} as:
\begingroup
\setlength{\abovedisplayskip}{4pt}
\setlength{\belowdisplayskip}{4pt}
\normalfont
\begin{equation*}
\fFeynmanKacPotentialSymbol_{0}(\fDiffState{0})
=
\fPartitionFuncClamp{}{\fDiffState{0}}{\fDiffState{0}},
\quad\quad
\fFeynmanKacPotential{\fDiffusionStep+1}{\fDiffState{\fDiffusionStep}}{\fDiffState{\fDiffusionStep+1}}
=
\frac{\fDiffKernel{\fDiffusionStep+1}{\fDiffState{\fDiffusionStep+1}}{\fDiffState{\fDiffusionStep}}}
{\fFeynmanKacKernel{\fDiffusionStep+1}{\fDiffState{\fDiffusionStep+1}}{\fDiffState{\fDiffusionStep}}}
\times
\frac{\fFeynmanKacTwistFunction{\fDiffusionStep+1}(\fDiffState{\fDiffusionStep+1})}
{\fFeynmanKacTwistFunction{\fDiffusionStep}(\fDiffState{\fDiffusionStep})}
\,=\,
\frac{\fPartitionFuncClamp{}{\fDiffState{\fDiffusionStep+1}}{\fDiffState{\fDiffusionStep+1}}}
{\fPartitionFuncClamp{}{\fDiffState{\fDiffusionStep+1}}{\fDiffState{\fDiffusionStep}}},
\end{equation*}
\endgroup
\ie the re-freezing ratio of $\fDiffState{\fDiffusionStep+1}$, for $\fDiffusionStep\!<\!(\fDiffusionStepMax\!-\!1)$, and $\fFeynmanKacPotentialSymbol_{\fDiffusionStepMax}\!=\!1$ since $\fPartitionFuncClamp{}{\fDiffState{\fDiffusionStepMax}}{\fDiffState{\fDiffusionStepMax-1}}=\fIndicatorFunc{\fDiffState{\fDiffusionStepMax}\in\fLangSet}$.

\begin{theorem}[Twist Fixes the Tilt]\label{thm:twist-fixes-the-tilt}
The \fkc twists correct the re-freezing tilt (Proposition~\ref{prop:re-freezing-tilt}), thus making the \fkc model target the \doob path law:
\begingroup
\setlength{\abovedisplayskip}{2pt}
\setlength{\belowdisplayskip}{0pt}
\normalfont
\begin{equation*}
\fFeynmanKacPotentialSymbol_{0}(\fDiffState{0})
\prod\nolimits_{\fDiffusionStep<\fDiffusionStepMax}
\fFeynmanKacKernel{\fDiffusionStep+1}{\fDiffState{\fDiffusionStep+1}}{\fDiffState{\fDiffusionStep}}\;
\fFeynmanKacPotential{\fDiffusionStep+1}{\fDiffState{\fDiffusionStep}}{\fDiffState{\fDiffusionStep+1}}
\;\;=\;\;
\fDoobPathLaw{1:\fDiffusionStepMax}{\fDiffState{1:\fDiffusionStepMax}}{\fDiffState{0}}\;
\fDoobNormaliser(\fDiffState{0})
\end{equation*}
\endgroup
\begin{proof}[Proof sketch]
Substitute for $\fFeynmanKacKernelSymbol_{\fDiffusionStep+1}$ and $\fFeynmanKacPotentialSymbol_{\fDiffusionStep+1}$ in Definition~\ref{def:feynman-kac-model}. Formal proof in Appendix~\ref{app:proof-twist-fixes-the-tilt}.
\end{proof}
\end{theorem}

In Figure~\ref{fig:trajectory-bias-overview}, weighting $\fDiffState{1}^{a}$ by its re-freezing ratio $9/5$ turns the step-exact choice $1/3\!:\!2/3$ into the \doob choice $9/19\!:\!10/19$.
\subsection{\tool Algorithm}
\label{sec:method-algorithm}

\begin{algorithm}[t]
\AlgorithmFontSize
\caption{\AlgorithmTwisterCaption}
\begin{algorithmic}[1]
    \setlength{\itemsep}{1.4pt}
    \Require \dfa $\fAutomtn$ recognizing $\fLangSet$;\;
    initial state $\fDiffState{0}$;\;
    number of denoising steps $\fDiffusionStepMax$;\;
    number of \smc particles $\fSMCParticleIndexMax$
    \State Query denoiser at $\fDiffState{0}$;\;
    \State $(\fDiffState{0}^{\fSMCParticleIndex},
    \fSMCParticleWeight{0}^{\fSMCParticleIndex})
    \gets(\fDiffState{0},1/\fSMCParticleIndexMax)$
    for all $\fSMCParticleIndex\in[\fSMCParticleIndexMax]$
    \For{$\fDiffusionStep=0,\ldots,\fDiffusionStepMax\!-\!1$}
        \Statex \AtsmcStage{lightpurple}{STEP-EXACT FFBS PROPAGATE}{8}
        \ParallelFor{$\fSMCParticleIndex=1,\ldots,\fSMCParticleIndexMax$}
            \State Sample
            $\fDiffState{\fDiffusionStep+1}^{\fSMCParticleIndex}
            \sim
            \fFeynmanKacKernel{\fDiffusionStep+1}{\cdot}
            {\fDiffState{\fDiffusionStep}^{\fSMCParticleIndex}}$
            \Comment{use cached \ffbs messages}
            \State $\widehat Z_{\fDiffusionStep+1}^{\fSMCParticleIndex}
            \gets
            \fPartitionFuncClamp{}
            {\fDiffState{\fDiffusionStep+1}^{\fSMCParticleIndex}}
            {\fDiffState{\fDiffusionStep}^{\fSMCParticleIndex}}$
            \Comment{predecessor clamped-partition sum}
            \If{$\fDiffusionStep<\fDiffusionStepMax\!-\!1$}
                \State Query denoiser at
                $\fDiffState{\fDiffusionStep+1}^{\fSMCParticleIndex}$;\;
                $Z_{\fDiffusionStep+1}^{\fSMCParticleIndex}
                \gets
                \fPartitionFuncClamp{}
                {\fDiffState{\fDiffusionStep+1}^{\fSMCParticleIndex}}
                {\fDiffState{\fDiffusionStep+1}^{\fSMCParticleIndex}}$
                \Comment{successor partition, cache \ffbs messages}
            \Else
                \State $Z_{\fDiffusionStepMax}^{\fSMCParticleIndex}
                \gets
                \fIndicatorFunc{
                \fDiffState{\fDiffusionStepMax}^{\fSMCParticleIndex}
                \in\fLangSet}$
            \EndIf
        \EndParallelFor
        \Statex \AtsmcStage{lightpurple}{RE-FREEZING CORRECTION}{6}
        \ParallelFor{$\fSMCParticleIndex=1,\ldots,\fSMCParticleIndexMax$}
            \State
            $\fFeynmanKacPotentialSymbol_{\fDiffusionStep+1}^{\fSMCParticleIndex}
            \gets
            {Z_{\fDiffusionStep+1}^{\fSMCParticleIndex}}/
            {\widehat Z_{\fDiffusionStep+1}^{\fSMCParticleIndex}}$
            \State $\fSMCParticleWeightUNorm{\fDiffusionStep+1}^{\fSMCParticleIndex}
            \gets
            \fSMCParticleWeight{\fDiffusionStep}^{\fSMCParticleIndex}
            \fFeynmanKacPotentialSymbol_{\fDiffusionStep+1}^{\fSMCParticleIndex}$
        \EndParallelFor
        \State $\fSMCParticleWeight{\fDiffusionStep+1}^{\fSMCParticleIndex}
        \gets
        \fSMCParticleWeightUNorm{\fDiffusionStep+1}^{\fSMCParticleIndex}/
        {\sum_{\fSMCParticleIndex'=1}^{\fSMCParticleIndexMax}
        \fSMCParticleWeightUNorm{\fDiffusionStep+1}^{\fSMCParticleIndex'}}$
        for all $\fSMCParticleIndex\in[\fSMCParticleIndexMax]$
    \EndFor
    \State \Return
    $(\fDiffState{\fDiffusionStepMax}^{1:\fSMCParticleIndexMax},
            \fSMCParticleWeight{\fDiffusionStepMax}^{1:\fSMCParticleIndexMax})$
\end{algorithmic}
\alglanguage{pseudocode}
\end{algorithm}

Algorithm~\ref{alg:twister} captures \tool's working. It requires a \dfa $\fAutomtn$ (lifted to the model's vocabulary $\fVocab$) that recognizes the regular language $\fLangSet$; an initial state $\fDiffState{0}$, \ie a token block containing the prompt and masked tokens; the number of denoising steps $\fDiffusionStepMax$; and the number of \smc particles $\fSMCParticleIndexMax$. 

\tool starts by querying the denoiser at $\fDiffState{0}$,
and caching its \ffbs messages. Following this, it initializes all $\fSMCParticleIndexMax$ \smc particles to $\fDiffState{0}$ with weight $1/\fSMCParticleIndexMax$. The cached \ffbs messages are used to sample from the \fkc proposal \smash{\footnotesize{$\fFeynmanKacKernelSymbol_{\fDiffusionStep+1}$}} (Proposition~\ref{prop:ffbs-frozen-doob}), and compute the predecessor clamped-partition sum \smash{\footnotesize$\widehat Z_{\fDiffusionStep+1}^{\fSMCParticleIndex}$}. For each propagated nonterminal particle, the denoiser is queried at the successor state to compute the successor partition sum \smash{\footnotesize$Z_{\fDiffusionStep+1}^{\fSMCParticleIndex}$} and cache the \ffbs messages for the next step.
The ratio
\smash{\footnotesize$\fFeynmanKacPotentialSymbol_{\fDiffusionStep+1}^{\fSMCParticleIndex}
=Z_{\fDiffusionStep+1}^{\fSMCParticleIndex}/
\widehat Z_{\fDiffusionStep+1}^{\fSMCParticleIndex}$} gives the exact \fkc potential required for fixing the re-freezing tilt (Section~\ref{sec:method-increment}). Each incoming particle weight is multiplied by this potential, and the resulting weights are normalized. Particles are jointly resampled with their caches and assigned uniform weight when resampling is triggered (described in Appendix~\ref{app:twister-full}).

\MyPara{Complexity}
One weighted-automaton pass costs
$\fBigO{\fPositionMax\fSize{\fStates}\fSize{\fVocab}}$, and so, the total automaton cost for $\fSMCParticleIndexMax$ \smc particles and $\fDiffusionStepMax$ denoising steps is $\fBigO{\fSMCParticleIndexMax\fDiffusionStepMax
\fPositionMax\fSize{\fStates}\fSize{\fVocab}}$, in addition to the denoiser evaluations. \tool is parallelizable over the step-exact \ffbs propagate and re-freezing correction phases, since the particles are conditionally independent until weight normalization and resampling. 

\subsection{Trajectory Bias Experiment}
\label{sec:method-eval}

We measure trajectory bias using regular language constraints defined over token classes. The number of token classes is an experiment configured parameter. Next, we rank the tokens by the probabilities assigned to them by the model conditioned on a fully masked input. We select the most probable tokens and then partition them equally across the token classes. Tokens outside this pool are assigned to the token class $x$. A model's generation can then be projected onto a small alphabet of symbols, \ie token classes $\{a,b,c,d,x\}$, \etc.

We use such token classes to build regular expressions which are then used to constrain the model's generation. We permute the token classes to ensure that the results do not depend on a particular token-to-class mapping. Trajectory bias is then measured by using either \cite{dang2026constrained}, $\gnardSmcFour$, or $\gnardSmcEight$ to sample from the automaton-constrained distribution subject to the regular expression; relative to the native decoder conditioned on constraint satisfaction approximated using rejection sampling (Eq.~\eqref{eq:terminal-distribution}).
We use pairwise total variation distance to measure the similarity between the two distributions.
We also compare independent splits of the rejection samples to estimate the finite-sample noise floor. Results are averaged with equal weight across partition configurations. Thus, deviations above the rejection noise floor measure trajectory bias rather than token-level validity or Monte Carlo error.
For each model and $\fDiffusionStepMax\in\{1,2,4,8,16\}$ denoising steps, we generate 40,000 native samples and 10,000 constrained samples at temperature~$1$ under random remasking. The one-step reveal for $\fDiffusionStepMax=1$ is an experimental setting of Corollary~\ref{lemma:no-bias-for-one-step-reveal}. We evaluate on the models \dream, \dreamInstruct, \dreamCoderInstruct, \llada, \lladaInstruct, and \kuleshov.

\section{Related Work}
\label{sec:related}

\MyPara{Constrained Decoding for Autoregressive Models}
%
Most recent approaches consist of maintaining a parser or automaton over the generated prefix and masking out invalid next tokens at each step~\citep{willardOutlines23, kooAutomata24, beurerDomino24, gengGrammar23, ugaresyncode24, dongXGrammar24}.
While this guarantees constraint satisfaction by construction, ~\citet{parkGrammarAligned24} showed that such locally constrained decoding distorts the model's distribution over valid sequences.
To address this, \citet{loula2025syntacticsemanticcontrollarge} approximate the globally constrained posterior with \smc, combining locally constrained decoding with incremental reweighting and resampling to correct the resulting bias.
\citet{dangMitigating26} further improve this setup by constructing stronger automaton-based proposals that encode future constraint satisfaction, yielding faster \smc convergence with fewer particles.
Like these approaches, we use \smc to target the model's globally constrained distribution rather than local constraint enforcement alone, but we focus on the less-explored domain of constrained decoding for \mdlms. Unlike locally constrained \llm decoding, which ignores future valid mass, step-exact \mdlm decoding accounts for this mass exactly, but under a denoiser frozen at the current state.

\MyPara{Constrained Decoding for Diffusion Models}
\citet{sureshdingo25} proposed \dingo as the first constrained decoder for \mdlms with formal guarantees.
\dingo uses dynamic programming over automaton states to select a constraint-satisfying block that maximizes the per-step mean-field probability.
\citet{dang2026constrained} further generalize \dingo's approach by replacing per-step MAP with exact sampling from the automaton-constrained mean-field posterior at each denoising step.
Like \dingo, their method guarantees that the constraint is satisfied by construction, but it further supports stochastic sampling under arbitrary remasking schedules.
Crucially, both the \dingo and \citet{dang2026constrained} still constrain generation \textit{only} at individual steps.
We show that such local exactness does not, however, preserve the model's relative probabilities over valid multi-step denoising trajectories.
We address this gap by introducing \tool as the first automaton-twisted \smc constrained decoder for \mdlms that targets the trajectory-exact globally constrained path law.
\citet{hasan2026discretefeynmankaccorrectors} apply Feynman--Kac \smc correctors to steer discrete diffusion sampling at inference time through temperature scaling or an external reward.
\citet{luo2026selfrewardingsequentialmontecarlo} likewise use \smc for \mdlms, but reweight particles by trajectory-level confidence to improve sample quality.
In contrast, \tool uses \smc to enforce formal constraints.

\section{Conclusion}
\label{sec:conclusion}

We showed that sampling locally from an \mdlm's automaton-constrained posterior, despite being exact at each denoising step, still tilts the distribution relative to the globally constrained distribution when composed across steps. We derived this trajectory bias as a product of local and clamped-partition sum ratios, which are a consequence of rescoring the constraint-valid completions of the same intermediate states, but under different
denoiser conditionings.
We showed that the tilt factorizes across the denoising steps, and thus can be corrected incrementally at each step.
We then introduced \tool, an automaton-twisted \smc constrained decoder that corrects this tilt by using the step-exact decoder as its proposal. Further, for regular-language constraints, its incremental \fkc potentials are exactly computable from partition sums already obtained during step-exact sampling. 

\subsection*{AI use statement}


We have not used generative AI tools for this work.

\subsection*{Reproducibility statement}


Our code and data will be made publicly available upon acceptance of the paper.

\bibliography{bib}
\bibliographystyle{iclr2027_conference}

\clearpage
\appendix
\startcontents[appendices]
\section*{Appendix Contents}
\printcontents[appendices]{}{1}{}

\clearpage
\section{Proof for \ffbsKernelProp Kernel}
\label{app:proof-ffbs-kernel}

\begingroup
\renewcommand{\thetheorem}{\ref{prop:ffbs-kernel}}
\begin{proposition}[\ffbsKernelProp Kernel]
For the native kernels
\(
\normalfont
\fDiffKernelSymbol_{\fDiffusionStep+1}
\)
and partition sums {\normalfont$\fPartitionFuncClamp{}{\fDiffState{}}{\fDiffState{}}$} with
{\normalfont$\fPartitionFuncClamp{}{\fDiffState{}}{\fDiffState{}}>0$} for all {\normalfont$\fDiffState{}$} in the support of {\normalfont$\fDiffKernelSymbol_{\fDiffusionStep+1}$},
the \ffbsKernel transition kernel is given by:
\begingroup
\setlength{\abovedisplayskip}{4pt}
\setlength{\belowdisplayskip}{4pt}
\normalfont
\begin{align*}
    \fFFBSKernel{\fDiffusionStep+1}{\fDiffState{\fDiffusionStep+1}}{\fDiffState{\fDiffusionStep};\fLangSet} \;=\;&
    \fDiffKernel{\fDiffusionStep+1}{\fDiffState{\fDiffusionStep+1}}{\fDiffState{\fDiffusionStep}} \times
    \frac{\fPartitionFuncClamp{}{\fDiffState{\fDiffusionStep+1}}{\fDiffState{\fDiffusionStep}}}{\fPartitionFuncClamp{}{\fDiffState{\fDiffusionStep}}{\fDiffState{\fDiffusionStep}}}.
\end{align*}
\endgroup
\begin{proof}
The joint \ffbsKernel kernel 
\(
\normalfont
\fFFBSKernel{\fDiffusionStep+1}{\fDiffState{}',\fRemaskingSet{}}{\fDiffState{\fDiffusionStep};\fLangSet}
\)
is given by:
\begingroup
\setlength{\abovedisplayskip}{4pt}
\setlength{\belowdisplayskip}{2pt}
\normalfont
\begin{flalign*}
\everymath={\displaystyle}
\normalfont
\phantom{\fDiffKernelSymbol_{\fDiffusionStep+1}}
&=\;
\fDiffScheduler{\fDiffusionStep+1}{\fRemaskingSet{}}{\fDiffState{\fDiffusionStep}}\;
\fPositionEquiv{\fDiffState{}'}{\fDiffState{\fDiffusionStep}}{\neg\fRemaskingSet{}}\;\times\;
\sum\nolimits_{\fDiffCleanState{},\fAutomtnStateChain}\;
\fDenoiserPosterior{\fDiffCleanState{},\fAutomtnStateChain}{\fDiffState{\fDiffusionStep};\fLangSet}\;
\fPositionEquiv{\fDiffState{}'}{\fDiffCleanState{}}{\fRemaskingSet{}}
&&\\
&=\;
\frac{
\fDiffScheduler{\fDiffusionStep+1}{\fRemaskingSet{}}{\fDiffState{\fDiffusionStep}}\;
\fPositionEquiv{\fDiffState{}'}{\fDiffState{\fDiffusionStep}}{\neg\fRemaskingSet{}}
}{\fPartitionFuncClamp{}{\fDiffState{\fDiffusionStep}}{\fDiffState{\fDiffusionStep}}}\;\times\;
\sum\nolimits_{\fDiffCleanState{},\fAutomtnStateChain}\;
\fDenoiserPosterior{\fDiffCleanState{}}{\fDiffState{\fDiffusionStep}}\;
\fAutomtnConstraint{\fAutomtn}{\fDiffCleanState{}}{\fAutomtnStateChain}\;
\fPositionEquiv{\fDiffState{}'}{\fDiffCleanState{}}{\fRemaskingSet{}}
&\text{by Eq.~\eqref{eq:automaton-constrained-denoiser-posterior}}&\\
&=\;
\fDiffKernel{\fDiffusionStep+1}{\fDiffState{}',\fRemaskingSet{}}{\fDiffState{\fDiffusionStep}}\;\times\;
\frac{
\fPartitionFuncClamp{}{\fDiffState{}'}{\fDiffState{\fDiffusionStep}}
}{
\fPartitionFuncClamp{}{\fDiffState{\fDiffusionStep}}{\fDiffState{\fDiffusionStep}}
}&\text{by Eq.~\eqref{eq:ffbs-partition}}&
\end{flalign*}
\endgroup
The \ffbsKernel kernel is obtained by marginalizing out the reveal set $\fRemaskingSetRV{}$ from the joint kernels.
\end{proof}
\end{proposition}
\endgroup
\section{Proof for Frozen Lookahead}
\label{app:proof-frozen-lookahead}

\begingroup
\renewcommand{\thetheorem}{\ref{prop:frozen-lookahead}}
\begin{proposition}[Frozen Lookahead]
The lookahead for the constraint-satisfying terminal law induced by the frozen native kernels, for any state {\normalfont$\fDiffState{\fDiffusionStep}$} reachable from {\normalfont$\fDiffStateFrozen$} is:
\begingroup
\setlength{\abovedisplayskip}{2pt}
\setlength{\belowdisplayskip}{2pt}
\normalfont
\begin{equation*}
\fFrozenDoobHarmonic{\fDiffusionStep}{\fDiffStateFrozen}{\fDiffState{\fDiffusionStep}}
=
\sum\nolimits_{\fDiffState{}\in\fDiffStateSpace^{\fPositionMax}}\;
\fPositionEquiv{\fDiffState{}}{\fDiffState{\fDiffusionStep}}{\neg\fMaskFunc{\fDiffState{\fDiffusionStep}}} \times
\fIndicatorFunc{\fDiffState{}\in\fLangSet} \times
\prod\nolimits_{\fPosition\in\fMaskFunc{\fDiffState{\fDiffusionStep}}}
\fDiffCategorical{\fPosition}
{\fDiffStateToken{}{\fPosition}{}}
{\fDiffStateFrozen}.
\end{equation*}
\endgroup
Therefore,
\(
\normalfont
\fFrozenDoobHarmonic{\fDiffusionStep}{\fDiffState{\fDiffusionStep}}{\fDiffState{\fDiffusionStep}}
=
\fPartitionFuncClamp{}
{\fDiffState{\fDiffusionStep}}
{\fDiffState{\fDiffusionStep}}
\)
\;and\;
\(
\normalfont
\fFrozenDoobHarmonic{\fDiffusionStep+1}{\fDiffState{\fDiffusionStep}}{\fDiffState{\fDiffusionStep+1}}
=
\fPartitionFuncClamp{}
{\fDiffState{\fDiffusionStep+1}}
{\fDiffState{\fDiffusionStep}}.
\).
\begin{proof}
Let $\mathcal{R}_{\fDiffusionStep}$ be the set of valid reveal-set sequences from $\fDiffusionStep$ to $\fDiffusionStepMax$. Then 
\(
\normalfont
\fProbMeasure{\fDiffStateRV{\fDiffusionStepMax}\!=\!\fDiffState{}
\!\mid\!\fDiffStateRV{\fDiffusionStep}\!=\!\fDiffState{\fDiffusionStep}}{\fDiffStateFrozen}
\)
is
\begingroup
\setlength{\abovedisplayskip}{4pt}
\setlength{\belowdisplayskip}{0pt}
\normalfont
\begin{flalign*}
\everymath={\displaystyle}
&=
\sum_{\fRemaskingSet{\fDiffusionStep+1:\fDiffusionStepMax}\in\mathcal{R}_{\fDiffusionStep}}
\prod_{s=\fDiffusionStep}^{\fDiffusionStepMax-1}
\fDiffFrozenKernel{s+1}
{\fDiffState{s+1},\fRemaskingSet{s+1}}
{\fDiffState{s}}
{\fDiffStateFrozen}
&&\\
&=
\fPositionEquiv{\fDiffState{}}{\fDiffState{\fDiffusionStep}}{\neg\fMaskFunc{\fDiffState{\fDiffusionStep}}}
\prod_{\fPosition\in\fMaskFunc{\fDiffState{\fDiffusionStep}}}
\fDiffCategorical{\fPosition}
{\fDiffStateToken{}{\fPosition}{}}
{\fDiffStateFrozen}
\times
\sum_{\fRemaskingSet{\fDiffusionStep+1:\fDiffusionStepMax}\in\mathcal{R}_{\fDiffusionStep}}
\prod_{s=\fDiffusionStep}^{\fDiffusionStepMax-1}
\fDiffScheduler{s+1}{\fRemaskingSet{s+1}}{\fDiffState{s}},
&&\\
&=
\fPositionEquiv{\fDiffState{}}{\fDiffState{\fDiffusionStep}}{\neg\fMaskFunc{\fDiffState{\fDiffusionStep}}}
\prod_{\fPosition\in\fMaskFunc{\fDiffState{\fDiffusionStep}}}
\fDiffCategorical{\fPosition}
{\fDiffStateToken{}{\fPosition}{}}
{\fDiffStateFrozen},
&\text{\hspace{-5cm}by $\displaystyle\sum_{r_{t+1:T}\in\mathcal{R}_{t}}\prod_{m=t}^{T-1}s_{m+1}(r_{m+1}\mid\fDiffState{m})=1$}&\\
\end{flalign*}
\endgroup

Therefore, the frozen lookahead is given by:

\begingroup
\setlength{\abovedisplayskip}{4pt}
\setlength{\belowdisplayskip}{2pt}
\normalfont
\begin{flalign*}
\everymath={\displaystyle}
\fFrozenDoobHarmonic{\fDiffusionStep}{\fDiffStateFrozen}{\fDiffState{\fDiffusionStep}}
&=
\sum_{\fDiffState{}\in\fDiffStateSpace^{\fPositionMax}}
\fIndicatorFunc{\fDiffState{}\in\fLangSet}\,
\fProbMeasure{\fDiffStateRV{\fDiffusionStepMax}=\fDiffState{}
\mid\fDiffStateRV{\fDiffusionStep}=\fDiffState{\fDiffusionStep}}{\fDiffStateFrozen}
&&\\
&=
\sum_{\fDiffState{}\in\fDiffStateSpace^{\fPositionMax}}
\fPositionEquiv{\fDiffState{}}{\fDiffState{\fDiffusionStep}}{\neg\fMaskFunc{\fDiffState{\fDiffusionStep}}}
\fIndicatorFunc{\fDiffState{}\in\fLangSet}
\prod_{\fPosition\in\fMaskFunc{\fDiffState{\fDiffusionStep}}}
\fDiffCategorical{\fPosition}
{\fDiffStateToken{}{\fPosition}{}}
{\fDiffStateFrozen},
&\text{giving Eq.~\eqref{eq:frozen-lookahead}}&\\
\end{flalign*}
\endgroup
Substituting $\fDiffStateFrozen=\fDiffState{\fDiffusionStep}$ in Eq.~\eqref{eq:ffbs-partition} gives
$\fFrozenDoobHarmonic{\fDiffusionStep}{\fDiffState{\fDiffusionStep}}{\fDiffState{\fDiffusionStep}}
\!=\!\fPartitionFuncClamp{}{\fDiffState{\fDiffusionStep}}{\fDiffState{\fDiffusionStep}}$
and
$\fFrozenDoobHarmonic{\fDiffusionStep+1}{\fDiffState{\fDiffusionStep}}{\fDiffState{\fDiffusionStep+1}}
\!=\!\fPartitionFuncClamp{}{\fDiffState{\fDiffusionStep+1}}{\fDiffState{\fDiffusionStep}}$.
\end{proof}
\end{proposition}
\endgroup

\section{Proofs for Trajectory-Bias Boundary Cases}
\label{app:trajectory-bias-proofs}

\subsection{Kernel-Level Trajectory Exactness}
\label{app:kernel-trajectory-exactness}

\begin{lemma}[Kernel-Level Trajectory Exactness]\label{lemma:kernel-trajectory-exactness} The \ffbsKernel path law composed of \ffbsKernel kernels {\normalfont$\fFFBSKernelSymbol_{\fDiffusionStep+1}$} equals that composed of the Doob kernels {\normalfont$\fDoobKernelSymbol_{\fDiffusionStep+1}$} if and only if, for every $\fDiffusionStep<\fDiffusionStepMax$, every state {\normalfont$\fDiffState{\fDiffusionStep}$} reachable with non-zero probability under the path law, with {\normalfont$\fPartitionFuncClamp{}{\fDiffState{\fDiffusionStep}}{\fDiffState{\fDiffusionStep}} \!>\! 0$} and  {\normalfont$\fDoobHarmonic{\fDiffusionStep}(\fDiffState{\fDiffusionStep})\!>\!0$}, and every {\normalfont$\fDiffState{\fDiffusionStep+1}$} in the support of {\normalfont$\fDiffKernelSymbol_{\fDiffusionStep+1}(\cdot\mid\fDiffState{\fDiffusionStep})$}:
\begingroup
\setlength{\abovedisplayskip}{4pt}
\setlength{\belowdisplayskip}{4pt}
\normalfont
\begin{equation}\label{eq:trajectory-bias}
    \frac{\fPartitionFuncClamp{}{\fDiffState{\fDiffusionStep+1}}{\fDiffState{\fDiffusionStep}}}
    {\fPartitionFuncClamp{}{\fDiffState{\fDiffusionStep}}{\fDiffState{\fDiffusionStep}}}
    \;\;=\;\;
    \frac{\fDoobHarmonic{\fDiffusionStep+1}(\fDiffState{\fDiffusionStep+1})}
    {\fDoobHarmonic{\fDiffusionStep}(\fDiffState{\fDiffusionStep})}
\end{equation}
\endgroup
\end{lemma}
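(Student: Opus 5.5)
The plan is to compare the two path laws kernel by kernel, working along the trajectory rather than with the full product. By Proposition~\ref{prop:ffbs-kernel} and Definition~\ref{def:doob-kernel}, both kernels at step $\fDiffusionStep$ are the same native kernel $\fDiffKernel{\fDiffusionStep+1}{\fDiffState{\fDiffusionStep+1}}{\fDiffState{\fDiffusionStep}}$ multiplied by a ratio. One ratio is $\fPartitionFuncClamp{}{\fDiffState{\fDiffusionStep+1}}{\fDiffState{\fDiffusionStep}}/\fPartitionFuncClamp{}{\fDiffState{\fDiffusionStep}}{\fDiffState{\fDiffusionStep}}$ and the other is $\fDoobHarmonic{\fDiffusionStep+1}(\fDiffState{\fDiffusionStep+1})/\fDoobHarmonic{\fDiffusionStep}(\fDiffState{\fDiffusionStep})$. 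On the support of $\fDiffKernelSymbol_{\fDiffusionStep+1}(\cdot\mid\fDiffState{\fDiffusionStep})$ the native kernel is strictly positive, so there the two kernels agree pointwise exactly when Eq.~\eqref{eq:trajectory-bias} holds. Off that support both kernels vanish, so it suffices to check the condition on it.

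\emph{Sufficiency.} Assume Eq.~\eqref{eq:trajectory-bias} holds at every reachable $\fDiffState{\fDiffusionStep}$ with positive $Z$ and $\fDoobHarmonic{}$, and at every successor in the native support. Then $\fFFBSKernelSymbol_{\fDiffusionStep+1}(\cdot\mid\fDiffState{\fDiffusionStep})=\fDoobKernelSymbol_{\fDiffusionStep+1}(\cdot\mid\fDiffState{\fDiffusionStep})$ at every reachable state. I would induct on $\fDiffusionStep$ to show that the two processes have the same reachable sets and the same joint laws of $\fDiffState{1:\fDiffusionStep}$. The base case is $\fDiffState{0}$. For the step, the two laws agree on $\fDiffState{1:\fDiffusionStep}$, and at each reachable $\fDiffState{\fDiffusionStep}$ they apply the same kernel, so they agree on $\fDiffState{1:\fDiffusionStep+1}$. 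Taking $\fDiffusionStep=\fDiffusionStepMax$ gives equality of the path laws.

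\emph{Necessity.} Assume the path laws are equal. For a reachable $\fDiffState{\fDiffusionStep}$ (positive marginal under the common law), take a prefix $\fDiffState{1:\fDiffusionStep}$ with positive probability. Marginalize both path laws onto $\fDiffState{1:\fDiffusionStep+1}$. Because the trailing kernels of a Markov product each sum to one, this marginal is the prefix product of kernels. Equality of the prefix marginals at lengths $\fDiffusionStep$ and $\fDiffusionStep+1$ then lets me divide by the positive prefix probability, which gives equality of the two kernels at $\fDiffState{\fDiffusionStep}$. The positivity of the native kernel then yields Eq.~\eqref{eq:trajectory-bias}.

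The main obstacle is the support bookkeeping. A successor can lie in the native support and still have $\fDoobHarmonic{\fDiffusionStep+1}=0$ or $\fPartitionFuncClamp{}{\fDiffState{\fDiffusionStep+1}}{\fDiffState{\fDiffusionStep}}=0$. It then falls outside the support of the corresponding path law, so the prefix-division argument never reaches it. I would handle this by noting that equal kernels force both ratios to be zero at such a successor. Conversely, if both ratios are zero, Eq.~\eqref{eq:trajectory-bias} holds trivially as $0=0$. Hence the condition restricted to the native support is exactly kernel equality, with no gap. I would also note that the full-support assumption makes the native support explicit: it consists of all states obtained by revealing a scheduler-admissible set.
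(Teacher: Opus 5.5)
The paper does not actually prove this lemma. Appendix~\ref{app:kernel-trajectory-exactness} only states it, and it is then used implicitly by Corollaries~\ref{lemma:no-bias-for-vacuous-constraints} and~\ref{lemma:no-bias-for-one-step-reveal}, whose proofs just verify Eq.~\eqref{eq:trajectory-bias}. So there is no authors' argument to match, and your proposal is a correct proof that fills the gap.

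The closest argument in the paper is the telescoping behind Proposition~\ref{prop:re-freezing-tilt} and Theorem~\ref{thm:twist-fixes-the-tilt}. That route gives your sufficiency direction directly. Multiplying Eq.~\eqref{eq:trajectory-bias} over $t<T$ collapses the right-hand side to $\llbracket \textbf{x}_{T}\in\mathcal{C}\rrbracket/h_{0}(\textbf{x}_{0})$, which is Eq.~\eqref{eq:doob-path-law}, modulo the same support bookkeeping you do.

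It does not by itself give necessity. Equal path laws only force equality of the \emph{products} of the per-step ratios along each trajectory, which is the type of condition in Corollary~\ref{thm:trajectory-bias}. Your prefix-marginal division is what separates the factors. The fact carrying it is that both kernels are normalized:
\begin{itemize}
\item for the Doob kernel, this is the recursion defining $h_t$;
\item for the step-exact kernel, it is $\sum_{\textbf{x}'}\kappa_{t+1}(\textbf{x}'\mid\textbf{x})\,Z(\textbf{x}';\textbf{x})=Z(\textbf{x};\textbf{x})$, obtained by summing the revealed tokens of $Z(\textbf{x}';\textbf{x})$ against $\mathrm{Cat}_i(\cdot\mid\textbf{x})$ and then summing out the scheduler.
\end{itemize}
You use this only in passing (``the trailing kernels each sum to one''); it deserves an explicit line.

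Two minor points. First, in the sufficiency direction you pass from ``Eq.~\eqref{eq:trajectory-bias} holds at every reachable $\textbf{x}_t$ with $Z(\textbf{x}_t;\textbf{x}_t)>0$ and $h_t(\textbf{x}_t)>0$'' to ``the kernels agree at every reachable state.'' That step needs every reachable state to satisfy both positivity conditions, and it does. At $t=0$ this follows from the standing assumptions of Definition~\ref{def:doob-kernel} and Proposition~\ref{prop:ffbs-kernel}. For $t\ge 1$, under the paper's standing assumptions (complete reveal and full support), $h_t(\textbf{x}_t)$, $Z(\textbf{x}_t;\textbf{x}_t)$ and $Z(\textbf{x}_t;\textbf{x}_{t-1})$ are each positive exactly when $\textbf{x}_t$ has a completion in $\mathcal{C}$. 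Every state reachable under either law has such a completion. You should say so.

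Second, the ``obstacle'' in your last paragraph is not really one. In the necessity step you divide by the probability of the length-$t$ prefix, not the length-$(t+1)$ one. So you obtain equality of the two kernels at $\textbf{x}_t$ for \emph{every} successor, including those carrying zero path-law mass, and the $0=0$ case comes out automatically.
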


\subsection{Vacuous Constraints}
\label{app:proof-vacuous-constraints}

\begingroup
\renewcommand{\thetheorem}{\ref{lemma:no-bias-for-vacuous-constraints}}
\begin{corollary}[No Bias for Vacuous Constraints]
For a vacuous constraining regular language $\fLangSet\!=\!\fVocab^{\fPositionMax}$, \ie constrains nothing, Eq.~\eqref{eq:trajectory-bias} holds for every $\fDiffusionStep\!<\!\fDiffusionStepMax$, and there is no trajectory bias.
\begin{proof}
Since $\fLangSet=\fVocab^{\fPositionMax}$, every terminal state $\fDiffStateRV{\fDiffusionStepMax}\in\fLangSet$, and so
\(
\normalfont
\fDoobHarmonic{\fDiffusionStepMax}(\fDiffState{})\!=\!1
\)
for all {\normalfont$\fDiffState{}\in\fVocab^{\fPositionMax}$}. By backward induction, suppose $\fDoobHarmonic{\fDiffusionStep+1}(\fDiffState{}')\!=\!1$ for every $\fDiffState{}'$.
\begingroup
\setlength{\abovedisplayskip}{4pt}
\setlength{\belowdisplayskip}{2pt}
\normalfont
\begin{flalign*}
\everymath={\displaystyle}
\normalfont
\fDoobHarmonic{\fDiffusionStep}(\fDiffState{})
&=
\sum\nolimits_{\fDiffState{}'} \fDiffKernel{\fDiffusionStep+1}{\fDiffState{}'}{\fDiffState{}}\;\; \fDoobHarmonic{\fDiffusionStep+1}(\fDiffState{}')
&\hspace*{-4cm}\text{by Definition~\ref{def:doob-kernel}}&\\[-2pt]
&=
\sum\nolimits_{\fDiffState{}'} \fDiffKernel{\fDiffusionStep+1}{\fDiffState{}'}{\fDiffState{}},
&\hspace*{-4cm}\text{by the backward-induction hypothesis}&\\
&= 1&&\\
\intertext{The local partition sum $\fPartitionFuncClamp{}{\fDiffState{}}{\fDiffState{}}$ is the normalizer of Eq.~\eqref{eq:automaton-constrained-denoiser-posterior} and so can be written as:}
\fPartitionFuncClamp{}{\fDiffState{}}{\fDiffState{}}
&=
\sum\nolimits_{\fDiffCleanState{}, \fAutomtnStateChain}
\fDenoiserPosterior{\fDiffCleanState{}}{\fDiffState{}}\;\,
\fAutomtnConstraint{\fAutomtn}{\fDiffCleanState{}}{\fAutomtnStateChain}&&\\
&=
\sum\nolimits_{\fDiffCleanState{}}
\fDenoiserPosterior{\fDiffCleanState{}}{\fDiffState{}},
&\hspace*{-4cm}\text{since $\fLangSet\!=\!\fVocab^{\fPositionMax}$ and $\fAutomtn$ is deterministic}&\\
&=
\sum\nolimits_{\fDiffCleanState{}}
\fPositionEquiv{\fDiffCleanState{}}{\fDiffState{}}{\neg\fMaskFunc{\fDiffState{}}}\;\,
\prod\nolimits_{\fPosition \in \fMaskFunc{\fDiffState{}}}
\fDiffCategorical{\fPosition}{\fDiffCleanStateToken{}{\fPosition}{}}{\fDiffState{}},
&\hspace*{-4cm}\text{from Eq.~\eqref{eq:denoiser-posterior}}&\\
&=
\prod\nolimits_{\fPosition \in \fMaskFunc{\fDiffState{}}}
\sum\nolimits_{\fDiffCleanToken{}\in\fVocab}
\fDiffCategorical{\fPosition}{\fDiffCleanToken{}}{\fDiffState{}},
&\hspace*{-4cm}\text{by\, $\sum\nolimits_{\fDiffCleanToken{}\in\fVocab}
\fDiffCategorical{\fPosition}{\fDiffCleanToken{}}{\fDiffState{}}=1$}&\\
&=
1&&\\
\intertext{The clamped partition sum $\fPartitionFuncClamp{}{\fDiffState{}'}{\fDiffState{}}$ is given as:}
\fPartitionFuncClamp{}{\fDiffState{}'}{\fDiffState{}}
&=
\sum\nolimits_{\fDiffCleanState{}}
\fIndicatorFunc{\fDiffCleanState{}\in\fLangSet}\;\,
\fPositionEquiv{\fDiffCleanState{}}{\fDiffState{}'}{\neg\fMaskFunc{\fDiffState{}'}}\;\,
\prod\nolimits_{\fPosition \in \fMaskFunc{\fDiffState{}'}}
\fDiffCategorical{\fPosition}{\fDiffCleanStateToken{}{\fPosition}{}}{\fDiffState{}},
&\text{from Eq.~\eqref{eq:ffbs-partition}}&\\
&=
\sum\nolimits_{\fDiffCleanState{}}
\fPositionEquiv{\fDiffCleanState{}}{\fDiffState{}'}{\neg\fMaskFunc{\fDiffState{}'}}\;\,
\prod\nolimits_{\fPosition \in \fMaskFunc{\fDiffState{}'}}
\fDiffCategorical{\fPosition}{\fDiffCleanStateToken{}{\fPosition}{}}{\fDiffState{}},
&\text{since $\fLangSet\!=\!\fVocab^{\fPositionMax}$}&\\
&=
\prod\nolimits_{\fPosition \in \fMaskFunc{\fDiffState{}'}}
\sum\nolimits_{\fDiffCleanToken{}\in\fVocab}
\fDiffCategorical{\fPosition}{\fDiffCleanToken{}}{\fDiffState{}},
&\hspace*{-1.4cm}\text{by\, $\sum\nolimits_{\fDiffCleanToken{}\in\fVocab}
\fDiffCategorical{\fPosition}{\fDiffCleanToken{}}{\fDiffState{}}=1$}&\\
&=
1&&
\end{flalign*}
\endgroup
Therefore
\(
\normalfont
\fPartitionFuncClamp{}{\fDiffState{}'}{\fDiffState{}}
\!=\!\fDoobHarmonic{\fDiffusionStep+1}(\fDiffState{}')
\!=\!\fPartitionFuncClamp{}{\fDiffState{}}{\fDiffState{}}
\!=\!\fDoobHarmonic{\fDiffusionStep}(\fDiffState{})
\!=\!1
\)
for all $\fDiffusionStep\!<\!\fDiffusionStepMax$.
\end{proof}
\end{corollary}
\endgroup

\subsection{One-Step Reveal All}
\label{app:proof-one-step-reveal}

\begingroup
\renewcommand{\thetheorem}{\ref{lemma:no-bias-for-one-step-reveal}}
\begin{corollary}[No Bias for One-Step Reveal All]
For a finite horizon $\fDiffusionStepMax\!=\!1$, where every initial realization {\normalfont$\fDiffState{0}$} is completely unmasked in one step, Eq.~\eqref{eq:trajectory-bias} holds and there is no trajectory bias.

\begin{proof}
Since $\fDiffusionStepMax\!=\!1$, we have
{\normalfont$\fDoobHarmonic{1}(\fDiffState{}')\!=\!\fIndicatorFunc{\fDiffState{}'\in\fLangSet}$}
from Definition~\ref{def:doob-kernel}, and so
{\normalfont$\fDoobHarmonic{0}(\fDiffState{0})$} is given by:
\begingroup
\setlength{\abovedisplayskip}{4pt}
\setlength{\belowdisplayskip}{2pt}
\normalfont
\begin{flalign*}
\everymath={\displaystyle}
\fDoobHarmonic{0}(\fDiffState{0})
&=
\sum\nolimits_{\fDiffState{}'} \fDiffKernel{1}{\fDiffState{}'}{\fDiffState{0}}\;\; \fDoobHarmonic{1}(\fDiffState{}')&&\\
&=
\sum\nolimits_{\fDiffState{}'}
\fDiffKernel{1}{\fDiffState{}'}{\fDiffState{0}}\;
\fIndicatorFunc{\fDiffState{}'\in\fLangSet},
&\hspace*{-4cm}\text{by\, $\fDoobHarmonic{1}(\fDiffState{}')\!=\!\fIndicatorFunc{\fDiffState{}'\in\fLangSet}$}&\\
\intertext{The clamped partition sum $\fPartitionFuncClamp{}{\fDiffState{}'}{\fDiffState{0}}$ is given by Eq.~\eqref{eq:ffbs-partition} as:}
\fPartitionFuncClamp{}{\fDiffState{}'}{\fDiffState{0}}
&=
\sum\nolimits_{\fDiffCleanState{}}
\fIndicatorFunc{\fDiffCleanState{}\in\fLangSet}\;\,
\fPositionEquiv{\fDiffCleanState{}}{\fDiffState{}'}{\neg\fMaskFunc{\fDiffState{}'}}\;\,
\prod\nolimits_{\fPosition \in \fMaskFunc{\fDiffState{}'}}
\fDiffCategorical{\fPosition}{\fDiffCleanStateToken{}{\fPosition}{}}{\fDiffState{0}}&&\\
&=
\sum\nolimits_{\fDiffCleanState{}}
\fIndicatorFunc{\fDiffCleanState{}\in\fLangSet}\;\,
\fIndicatorFunc{\fDiffState{}'\!=\!\fDiffCleanState{}},
&\hspace*{-3.4cm}\text{since \,$\fMaskFunc{\fDiffState{}'}\!=\!\varnothing$\, for\, $\fDiffusionStepMax\!=\!1$}&\\
&=
\fIndicatorFunc{\fDiffState{}'\in\fLangSet}&&\\
\intertext{The local partition sum $\fPartitionFuncClamp{}{\fDiffState{0}}{\fDiffState{0}}$ is the normalizer of Eq.~\eqref{eq:ffbs-kernel} and so can be written as:}
\fPartitionFuncClamp{}{\fDiffState{0}}{\fDiffState{0}}
&=
\sum\nolimits_{\fDiffState{}'}
\fDiffKernel{1}{\fDiffState{}'}{\fDiffState{0}}\;\;
\fPartitionFuncClamp{}{\fDiffState{}'}{\fDiffState{0}}&&\\
&=
\sum\nolimits_{\fDiffState{}'}
\fDiffKernel{1}{\fDiffState{}'}{\fDiffState{0}}\;\,
\fIndicatorFunc{\fDiffState{}'\in\fLangSet},
&\hspace*{-3.4cm}\text{by\, $\fPartitionFuncClamp{}{\fDiffState{}'}{\fDiffState{0}}\!=\!\fIndicatorFunc{\fDiffState{}'\in\fLangSet}$}&
\end{flalign*}
\endgroup
Since
\(
\normalfont
\fPartitionFuncClamp{}{\fDiffState{}'}{\fDiffState{0}}
\!=\!\fDoobHarmonic{1}(\fDiffState{}')
\)
and
\(
\normalfont
\fPartitionFuncClamp{}{\fDiffState{0}}{\fDiffState{0}}
\!=\!\fDoobHarmonic{0}(\fDiffState{0}),
\).
\end{proof}
\end{corollary}
\endgroup
\section{Proof for Twist Fixes the Tilt}
\label{app:proof-twist-fixes-the-tilt}

\begingroup
\renewcommand{\thetheorem}{\ref{thm:twist-fixes-the-tilt}}
\begin{theorem}[Twist Fixes the Tilt]
The \fkc twists correct the re-freezing tilt (Proposition~\ref{prop:re-freezing-tilt}), thus making the step-exact decoder path law target the \doob path law:
\begingroup
\setlength{\abovedisplayskip}{2pt}
\setlength{\belowdisplayskip}{0pt}
\normalfont
\begin{equation*}
\fFeynmanKacPotentialSymbol_{0}(\fDiffState{0})
\prod\nolimits_{\fDiffusionStep<\fDiffusionStepMax}
\fFeynmanKacKernel{\fDiffusionStep+1}{\fDiffState{\fDiffusionStep+1}}{\fDiffState{\fDiffusionStep}}\;
\fFeynmanKacPotential{\fDiffusionStep+1}{\fDiffState{\fDiffusionStep}}{\fDiffState{\fDiffusionStep+1}}
\;\;=\;\;
\fDoobPathLaw{1:\fDiffusionStepMax}{\fDiffState{1:\fDiffusionStepMax}}{\fDiffState{0}}\;
\fDoobNormaliser(\fDiffState{0}).
\end{equation*}
\endgroup
\begin{proof}
Substituting the twist-constructed potentials from Eq.~\eqref{eq:twisted-potentials} into the left-hand side gives:
\begingroup
\setlength{\abovedisplayskip}{4pt}
\setlength{\belowdisplayskip}{2pt}
\normalfont
\begin{flalign*}
\everymath={\displaystyle}
&\fFeynmanKacPotentialSymbol_{0}(\fDiffState{0})
\prod\nolimits_{\fDiffusionStep<\fDiffusionStepMax}
\fFeynmanKacKernel{\fDiffusionStep+1}{\fDiffState{\fDiffusionStep+1}}{\fDiffState{\fDiffusionStep}}\;
\fFeynmanKacPotential{\fDiffusionStep+1}{\fDiffState{\fDiffusionStep}}{\fDiffState{\fDiffusionStep+1}}
&&\\
&=\;
\fFeynmanKacTwistFunction{0}(\fDiffState{0})
\prod\nolimits_{\fDiffusionStep<\fDiffusionStepMax}
\fDiffKernel{\fDiffusionStep+1}{\fDiffState{\fDiffusionStep+1}}{\fDiffState{\fDiffusionStep}}\;
\frac{\fFeynmanKacTwistFunction{\fDiffusionStep+1}(\fDiffState{\fDiffusionStep+1})}
{\fFeynmanKacTwistFunction{\fDiffusionStep}(\fDiffState{\fDiffusionStep})}
&&\\
&=\;
\prod\nolimits_{\fDiffusionStep<\fDiffusionStepMax}
\fDiffKernel{\fDiffusionStep+1}{\fDiffState{\fDiffusionStep+1}}{\fDiffState{\fDiffusionStep}}\;
\fFeynmanKacTwistFunction{\fDiffusionStepMax}(\fDiffState{\fDiffusionStepMax})
&\text{by telescoping the twists}&\\
&=\;
\fNativePathLaw{1:\fDiffusionStepMax}{\fDiffState{1:\fDiffusionStepMax}}{\fDiffState{0}}\;
\fIndicatorFunc{\fDiffState{\fDiffusionStepMax}\in\fLangSet}
&\text{by Definition~\ref{def:feynman-kac-twist-function}}&\\
&=\;
\fDoobPathLaw{1:\fDiffusionStepMax}{\fDiffState{1:\fDiffusionStepMax}}{\fDiffState{0}}\;
\fDoobNormaliser(\fDiffState{0})
&\text{by Eq.~\eqref{eq:doob-path-law}}&
\end{flalign*}
\endgroup
\end{proof}
\end{theorem}
\endgroup

\section{\tool Algorithm (With Adaptive Resampling)}
\label{app:twister-full}

\begin{algorithm}[t]
    \AlgorithmFontSize
    \caption{\AlgorithmTwisterFullCaption}
    \begin{algorithmic}[1]
        \setlength{\itemsep}{1.4pt}
        \Require \dfa $\fAutomtn$ recognizing $\fLangSet$;\;
        initial state $\fDiffState{0}$;\;
        number of denoising steps $\fDiffusionStepMax$
        \Require number of \smc particles $\fSMCParticleIndexMax$;\;
        minimum effective sample size $\mathrm{ESS}_{\mathrm{min}}\in[0,1]$
        \State Query the denoiser at $\fDiffState{0}$
        \State $(\fDiffState{0}^{\fSMCParticleIndex},
        \fSMCParticleWeight{0}^{\fSMCParticleIndex})
        \gets(\fDiffState{0},1/\fSMCParticleIndexMax)$
        for all $\fSMCParticleIndex\in[\fSMCParticleIndexMax]$
        \For{$\fDiffusionStep=0,\ldots,\fDiffusionStepMax\!-\!1$}
            \Statex \AtsmcStage{lightpurple}{STEP-EXACT FFBS PROPAGATE}{8}
            \ParallelFor{$\fSMCParticleIndex=1,\ldots,\fSMCParticleIndexMax$}
                \State Sample
                $\fDiffState{\fDiffusionStep+1}^{\fSMCParticleIndex}
                \sim
                \fFeynmanKacKernel{\fDiffusionStep+1}{\cdot}
                {\fDiffState{\fDiffusionStep}^{\fSMCParticleIndex}}$
                \Comment{use cached \ffbs messages}
                \State $\widehat Z_{\fDiffusionStep+1}^{\fSMCParticleIndex}
                \gets
                \fPartitionFuncClamp{}
                {\fDiffState{\fDiffusionStep+1}^{\fSMCParticleIndex}}
                {\fDiffState{\fDiffusionStep}^{\fSMCParticleIndex}}$
                \Comment{predecessor clamped-partition sum}
                \If{$\fDiffusionStep<\fDiffusionStepMax\!-\!1$}
                    \State Query the denoiser at
                    $\fDiffState{\fDiffusionStep+1}^{\fSMCParticleIndex}$
                    \State $Z_{\fDiffusionStep+1}^{\fSMCParticleIndex}
                    \gets
                    \fPartitionFuncClamp{}
                    {\fDiffState{\fDiffusionStep+1}^{\fSMCParticleIndex}}
                    {\fDiffState{\fDiffusionStep+1}^{\fSMCParticleIndex}}$
                    \Comment{successor partition sum and cache \ffbs messages}
                \Else
                    \State $Z_{\fDiffusionStepMax}^{\fSMCParticleIndex}
                    \gets
                    \fIndicatorFunc{
                    \fDiffState{\fDiffusionStepMax}^{\fSMCParticleIndex}
                    \in\fLangSet}$
                \EndIf
            \EndParallelFor
            \Statex \AtsmcStage{lightpurple}{RE-FREEZING CORRECTION}{6}
            \ParallelFor{$\fSMCParticleIndex=1,\ldots,\fSMCParticleIndexMax$}
                \State
                $\fFeynmanKacPotentialSymbol_{\fDiffusionStep+1}^{\fSMCParticleIndex}
                \gets
                {Z_{\fDiffusionStep+1}^{\fSMCParticleIndex}}/
                {\widehat Z_{\fDiffusionStep+1}^{\fSMCParticleIndex}}$
                \State $\fSMCParticleWeightUNorm{\fDiffusionStep+1}^{\fSMCParticleIndex}
                \gets
                \fSMCParticleWeight{\fDiffusionStep}^{\fSMCParticleIndex}
                \fFeynmanKacPotentialSymbol_{\fDiffusionStep+1}^{\fSMCParticleIndex}$
            \EndParallelFor
            \State $\fSMCParticleWeight{\fDiffusionStep+1}^{\fSMCParticleIndex}
            \gets
            \fSMCParticleWeightUNorm{\fDiffusionStep+1}^{\fSMCParticleIndex}/
            {\sum_{\fSMCParticleIndex'=1}^{\fSMCParticleIndexMax}
            \fSMCParticleWeightUNorm{\fDiffusionStep+1}^{\fSMCParticleIndex'}}$
            for all $\fSMCParticleIndex\in[\fSMCParticleIndexMax]$
            \Statex \AtsmcStage{lightpurple}{\hspace*{-5mm}ADAPTIVE RESAMPLING}{6}
            \State $\mathrm{ESS}\gets
            \bigl(\sum_{\fSMCParticleIndex'=1}^{\fSMCParticleIndexMax}
            (\fSMCParticleWeight{\fDiffusionStep+1}^{\fSMCParticleIndex'})^2
            \bigr)^{-1}$
            \If{$\fDiffusionStep<\fDiffusionStepMax\!-\!1
            \;\;\textbf{and}\;\; \mathrm{ESS}
            <\mathrm{ESS}_{\mathrm{min}}\fSMCParticleIndexMax$}
                \State $\fDiffState{\fDiffusionStep+1}^{1:\fSMCParticleIndexMax}
                \gets\Call{Resample}{
                \fDiffState{\fDiffusionStep+1}^{1:\fSMCParticleIndexMax},
                \fSMCParticleWeight{\fDiffusionStep+1}^{1:\fSMCParticleIndexMax}}$
                \Comment{resample cache jointly}
                \State $\fSMCParticleWeight{\fDiffusionStep+1}^{\fSMCParticleIndex}
                \gets1/\fSMCParticleIndexMax$
                for all $\fSMCParticleIndex\in[\fSMCParticleIndexMax]$
            \EndIf
        \EndFor
        \State \Return
        $(\fDiffState{\fDiffusionStepMax}^{1:\fSMCParticleIndexMax},
                \fSMCParticleWeight{\fDiffusionStepMax}^{1:\fSMCParticleIndexMax})$
    \end{algorithmic}
    \alglanguage{pseudocode}
    \end{algorithm}

Algorithm~\ref{alg:twister-full} gives the complete \tool algorithm. It requires a \dfa $\fAutomtn$ (lifted to the model's vocabulary $\fVocab$) that recognizes the regular language $\fLangSet$; an initial state $\fDiffState{0}$, \ie a token block containing the prompt and masked tokens to be unmasked; the maximum number of denoising steps $\fDiffusionStepMax$; the number of \smc particles $\fSMCParticleIndexMax$; and the minimum effective sample size $\mathrm{ESS}_{\mathrm{min}}$, \ie the threshold for triggering resampling of the \smc particles. Resampling is used in \smc to mitigate \emph{weight degeneracy}: when weights of most \smc particles become too small, rendering them useless and thus wasting compute.

\tool starts by querying the denoiser at the initial state $\fDiffState{0}$, computing its partition sum \smash{\footnotesize$Z_0$} and caching its \ffbs messages. Following this, it initializes all $\fSMCParticleIndexMax$ \smc particles to the initial state $\fDiffState{0}$ with weight $1/\fSMCParticleIndexMax$. The cached \ffbs messages are used to sample from the \fkc proposal \smash{\footnotesize{$\fFeynmanKacKernelSymbol_{\fDiffusionStep+1}$}}, \ie the step-exact proposal of Proposition~\ref{prop:ffbs-frozen-doob}, and compute the predecessor clamped-partition sum \smash{\footnotesize$\widehat Z_{\fDiffusionStep+1}^{\fSMCParticleIndex}$}. For each propagated nonterminal particle, the denoiser is queried at the successor state to compute the successor partition sum \smash{\footnotesize$Z_{\fDiffusionStep+1}^{\fSMCParticleIndex}$} and cache the \ffbs messages for the next step.
The ratio
\smash{\footnotesize$\fFeynmanKacPotentialSymbol_{\fDiffusionStep+1}^{\fSMCParticleIndex}
=Z_{\fDiffusionStep+1}^{\fSMCParticleIndex}/
\widehat Z_{\fDiffusionStep+1}^{\fSMCParticleIndex}$} gives the exact \fkc potential required in Section~\ref{sec:method-increment} for fixing the tilt from
Proposition~\ref{prop:re-freezing-tilt}. Each incoming particle weight is multiplied by this potential, and the resulting weights are normalized. Particles are jointly resampled with their caches and assigned uniform weight when resampling is triggered by the effective sample size falling below $\mathrm{ESS}_{\mathrm{min}}\fSMCParticleIndexMax$.

\MyPara{Complexity}
One weighted-automaton pass costs
$\fBigO{\fPositionMax\fSize{\fStates}\fSize{\fVocab}}$, and so, the total automaton cost for $\fSMCParticleIndexMax$ \smc particles and $\fDiffusionStepMax$ denoising steps is $\fBigO{\fSMCParticleIndexMax\fDiffusionStepMax
\fPositionMax\fSize{\fStates}\fSize{\fVocab}}$, in addition to the denoiser evaluations. \tool is parallelizable over the step-exact \ffbs propagate and re-freezing correction phases, since the particles are conditionally independent until weight normalization and resampling. 
\section{Constraint Satisfaction Experiment}
\label{sec:appendix:eval}

\begin{figure}[t]
    \centering
    \includegraphics[width=0.55\textwidth]{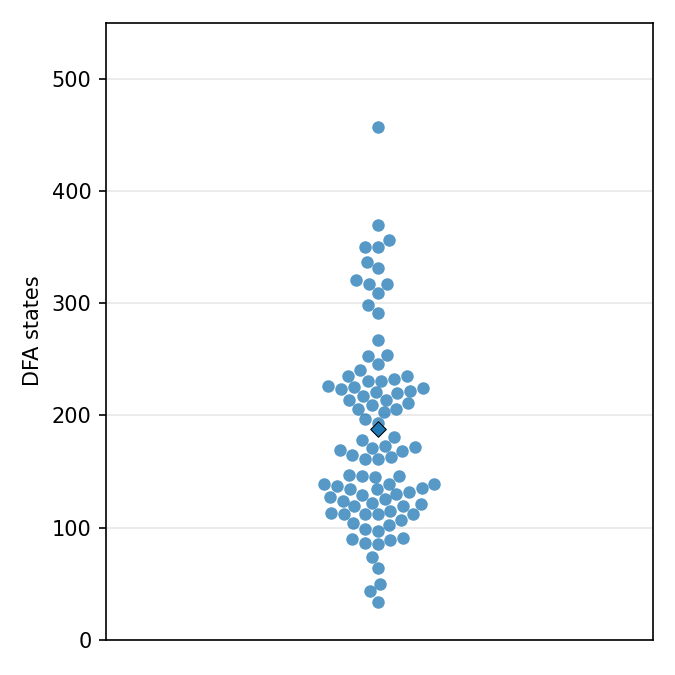}
    \caption{Swarm plot of token-prefix DFA state counts on \jsonModeEval.
    Each point is one datapoint; the diamond marks the mean.}
    \label{fig:dfa-state-sizes}
\end{figure}

\begin{figure}[t]
\centering
\begin{tcolorbox}[
  colback=gray!4,
  colframe=black!55,
  boxrule=0.4pt,
  arc=2pt,
  left=3pt,
  right=3pt,
  top=2pt,
  bottom=2pt,
]
\begin{lstlisting}[basicstyle=\ttfamily\scriptsize,breaklines=true,breakatwhitespace=true,columns=fullflexible,keepspaces=true,aboveskip=0pt,belowskip=0pt]
[system]
You are a helpful assistant that answers in JSON. Here's the json schema you must adhere to:
<schema>
{'title': 'WirelessAccessPoint', 'type': 'object', 'properties': {'ssid': {'title': 'SSID', 'type': 'string'}, 'securityProtocol': {'title': 'SecurityProtocol', 'type': 'string'}, 'bandwidth': {'title': 'Bandwidth', 'type': 'string'}}, 'required': ['ssid', 'securityProtocol', 'bandwidth']}
</schema>

[user]
I'm currently configuring a wireless access point for our office network and I need to generate a JSON object that accurately represents its settings. The access point's SSID should be 'OfficeNetSecure', it uses WPA2-Enterprise as its security protocol, and it's capable of a bandwidth of up to 1300 Mbps on the 5 GHz band. This JSON object will be used to document our network configurations and to automate the setup process for additional access points in the future. Please provide a JSON object that includes these details.
Only output the JSON object, no other text or comments.
\end{lstlisting}
\end{tcolorbox}
\caption{Example \jsonModeEval prompt messages (datapoint \texttt{json-0}).
The final user-turn line is our appended instruction to emit only the JSON object.}
\label{fig:json-mode-prompt}
\end{figure}

As a sanity check that \tool still produces only constraint-satisfying outputs, we evaluate it along with other constrained decoding methods on the \jsonModeEval dataset released by~\cite{nousresearch2024jsonmodeeval}.
\jsonModeEval consists of 100 zero-shot \datapoints, each specifying a given schema and requiring a JSON output that conforms to that schema.
Of the 100 \datapoints in \jsonModeEval, we exclude six whose schemas cannot be compiled into a \regex by \outlines (five use a named-root wrapper rather than a valid JSON Schema root, and one is malformed), and evaluate on the remaining 94.
For each remaining \datapoint, we compile the JSON schema into a \regex with the \outlines library from \citet{willardOutlines23} and transform that \regex into a token-prefix automaton over the model vocabulary.
Figure~\ref{fig:dfa-state-sizes} shows the distribution of token-prefix \dfa sizes created for the 94 evaluated schemas.
The distribution is concentrated in the low hundreds of states, with most values falling between roughly $100$ and $250$.
The mean number of states is \dfaStatesMean, with a minimum of \dfaStatesMin states and a maximum of \dfaStatesMax states.

We build the dataset prompt for each \datapoint, append an instruction to emit only the JSON object, and then decode under each of the constrained strategies described below. 
Figure~\ref{fig:json-mode-prompt} shows an example of the prompt message for a \jsonModeEval \datapoint given to the model.

We compare four constrained decoding strategies.
\OurDingo, at each denoising step, selects a constraint-satisfying completion that maximizes the automaton-weighted mean-field posterior.
\OurDingo picks the mode by using dynamic programming over automaton states to select a locally optimal constraint-satisfying block under the mean-field factorization.
Our \OurDingo implementation follows the original dynamic-programming procedure from~\citet{sureshdingo25}. The original code is not publicly available.
\OurDangErmon instead samples exactly from the automaton-constrained mean-field posterior at each step using \ffbs~\citep{carterGibbs1994,dang2026constrained}, which guarantees local validity but is still subject to the trajectory bias characterized in~\S\ref{sec:problem-trajectory-bias}. 
Our \OurDangErmon implementation follows the setup and procedure described in~\citet{dang2026constrained}. As with \OurDingo, the original code is not publicly available.
Finally, $\gnardSmcOne$ and $\gnardSmcFour$ apply \tool with $K\!=\!1$ and $K\!=\!4$ particles, using the step-exact constrained posterior and correcting trajectory bias with the Feynman--Kac twists described in~\S\ref{sec:method}.

We evaluate all decoding methods on six open \mdlms from the \dreamText~\citep{yeDream25}, \dreamCoder~\citep{xieDreamCoder25}, and \lladaText~\citep{niellada25} families: \dream, \dreamInstruct, \dreamCoderText, \dreamCoderInstruct, \llada, and \lladaInstruct.
We chose these models because they are among the strongest publicly available \mdlms.
We set generation length and denoising steps to $1.5\times$ the gold answer length per \datapoint.

\begin{\UseMacro{table-json-mode-table}}[t]%
\centering%
\caption{\TableJsonModeDatasetCaption}%
\TableFontSize%
\setlength{\tabcolsep}{6pt}%
\renewcommand{\arraystretch}{1.15}%
\resizebox{\textwidth}{!}{\begin{tabular}{llccc}%
\toprule%
\UseMacro{table-json-mode-Model}
 & \UseMacro{table-json-mode-Method}
 & \UseMacro{table-json-mode-Parse}
 & \UseMacro{table-json-mode-Schema}
 & \UseMacro{table-json-mode-Time} \\%
\midrule%
\multirow{4}{*}{\dream}
 & \OurDingo
 & \UseMacro{table-json-mode-Dream-v0-Base-7B-dingo-parse}
 & \UseMacro{table-json-mode-Dream-v0-Base-7B-dingo-schema}
 & \UseMacro{table-json-mode-Dream-v0-Base-7B-dingo-time-mean-Dream-v0-Base-7B-dingo-time-std} \\%

 & \OurDangErmon
 & \UseMacro{table-json-mode-Dream-v0-Base-7B-cedar-marginal-parse}
 & \UseMacro{table-json-mode-Dream-v0-Base-7B-cedar-marginal-schema}
 & \UseMacro{table-json-mode-Dream-v0-Base-7B-cedar-marginal-time-mean-Dream-v0-Base-7B-cedar-marginal-time-std} \\%

 & $\gnardSmcOne$
 & \UseMacro{table-json-mode-Dream-v0-Base-7B-cedar-smc1-parse}
 & \UseMacro{table-json-mode-Dream-v0-Base-7B-cedar-smc1-schema}
 & \UseMacro{table-json-mode-Dream-v0-Base-7B-cedar-smc1-time-mean-Dream-v0-Base-7B-cedar-smc1-time-std} \\%

 & $\gnardSmcFour$
 & \UseMacro{table-json-mode-Dream-v0-Base-7B-cedar-smc4-parse}
 & \UseMacro{table-json-mode-Dream-v0-Base-7B-cedar-smc4-schema}
 & \UseMacro{table-json-mode-Dream-v0-Base-7B-cedar-smc4-time-mean-Dream-v0-Base-7B-cedar-smc4-time-std} \\%
\midrule%
\multirow{4}{*}{\dreamInst}
 & \OurDingo
 & \UseMacro{table-json-mode-Dream-v0-Instruct-7B-dingo-parse}
 & \UseMacro{table-json-mode-Dream-v0-Instruct-7B-dingo-schema}
 & \UseMacro{table-json-mode-Dream-v0-Instruct-7B-dingo-time-mean-Dream-v0-Instruct-7B-dingo-time-std} \\%

 & \OurDangErmon
 & \UseMacro{table-json-mode-Dream-v0-Instruct-7B-cedar-marginal-parse}
 & \UseMacro{table-json-mode-Dream-v0-Instruct-7B-cedar-marginal-schema}
 & \UseMacro{table-json-mode-Dream-v0-Instruct-7B-cedar-marginal-time-mean-Dream-v0-Instruct-7B-cedar-marginal-time-std} \\%

 & $\gnardSmcOne$
 & \UseMacro{table-json-mode-Dream-v0-Instruct-7B-cedar-smc1-parse}
 & \UseMacro{table-json-mode-Dream-v0-Instruct-7B-cedar-smc1-schema}
 & \UseMacro{table-json-mode-Dream-v0-Instruct-7B-cedar-smc1-time-mean-Dream-v0-Instruct-7B-cedar-smc1-time-std} \\%

 & $\gnardSmcFour$
 & \UseMacro{table-json-mode-Dream-v0-Instruct-7B-cedar-smc4-parse}
 & \UseMacro{table-json-mode-Dream-v0-Instruct-7B-cedar-smc4-schema}
 & \UseMacro{table-json-mode-Dream-v0-Instruct-7B-cedar-smc4-time-mean-Dream-v0-Instruct-7B-cedar-smc4-time-std} \\%
\midrule%
\multirow{4}{*}{\dreamCoderText}
 & \OurDingo
 & \UseMacro{table-json-mode-Dream-Coder-v0-Base-7B-dingo-parse}
 & \UseMacro{table-json-mode-Dream-Coder-v0-Base-7B-dingo-schema}
 & \UseMacro{table-json-mode-Dream-Coder-v0-Base-7B-dingo-time-mean-Dream-Coder-v0-Base-7B-dingo-time-std} \\%

 & \OurDangErmon
 & \UseMacro{table-json-mode-Dream-Coder-v0-Base-7B-cedar-marginal-parse}
 & \UseMacro{table-json-mode-Dream-Coder-v0-Base-7B-cedar-marginal-schema}
 & \UseMacro{table-json-mode-Dream-Coder-v0-Base-7B-cedar-marginal-time-mean-Dream-Coder-v0-Base-7B-cedar-marginal-time-std} \\%

 & $\gnardSmcOne$
 & \UseMacro{table-json-mode-Dream-Coder-v0-Base-7B-cedar-smc1-parse}
 & \UseMacro{table-json-mode-Dream-Coder-v0-Base-7B-cedar-smc1-schema}
 & \UseMacro{table-json-mode-Dream-Coder-v0-Base-7B-cedar-smc1-time-mean-Dream-Coder-v0-Base-7B-cedar-smc1-time-std} \\%

 & $\gnardSmcFour$
 & \UseMacro{table-json-mode-Dream-Coder-v0-Base-7B-cedar-smc4-parse}
 & \UseMacro{table-json-mode-Dream-Coder-v0-Base-7B-cedar-smc4-schema}
 & \UseMacro{table-json-mode-Dream-Coder-v0-Base-7B-cedar-smc4-time-mean-Dream-Coder-v0-Base-7B-cedar-smc4-time-std} \\%
\midrule%
\multirow{4}{*}{\dreamCoderInst}
 & \OurDingo
 & \UseMacro{table-json-mode-Dream-Coder-v0-Instruct-7B-dingo-parse}
 & \UseMacro{table-json-mode-Dream-Coder-v0-Instruct-7B-dingo-schema}
 & \UseMacro{table-json-mode-Dream-Coder-v0-Instruct-7B-dingo-time-mean-Dream-Coder-v0-Instruct-7B-dingo-time-std} \\%

 & \OurDangErmon
 & \UseMacro{table-json-mode-Dream-Coder-v0-Instruct-7B-cedar-marginal-parse}
 & \UseMacro{table-json-mode-Dream-Coder-v0-Instruct-7B-cedar-marginal-schema}
 & \UseMacro{table-json-mode-Dream-Coder-v0-Instruct-7B-cedar-marginal-time-mean-Dream-Coder-v0-Instruct-7B-cedar-marginal-time-std} \\%

 & $\gnardSmcOne$
 & \UseMacro{table-json-mode-Dream-Coder-v0-Instruct-7B-cedar-smc1-parse}
 & \UseMacro{table-json-mode-Dream-Coder-v0-Instruct-7B-cedar-smc1-schema}
 & \UseMacro{table-json-mode-Dream-Coder-v0-Instruct-7B-cedar-smc1-time-mean-Dream-Coder-v0-Instruct-7B-cedar-smc1-time-std} \\%

 & $\gnardSmcFour$
 & \UseMacro{table-json-mode-Dream-Coder-v0-Instruct-7B-cedar-smc4-parse}
 & \UseMacro{table-json-mode-Dream-Coder-v0-Instruct-7B-cedar-smc4-schema}
 & \UseMacro{table-json-mode-Dream-Coder-v0-Instruct-7B-cedar-smc4-time-mean-Dream-Coder-v0-Instruct-7B-cedar-smc4-time-std} \\%
\midrule%
\multirow{4}{*}{\llada}
 & \OurDingo
 & \UseMacro{table-json-mode-LLaDA-Base-8B-dingo-parse}
 & \UseMacro{table-json-mode-LLaDA-Base-8B-dingo-schema}
 & \UseMacro{table-json-mode-LLaDA-Base-8B-dingo-time-mean-LLaDA-Base-8B-dingo-time-std} \\%

 & \OurDangErmon
 & \UseMacro{table-json-mode-LLaDA-Base-8B-cedar-marginal-parse}
 & \UseMacro{table-json-mode-LLaDA-Base-8B-cedar-marginal-schema}
 & \UseMacro{table-json-mode-LLaDA-Base-8B-cedar-marginal-time-mean-LLaDA-Base-8B-cedar-marginal-time-std} \\%

 & $\gnardSmcOne$
 & \UseMacro{table-json-mode-LLaDA-Base-8B-cedar-smc1-parse}
 & \UseMacro{table-json-mode-LLaDA-Base-8B-cedar-smc1-schema}
 & \UseMacro{table-json-mode-LLaDA-Base-8B-cedar-smc1-time-mean-LLaDA-Base-8B-cedar-smc1-time-std} \\%

 & $\gnardSmcFour$
 & \UseMacro{table-json-mode-LLaDA-Base-8B-cedar-smc4-parse}
 & \UseMacro{table-json-mode-LLaDA-Base-8B-cedar-smc4-schema}
 & \UseMacro{table-json-mode-LLaDA-Base-8B-cedar-smc4-time-mean-LLaDA-Base-8B-cedar-smc4-time-std} \\%
\midrule%
\multirow{4}{*}{\lladaInst}
 & \OurDingo
 & \UseMacro{table-json-mode-LLaDA-Instruct-8B-dingo-parse}
 & \UseMacro{table-json-mode-LLaDA-Instruct-8B-dingo-schema}
 & \UseMacro{table-json-mode-LLaDA-Instruct-8B-dingo-time-mean-LLaDA-Instruct-8B-dingo-time-std} \\%

 & \OurDangErmon
 & \UseMacro{table-json-mode-LLaDA-Instruct-8B-cedar-marginal-parse}
 & \UseMacro{table-json-mode-LLaDA-Instruct-8B-cedar-marginal-schema}
 & \UseMacro{table-json-mode-LLaDA-Instruct-8B-cedar-marginal-time-mean-LLaDA-Instruct-8B-cedar-marginal-time-std} \\%

 & $\gnardSmcOne$
 & \UseMacro{table-json-mode-LLaDA-Instruct-8B-cedar-smc1-parse}
 & \UseMacro{table-json-mode-LLaDA-Instruct-8B-cedar-smc1-schema}
 & \UseMacro{table-json-mode-LLaDA-Instruct-8B-cedar-smc1-time-mean-LLaDA-Instruct-8B-cedar-smc1-time-std} \\%

 & $\gnardSmcFour$
 & \UseMacro{table-json-mode-LLaDA-Instruct-8B-cedar-smc4-parse}
 & \UseMacro{table-json-mode-LLaDA-Instruct-8B-cedar-smc4-schema}
 & \UseMacro{table-json-mode-LLaDA-Instruct-8B-cedar-smc4-time-mean-LLaDA-Instruct-8B-cedar-smc4-time-std} \\%
\bottomrule%
\end{tabular}}%
\end{\UseMacro{table-json-mode-table}}%

Table~\ref{tab:json-mode-dataset} shows the results of our evaluation of the different decoding methods on the \jsonModeEval dataset.
We report the parse validity, schema validity, and average time in seconds of the generated outputs.
Parse validity ensures that the generated output is valid JSON, while schema validity ensures that the generated output conforms to the expected schema set by the specific dataset sample.
Importantly, all decoding methods, including \tool, consistently produce outputs that are valid JSON and conform to the expected schema.
Although we report average generation times, our implementation is not optimized for performance.
There are techniques to speed up the decoding time such as those described in~\citet{dang2026constrained}.
This experiment acts as a confirmation to show that the trajectory-level correction we achieve with \tool does not compromise the constraint satisfaction already guaranteed by \ffbsKernel decoding.

\end{document}